\def\setaf{\textit{SETAF}}
\def\nlp{\textit{NLP}}
\def\rfalp{\textit{RFALP}}
\def\utpm{\textit{UTPM}}
\def\nadf{\mathit{ADF}^+}
\newcommand{\comments}[1]{}
\newcommand{\set}[1]{\left\{#1\right\}}
\newcommand{\pair}[1]{\left< #1 \right>}
\newcommand{\aaf}{\mathit{AAF}}
\newcommand{\ar}{\mathcal{A}}
\newcommand{\af}{\mathfrak{A}}
\newcommand{\att}{\mathit{Att}}
\newcommand{\inn}{\mathtt{in}}
\newcommand{\out}{\mathtt{out}}
\newcommand{\undec}{\mathtt{undec}}
\newcommand{\nlab}{\mathcal{L}}
\newcommand{\lab}{\mathcal{L}\mathit{ab}}
\newcommand{\HB}{\mathit{HB}}
\newcommand{\naf}{\mathtt{not\ }}
\newcommand{\head}{\mathit{Head}}
\newcommand{\hd}{\mathit{head}}
\newcommand{\body}{\mathit{body}}
\newcommand{\Conc}{\mathtt{Conc}}
\newcommand{\Vul}{\mathtt{Vul}}
\newcommand{\Rules}{\mathtt{Rules}}
\newcommand{\Sub}{\mathtt{Sub}}
\newcommand{\inter}{\mathcal{I}}
\newcommand{\model}{\mathcal{I}\mathit{nt}}
\newcommand{\LabMod}{\mathcal{L}2\mathcal{I}_P}
\newcommand{\ModLab}{\mathcal{I}2\mathcal{L}_P}
\newcommand{\lm}{\mathcal{L}2\mathcal{I}_\af}
\newcommand{\ml}{\mathcal{I}2\mathcal{L}_\af}
 \newtheorem{theo}{Theorem}
 \newtheorem{lem}[theo]{Lemma}
 \newtheorem{cor}[theo]{Corollary}
 \newtheorem{mdef}{Definition} 
 \newtheorem{rmk}{Remark}
 \newtheorem{example}{Example}
\tikzset{
    @pos/.style={@pos1={#1},@pos2={#1}},
    @ratio/.style={@ratio1={#1},@ratio2={#1}},
    @delta/.style={@delta1={#1},@delta2={#1}},
    @edge/.style={@@edge/.append style={#1}},
    @edge 0/.style={@@edge 0/.append style={#1}},
    @edge 1/.style={@@edge 1/.append style={#1}},
    @edge 2/.style={@@edge 2/.append style={#1}},
    @edge 3/.style={@@edge 3/.append style={#1}},
    @edge 4/.style={@@edge 4/.append style={#1}},
    @pos1/.store in=\qrr@posA,
    @pos2/.store in=\qrr@posB,
    @ratio1/.store in=\qrr@ratioA,
    @ratio2/.store in=\qrr@ratioB,
    @delta1/.store in=\qrr@deltaA,
    @delta2/.store in=\qrr@deltaB,
    @pos=.5,
    @ratio=.5,
    @delta=.1,
}
\newcommand*{\connectThreeLoop}[4][]{
    \begingroup
    \tikzset{#1}
    \coordinate (@aux1) at ($(#2)!\qrr@ratioA!(#3)$);
    \coordinate (@aux2) at ($(#4)!\qrr@posA!(@aux1)$);
    \path (@aux2) edge[@@edge/.try, @@edge 0/.try, @@edge 3/.try] (#4);
    \draw[@@edge/.try, @@edge 1/.try] (@aux2) .. controls ($(#4)!\qrr@posA+\qrr@deltaA!(@aux1)$) .. (#2);
    \draw[@@edge/.try, @@edge 2/.try] (@aux2) .. controls ($(#4)!\qrr@posA+\qrr@deltaA!(@aux1)$) .. (#3);
    \endgroup
}
\newcommand*{\connectThree}[4][]{
    \begingroup
    \tikzset{#1}
    \coordinate (@aux1) at ($(#2)!\qrr@ratioA!(#3)$);
    \coordinate (@aux2) at ($(#4)!\qrr@posA!(@aux1)$);
    \path (@aux2) edge[@@edge/.try, @@edge 0/.try, @@edge 3/.try] (#4);
    \draw[@@edge/.try, @@edge 1/.try] (@aux2) .. controls ($(#4)!\qrr@posA+\qrr@deltaA!(@aux1)$) .. (#2);
    \draw[@@edge/.try, @@edge 2/.try] (@aux2) .. controls ($(#4)!\qrr@posA+\qrr@deltaA!(@aux1)$) .. (#3);
    \endgroup
}
\newcommand*{\connectFour}[5][]{
    \begingroup
    \tikzset{#1}
    \coordinate (@aux1a) at ($(#2)!\qrr@ratioA!(#3)$);
    \coordinate (@aux1b) at ($(#4)!\qrr@ratioB!(#5)$);
    \coordinate (@aux2a) at ($(@aux1b)!\qrr@posA!(@aux1a)$);
    \coordinate (@aux2b) at ($(@aux1a)!\qrr@posB!(@aux1b)$);
    \path (@aux2a) edge[@@edge/.try,@@edge 0/.try] (@aux2b);
    \draw[@@edge/.try,@@edge 1/.try] (@aux2a) .. controls ($(@aux1b)!\qrr@posA+\qrr@deltaA!(@aux1a)$) .. (#2);
    \draw[@@edge/.try,@@edge 2/.try] (@aux2a) .. controls ($(@aux1b)!\qrr@posA+\qrr@deltaA!(@aux1a)$) .. (#3);
    \draw[@@edge/.try,@@edge 3/.try] (@aux2b) .. controls ($(@aux1a)!\qrr@posB+\qrr@deltaB!(@aux1b)$) .. (#4);
    \draw[@@edge/.try,@@edge 4/.try] (@aux2b) .. controls ($(@aux1a)!\qrr@posB+\qrr@deltaB!(@aux1b)$) .. (#5);
    \draw[help lines] (@aux1a) -- (@aux1b) node[midway,above,sloped,font=\tiny,shape=rectangle,inner xsep=+0pt,draw=none,align=center,fill=white,fill opacity=.75,outer ysep=\pgflinewidth,text opacity=1] {ratio: \qrr@ratioA/\qrr@ratioB\\pos: \qrr@posA/\qrr@posB\\delta: \qrr@deltaA/\qrr@deltaB};
    \endgroup
}
\begin{document}

\lefttitle{J. Alcântara,  R. Cordeiro and S. Sá}

\jnlPage{1}{8}
\jnlDoiYr{2021}
\doival{10.1017/xxxxx}

\title[On the Equivalence between Logic Programming and $\setaf$]{On the Equivalence between Logic Programming and $\setaf$}

\begin{authgrp}
\author{\sn{João} \gn{Alcântara}}
\affiliation{Federal University of Ceará, Brazil\\\email{jnando@dc.ufc.br}}
\author{\sn{Renan} \gn{Cordeiro}}
\affiliation{Federal University of Ceará, Brazil\\\email{renandcsc@alu.ufc.br}}
\author{\sn{Samy} \gn{Sá}}
\affiliation{Federal University of Ceará, Brazil\\\email{samy@ufc.br}}
\end{authgrp}


\maketitle

\begin{abstract}
A framework with sets of attacking arguments ($\setaf$) is an extension of the well-known Dung's Abstract Argumentation Frameworks ($\aaf$s) that allows joint attacks on arguments. In this paper, we provide a translation from Normal Logic Programs ($\nlp$s) to $\setaf$s and vice versa, from $\setaf$s to $\nlp$s. We show that there is pairwise equivalence between their semantics, including the equivalence between $L$-stable and semi-stable semantics. Furthermore, for a class of $\nlp$s called Redundancy-Free Atomic Logic Programs ($\rfalp$s), there is also a structural equivalence as these back-and-forth translations are each other's inverse. Then, we show that $\rfalp$s are as expressive as $\nlp$s by transforming any $\nlp$ into an equivalent $\rfalp$ through a series of program transformations already known in the literature. We also show that these program transformations are confluent, meaning that every $\nlp$ will be transformed into a unique $\rfalp$. The results presented in this paper enhance our understanding that $\nlp$s and $\setaf$s are essentially the same formalism. Under consideration in Theory and Practice of Logic Programming (TPLP).
\end{abstract}

\begin{keywords}
Abstract Argumentation, $\setaf$, Logic Programming Semantics, Program Transformations
\end{keywords}

\section{Introduction}\label{s:introduction}

Argumentation and logic programming are two of the most successful paradigms in artificial intelligence and knowledge representation. Argumentation revolves around the idea of constructing and evaluating arguments to determine the acceptability of a claim. It models complex reasoning by considering various pieces of evidence and their interrelationships, making it a powerful tool for handling uncertainty and conflicting information. On the other hand, logic programming provides a formalism for expressing knowledge and defining computational processes through a set of logical rules. 

In this scenario, the Abstract Argumentation Frameworks ($\aaf$s) proposed by Dung in his seminal paper \citep{dung1995acceptability} have exerted a dominant influence over the development of formal argumentation. We can depict such frameworks simply as a directed graph whose nodes represent arguments and edges represent the attack relation between them. Indeed, in $\aaf$s, the content of these arguments is not considered, and the attack relation stands as the unique relation. The simplicity and elegance of $\aaf$s have made them an appealing formalism for computational applications.

In Dung's proposal, the semantics for $\aaf$s are given in terms of extensions, which are sets of arguments satisfying certain criteria of acceptability. Naturally, different criteria of acceptability will lead to different extension-based semantics, including Dung's original concepts of complete, stable, preferred and grounded semantics \citep{dung1995acceptability}, and semi-stable semantics \citep{caminada2006semi,verheij1996two}. A richer characterisation based on labellings was proposed by Caminada and Gabbay \citep{caminada2009logical} to describe these semantics. Differently from extensions, which explicitly regard solely the accepted arguments, the labelling-based approach permits a more fine-grained setting, where each argument is assigned a label $\inn$, $\out$, or $\undec$. Intuitively, we accept an argument labelled as $\inn$, reject one labelled as $\out$, and consider one labelled as $\undec$ as undecided, meaning it is neither accepted nor rejected.

Despite providing distinct perspectives on reasoning and decision-making, argumentation and logic programming have clear connections. Indeed, we can see in Dung's work \citep{dung1995acceptability} how to translate a Normal Logic Program ($\nlp$) into an $\aaf$. Besides, the author proved that stable models (resp. the well-founded model) of an $\nlp$ correspond to stable extensions (resp. the grounded extension) of the associated $\aaf$. These results led to several studies concerning connections between argumentation and logic programming \citep{dung95argumentation,nieves2008preferred,wu2009complete,toni2011argumentation,dvorak2013making,caminada15equivalence,caminada2022comparing}. In particular, \citep{wu2009complete} established the equivalence between complete semantics and partial stable semantics. These semantics generalise a series of other relevant semantics for each system, as extensively documented in \citep{caminada15equivalence}. However, one equivalence formerly expected to hold remained elusive: the correspondence between the semi-stable semantics \citep{caminada2006semi} in $\aaf$  and the $L$-stable semantics in $\nlp$ \citep{eiter97partial} could not be attained. They even showed in \citep{caminada15equivalence} that with their proposed translation from $\nlp$s to $\aaf$s, there cannot be a semantics for $\aaf$s equivalent to $L$-stable semantics.

In \citep{caminada2017equivalence}, the authors showed how to translate Assumption-Based Argumentation (\textit{ABA}) \citep{bondarenko1997abstract,dung2009assumption,toni2014tutorial} to $\nlp$s and how this translation can be reapplied for a reverse translation from $\nlp$s to \textit{ABA}. 
Curiously, the problematic direction here is from \textit{ABA} to $\nlp$. 
In \citep{caminada2017equivalence}, they have shown that with their translation, there cannot be a semantics for $\nlp$s equivalent to the semi-stable semantics \citep{caminada2015difference,schulz2015logic} for \textit{ABA}.

Since then, a great effort has been made to identify paradigms where semi-stable and $L$-stable semantics are equivalent. In \citep{alcantara2019equivalence}, the strategy was to look for more expressive argumentation frameworks than $\aaf$s: Attacking Dialectical Frameworks, a support-free fragment of Abstract Dialectical Frameworks \citep{brewka2010abstract,brewka2013abstract}, a generalisation of $\aaf$s designed to express arbitrary relationships among arguments. A translation from $\nlp$ to $\nadf$ was proved in \citep{alcantara2019equivalence} to account for various equivalences between their semantics, including the definition of a semantics for $\nadf$ corresponding to the $L$-stable semantics for $\nlp$s.

In a similar vein, other relevant proposals explored the equivalence between $L$-stable and semi-stable semantics for Claim-augmented Argumentation Frameworks ($\mathit{CAF}$s) \citep{dvovrak2023claim,rapberger2020defining,rocha2022credal}, which are a generalisation of $\aaf$s where each argument is explicitly associated with a claim, and for Bipolar Argumentation Frameworks ($\mathit{BAF}$s) with conclusions \citep{rocha2022bipolar}, a generalisation of $\mathit{CAF}$s with the inclusion of an explicit notion of support between arguments. In both frameworks, the equivalence with $\nlp$s does not just involve their semantics; it is also structural as there is a one-to-one mapping from them to $\nlp$s. 

In \citep{sa2021abstract}, instead of looking for more expressive argumentation frameworks, the idea was to introduce more fine-grained semantics to deal with $\aaf$s. Then a five-valued setting was employed rather than the usual three-valued one. As in the previous cases, this approach also captures the correspondence between the semantics for $\aaf$s and $\nlp$s. Specifically, it captures the correspondence involving $L$-stable semantics.

The connections between $\mathit{ABA}$ and logic programming were later revisited in \citep{sa2019interpretations,sa2021assumption}, where they proposed a new translation from $\mathit{ABA}$ frameworks to $\nlp$s. The correspondence between their semantics (including $L$-stable) is obtained by selecting specific atoms in the characterisation of the $\nlp$ semantics.

In summary, in the connections between $\nlp$ and argumentation semantics, the Achilles' heel is the relation between $L$-stable and semi-stable semantics.

In this paper, we focus on the relationship between logic programming and $\setaf$ \citep{nielsen2006generalization}, an extension of Dung’s $\aaf$s to allow joint attacks on arguments. Following the strategy adopted in \citep{caminada15equivalence,alcantara2019equivalence}, we resort to the characterisation of the $\setaf$ semantics in terms of labellings \citep{flouris2019comprehensive}. As a starting point, we provide a mapping from $\nlp$s to $\setaf$s (and vice versa) and show that $\nlp$s and $\setaf$s are pairwise equivalent under various semantics, including the equivalence between $L$-stable and semi-stable. These results were inspired directly by two of our previous works: the equivalence between $\nlp$s and $\nadf$s \citep{alcantara2019equivalence}, and the equivalence between $\nadf$s and $\setaf$s \citep{alcantara2021equivalence}.

Furthermore, we investigate a class of $\nlp$s called Redundancy-Free Atomic Logic Programs ($\rfalp$s) \citep{konig22just}. In $\rfalp$s, the translations from $\nlp$s to $\setaf$s and vice versa preserve the structure of each other's theories. In essence, these translations become inverses of each other. Consequently, the equivalence results concerning $\nlp$s and $\setaf$s have deeper implications than the correspondence results between $\nlp$s and $\aaf$s: they encompass equivalence in both semantics as well as structure.

Some of these results are not new as recently they have already been obtained independently by König et al. \citep{konig22just}. In fact, their translation from $\nlp$s to $\setaf$s and vice versa coincide with ours, and the structural equivalence between $\rfalp$s and $\setaf$s has also been identified there. However, their focus differs from ours. While their work establishes the equivalence between stable models and stable extensions, it does not explore equivalences involving labelling-based semantics or address the controversy relating semi-stable semantics and $L$-stable semantics, which is a key motivation for this work. In comparison with König et al.'s work, the novelty of our proposal lies essentially in the aspects below:

\begin{itemize}
\item Our proofs of these results follow a significantly distinct path as they are based on properties of argument labellings and are deeply rooted in works such as \citep{caminada15equivalence,alcantara2019equivalence,alcantara2021equivalence}.
\item We prove the equivalence between partial stable, well-founded,
regular, stable, and semi-stable model semantics for $\nlp$s and respectively complete, grounded, preferred, stable, and semi-stable labellings for $\setaf$s. In particular, for the first time, an equivalence between $L$-stable model semantics for $\nlp$s and semi-stable labellings for $\setaf$s is established.  
\item We provide a more in-depth analysis of the relationship between $\nlp$s and $\setaf$s. Going beyond just proving semantic equivalence,  we define functions that map labellings to interpretations, and interpretations to labellings. These functions allow us to see interpretations and labellings as equivalent entities, further strengthening the connections between $\nlp$s and $\setaf$s. In substance, we demonstrate that the equivalence also holds at the level of interpretations/labellings.
\end{itemize}

The strong connection we establish between interpretations and labellings opens doors for future exploration. This extends the applicability of our equivalence results to novel semantics beyond those investigated here, potentially even encompassing multivalued settings. This holds particular significance for the logic programming community. Well-established concepts from argumentation, such as argument strength \citep{beirlaen2018argument}, can now be translated and investigated within the context of logic programming. This underscores the value of our decision to employ labellings instead of extensions as a more suitable approach to bridge the gap between $\nlp$s and $\setaf$s. 

Our research offers another key contribution, particularly relevant to the logic programming community: it explores the expressiveness of $\rfalp$s. We demonstrate that a specific combination (denoted by $\mapsto_\utpm$) of program transformations can transform any $\nlp$ into an $\rfalp$ with exactly the same semantics. In simpler terms, $\rfalp$s possess the same level of expressiveness as $\nlp$s. Although each program transformation in $\mapsto_\utpm$ was proposed by Brass and Dix \citep{brass1994disjunctive,brass1997characterizations,brass1999semantics}, the combination of these program transformations (to our knowledge) has not been investigated yet. Then we establish several properties of $\mapsto_\utpm$. Amongst other original contributions of our work related to $\mapsto_\utpm$, we highlight the following results: 

\begin{itemize}
\item Given an $\nlp$, if repeatedly applying $\mapsto_\utpm$ leads to a program where no further transformations are applicable (irreducible program), then the resulting program is guaranteed to be an $\rfalp$.

\item We show that $\mapsto_\utpm$ is confluent, i.e., given an $\nlp$, it does not matter the order by which we apply repeatedly these program transformations, whenever we arrive at an irreducible program, they will always result in the same $\rfalp$ (and in the same corresponding $\setaf$). Hence, besides $\nlp$s and $\rfalp$s being equally expressive, each $\nlp$ is associated with a unique $\rfalp$.
    
\item The $\setaf$ corresponding to an $\nlp$ is invariant with respect to $\mapsto_\utpm$, i.e., if $P_2$ is obtained from $P_1$ via $\mapsto_\utpm$ (denoted by $P_1 \mapsto_\utpm P_2$), both $P_1$ and $P_2$ will be translated into the same $\setaf$.

\item We show that $\mapsto_\utpm$ preserves the semantics for $\nlp$s studied in this paper: if $P_1 \mapsto_\utpm P_2$, then $P_1$ and $P_2$ have the same partial stable models, well-founded models, regular models, stable models, and $L$-stable models.
\end{itemize}

The structure of the paper unfolds as follows: in Section \ref{s:preliminaries}, we establish the fundamental definitions related to $\setaf$s and $\nlp$s. In Section \ref{s:nlp-setaf}, we adapt the procedure from \citep{caminada15equivalence,alcantara2019equivalence} to translate $\nlp$s into $\setaf$s, and subsequently, in the following section, we perform the reverse translation from $\setaf$s to $\nlp$s. In both directions, we demonstrate that our labelling-based approach effectively preserves semantic correspondences, including the challenging case involving the equivalence between semi-stable semantics (on the $\setaf$s side) and $L$-stable semantics (on the $\nlp$s side). In Section \ref{s:slp-setaf}, we focus on $\rfalp$s and reveal that, when restricted to them, the translation processes between $\nlp$s and $\setaf$s are each other's inverse. Then, in Section \ref{s:expressiveness}, we guarantee that $\rfalp$s are as expressive as $\nlp$s. We conclude the paper with a discussion of our findings and outline potential avenues for future research endeavours.

The proofs for all novel results are presented in \ref{s:proofs}.

\section{Preliminaries}\label{s:preliminaries}

\subsection{\setaf}\label{ss:setaf}

In \citep{nielsen2006generalization}, an extension of Dung’s Abstract Argumentation Frameworks ($\aaf$s) \citep{dung1995acceptability} to allow joint attacks on arguments was proposed.  The resulting framework, called $\setaf$, is defined next:

\begin{mdef}[$\setaf$ \protect \citep{nielsen2006generalization}]\label{d:setaf} A framework with sets of attacking arguments ($\setaf$ for short) is a pair $\af = (\ar, \att)$, in which $\ar$ is a finite set of arguments and $\att \subseteq ( 2^\ar \setminus \set{\emptyset}) \times \ar$ is an attack relation such that if $(\mathcal B, a) \in \att$, there is no $\mathcal B' \subset \mathcal B$ such that $(\mathcal B', a) \in \att$, i.e., $\mathcal B$ is a minimal set (w.r.t. $\subseteq$) attacking $a$\footnote{In the original definition of $\setaf$s in \citep{nielsen2006generalization}, attacks are not necessarily subset-minimal.}. By $\att(a) = \set{\mathcal B \subseteq \ar \mid (\mathcal B, a) \in \att}$, we mean the set of attackers of $a$. 
\end{mdef}

In $\aaf$s, only individual arguments can attack arguments. In $\setaf$s, the novelty is that sets of two or more arguments can also attack arguments. This means that $\setaf$s $(\ar, \att)$ with $|\mathcal B| = 1$ for each $(\mathcal B, a) \in \att$ amount to (standard Dung) $\aaf$s. 

The semantics for $\setaf$s are generalisations of the corresponding semantics for $\aaf$s \citep{nielsen2006generalization} and can be defined equivalently in terms of extensions or labellings \citep{flouris2019comprehensive}. Our focus here will be on their labelling-based semantics.

\begin{mdef}[Labellings \protect \citep{flouris2019comprehensive}]\label{d:labelling} Let $\af = (\af, \att)$ be a $\setaf$. A labelling is a function $\nlab : \ar \to \set{\inn, \out, \undec}$. It is \emph{admissible} iff for each $a \in \ar$,

\begin{itemize}
    \item If $\nlab(a) = \inn$, then for each $\mathcal B \in \att(a)$, it holds $\nlab(b) = \out$ for some $b \in \mathcal B$.
    \item If $\nlab(a) = \out$, then there exists $\mathcal B \in \att(a)$ such that $\nlab(b) = \inn$ for each $b \in \mathcal B$.
\end{itemize}

A labelling $\nlab$ is called \emph{complete} iff it is admissible and for each $a \in \ar$,
\begin{itemize}
    \item If $\nlab(a) = \undec$, then there exists $\mathcal B \in \att(a)$ such that $\nlab(b) \neq \out$ for each $b \in \mathcal B$, and for each $\mathcal B \in \att(a)$, it holds $\nlab(b) \neq \inn$ for some $b \in \mathcal B$.
\end{itemize}

\end{mdef}

We write $\inn(\nlab)$ for $\set{a \in \ar \mid \nlab(a) = \inn}$, $\out(\nlab)$ for $\{a \in \ar \mid$ $\nlab(a) = \out \}$, and $\undec(\nlab)$ for $\set{a \in \ar \mid \nlab(a) = \undec}$. As a labelling essentially defines a partition among the arguments, we sometimes write $\nlab$ as a triple $(\inn(\nlab), \out(\nlab), \undec(\nlab))$. Intuitively, an argument labelled $\inn$ represents explicit acceptance; an argument labelled $\out$ indicates rejection; and one labelled $\undec$ is undecided, i.e., it is neither accepted nor rejected. We can now describe the $\setaf$ semantics studied in this paper:

\begin{mdef}[Semantics \citep{flouris2019comprehensive}]\label{d:setaf-semantics} Let $\af = (\ar, \att)$ be a $\setaf$. A complete labelling $\nlab$ is called

\begin{itemize}
    \item \emph{grounded} iff $\inn(\nlab)$ is minimal (w.r.t. $\subseteq$) among all complete labellings of $\af$.
    \item \emph{preferred} iff $\inn(\nlab)$ is maximal (w.r.t. $\subseteq$) among all complete labellings of $\af$.
    \item \emph{stable} iff $\undec(\nlab) = \emptyset$.
    \item \emph{semi-stable} iff $\undec(\nlab)$ is minimal (w.r.t. $\subseteq$) among all complete labellings of $\af$.
\end{itemize}

\end{mdef}

Let us consider the following example:

\begin{example}\label{ex:setaf}
Consider the $\setaf\ \af = (\ar,\att)$ below:

\begin{figure}[ht!]
\centering
\begin{tikzpicture}[>=stealth',shorten >=1pt,auto,node distance=2cm,
  thick,main node/.style={circle,fill=white!20,draw}, inner sep=3pt]

  \node[main node] (c) {$c$};
  \node[main node] (a) [xshift = 1.5cm, below of=c] {$a$};
  \node[main node] (b) [right of=a] {$b$};
  \node[main node] (e) [above of=b] {$e$};
  \node[main node] (d) [xshift = -1.5cm, below of=c] {$d$};

  \path[every node/.style={font=\sffamily\small}]
    (a) edge [->,bend right]	node {} (b)
    (b) edge [->,bend right]	node {} (a)
        edge [->]               node {} (e)
    (c) edge [->,loop right]	node {} (c)
    (d) edge [->,loop left]	node {} (d)
    (e) edge [->,loop left]	node {} (e);
    
    \connectThree[
          @edge 3=->
        ]{d}{a}{c}
      
\end{tikzpicture}
\caption{A $\setaf$\@ $\af$}
\label{f:nadf-setaf}
\end{figure}

Concerning the semantics of $\af$, we have

\begin{itemize}
    \item Complete labellings: $\nlab_1 = (\emptyset, \emptyset, \set{a,b,c,d,e})$, $\nlab_2 =$ $(\set{a}, \set{b}, \set{c,d,e})$ and $\nlab_3 =  (\set{b}, \set{a,e}, \set{c,d})$;
    \item Grounded labellings: $\nlab_1 = (\emptyset, \emptyset, \set{a,b,c,d,e})$;
    \item Preferred labellings: $\nlab_2 =$ $(\set{a}, \set{b}, \set{c,d,e})$ and $\nlab_3 =  (\set{b}, \set{a,e}, \set{c,d})$;
    \item Stable labellings: none;
    \item Semi-stable labellings: $\nlab_3 =  (\set{b}, \set{a,e}, \set{c,d})$.
\end{itemize}
\end{example}

\subsection{Logic Programs and Semantics}

Now, we take a look at propositional Normal Logic Programs. To delve into their definition and semantics, we will follow the presentation outlined in \citep{caminada15equivalence}, which draws from the foundation laid out in \citep{przymusinski90well-founded}.

\begin{mdef}[\protect\citep{caminada15equivalence}] \label{def-lp}
A \emph{rule} $r$ is an expression 
\begin{align}\label{eq:rule}
r: c \leftarrow a_1, \ldots, a_m, \naf b_1, \ldots, \naf b_n
\end{align}
where ($m, n \geq 0$); $c$, each $a_i$ ($1 \leq i \leq m$) and each $b_j$ ($1 \leq j \leq n$) are atoms, and $\mathtt{not}$ represents negation as failure. A literal is either an atom $a$ (positive literal) or a negated atom $\naf a$ (negative literal). Given a rule $r$ as above, $c$ is called the \emph{head} of $r$, which we denote as $\hd(r)$, and $\body(r) = \set{a_1, \ldots, a_m, \naf b_1, \ldots, \naf b_n}$ is called the \emph{body} of $r$. Further, we divide $\body(r)$ into two sets $\body^+(r) = \set{a_1, \ldots, a_m}$ and $\body^-(r) = \set{\naf b_1, \ldots, \naf b_n}$. A \emph{fact} is a rule where $m = n = 0$. A Normal Logic Program ($\nlp$) or simply a \emph{program} $P$ is a finite set of rules. If every $r \in P$ has $body^{-}(r) = \emptyset$, $P$ is a positive program. The \emph{Herbrand Base} of $P$ is the set $\HB_{P}$ of all atoms appearing in $P$.
\end{mdef}

A wide range of $\nlp$ semantics are based on the 3-valued interpretations of programs \citep{przymusinski90well-founded}:

\begin{mdef}[3-Valued Herbrand Interpretation \protect\citep{przymusinski90well-founded}]\label{d:interpretations} 
A 3-valued Herbrand Interpretation $\inter$ (or simply interpretation) of an $\nlp$ $P$ is a pair $\pair{T, F}$ with $T, F \subseteq \HB_P$ and $T \cap F = \emptyset$. The atoms in $T$ are \emph{true} in $\inter$, the atoms in $F$ are \emph{false} in $\inter$, and the atoms in $\HB_P \setminus (T \cup F)$ are \emph{undefined} in $\inter$. For convenience, when the $\nlp$ $P$ is clear from the context, we will refer to the set of undefined atoms in $\HB_P \setminus (T \cup F)$ simply as $\overline{T \cup F}$. 
\end{mdef}

Now we will consider the main semantics for $\nlp$s. Let $\inter = \pair{T, F}$ be a 3-valued Herbrand interpretation of an $\nlp$ $P$; the reduct of $P$ with respect to $\inter$ (written as $P/\inter$) is the $\nlp$ constructed using the following steps:

\begin{enumerate}
  \item Remove any $a \leftarrow a_1, \ldots, a_m,$ $\naf b_1, \ldots, \naf b_n \in P$ such that $b_j \in T$ for some $j$ ($1 \leq j \leq n$);
  \item Afterwards, remove any occurrence of $\naf b_j$ from $P$ such that $b_j \in F$;
  \item Then, replace any occurrence of $\naf b_j$ left by a special atom $\mathbf{u}$ ($\mathbf{u} \not\in \HB_P$).
\end{enumerate}

In the above procedure, $\mathbf{u}$ is assumed to be an atom not in $\HB_{P}$ which is undefined in all interpretations of $P$ (a constant). Note that $P/\inter$ is a positive program since all negative literals have been removed. As a consequence, $P/\inter$ has a unique least 3-valued model \citep{przymusinski90well-founded}, obtained by the $\Psi$ operator:

\begin{mdef}[$\Psi$ Operator \protect\citep{przymusinski90well-founded}]
Let $P$ be a positive program and $\mathcal J =  \pair{T, F}$ be an interpretation. Define $\Psi_P(\mathcal J) = \pair{T', F'}$, where
\begin{itemize}
  \item $c \in T'$ iff $c \in HB_P$ and there exists $c \leftarrow a_1, \ldots, a_m \in P$ such that for all $i$, $1 \leq i \leq m$, $a_i \in T$;
  \item $c \in F'$ iff $c \in HB_P$ and for every $c \leftarrow a_1, \ldots, a_m \in P$, there exists $i$, $1 \leq i \leq m$, such that $a_i \in F$.
\end{itemize}

The least 3-valued model of $P$ is given by $ \Psi^{\uparrow\ \omega}_P$ \citep{przymusinski90well-founded}, the least fixed point of $\Psi_P$ iteratively obtained as follows:
\begin{align*}
    \Psi^{\uparrow\ 0}_P = & \pair{\emptyset, \HB_P}\\ 
    \Psi^{\uparrow\ i + 1}_P = & \Psi_P(\Psi^{\uparrow\ i}_P)\\
    \Psi^{\uparrow\ \omega}_P = & \pair{\bigcup_{i < \omega} \set{T_i \mid \Psi^{\uparrow\ i}_P = \pair{T_i, F_i}}, \bigcap_{i < \omega} \set{F_i \mid \Psi^{\uparrow\ i}_P = \pair{T_i, F_i}}}
\end{align*}
where $\omega$ denotes the first infinite ordinal.
\end{mdef}

We can now describe the logic programming semantics studied in this paper:

\begin{mdef} \label{d:nlp-semantics} Let $P$ be an $\nlp$ and $\inter = \pair{T, F}$ be an interpretation; by $\Omega_P(\inter) = \Psi^{\uparrow\ \omega}_\frac{P}{\inter}$, we mean the least 3-valued model of $\frac{P}{\inter}$. We say that

\begin{itemize}
  \item $\inter$ is a partial stable model of $P$ iff 
	$\Omega_P(\inter) = \inter$ \citep{przymusinski90well-founded}.
  \item $\inter$ is a well-founded model of $P$ iff 
	$\inter$ is a partial stable model of $P$ where there is no partial stable model $\inter ' = \pair{T', F'}$ of $P$ such that $T' \subset T$, i.e., $T$ is minimal (w.r.t.\ set 
	inclusion) among all partial stable models of $P$ \citep{przymusinski90well-founded}.
  \item $\inter$ is a regular model of $P$ iff $\inter$ is a partial stable model of $P$ where there is no partial stable model $\inter ' = \pair{T', F'}$ of $P$ such that $T \subset T'$, i.e., $T$ is maximal (w.r.t.\ set inclusion) among all partial stable models of $P$ \citep{eiter97partial}.
  \item $\inter$ is a (2-valued) stable model of $P$ iff
	$\inter$ is a partial stable model of $P$ where $T \cup F = \HB_P$ \citep{przymusinski90well-founded}.
  \item $\inter$ is an $L$-stable model of $P$ iff $\inter$ is a partial stable model of $P$ where there is no partial stable model $\inter ' = \pair{T', F'}$ of $P$ such that $T \cup F \subset T' \cup F'$, i.e., $T \cup F$ is maximal (w.r.t.\ set
	inclusion) among all partial stable models of $P$ \citep{eiter97partial}.
\end{itemize}

\end{mdef}

Although some of these definitions are not standard in logic programming literature, their equivalence is proved in \citep{caminada15equivalence}. This format helps us to relate $\nlp$ and $\setaf$ semantics due to the structural similarities between Definition \ref{d:nlp-semantics} and Definitions \ref{d:labelling} and \ref{d:setaf-semantics}. We illustrate these semantics in the following example:

\begin{example}\label{ex:nlp}
Consider the following logic program $P$:
$$
\begin{array}{llcllcll}
r_1:	&	a \leftarrow \mathtt{not}\; b 					& &
r_2:	&	b \leftarrow \mathtt{not}\; a & & r_3:  & c \leftarrow \mathtt{not}\; a, \mathtt{not}\; c\\
r_4:	&	c \leftarrow \mathtt{not}\; c, \mathtt{not}\; d					& &
r_5:	& d \leftarrow \mathtt{not}\; d	 & & 
r_6:	& e \leftarrow \mathtt{not}\; b, \mathtt{not}\; e
\end{array}
$$

\noindent This program has 

\begin{itemize}
\item Partial Stable Models:
$\mathcal M_1 = \langle \emptyset, \emptyset \rangle$, $\mathcal M_2 = \langle \set{ a }, \set{ b } \rangle$ and $\mathcal M_3 = \langle \set{ b }, \set{ a, e } \rangle;$
\item Well-founded model: $\mathcal M_1 = \langle \emptyset, \emptyset \rangle$;
\item Regular models: $\mathcal M_2 = \langle \set{ a }, \set{ b } \rangle$ and $\mathcal M_3 = \langle \set{ b }, \set{ a, e } \rangle$;
\item Stable models: none; 
\item $L$-stable model: $\mathcal M_3 = \langle \set{ b }, \set{ a, e } \rangle$. 
\end{itemize}
\end{example}

\section{From $\nlp$ to $\setaf$}
	\label{s:nlp-setaf}

In this section, we revisit the three-step process of argumentation framework instantiation as employed in \citep{caminada15equivalence} for translating an $\nlp$ into an $\aaf$. This method is based on
the approach introduced by \citep{wu2009complete} and shares similarities with the procedures used in ASPIC \citep{caminada2005axiomatic,caminada2007evaluation} and logic-based argumentation \citep{gorogiannis2011instantiating}. Its first step involves taking an $\nlp$ and constructing its associated $\aaf$. Then, we apply $\aaf$ semantics in the second step, followed by an analysis of the implications of these semantics at the level of conclusions (step 3). In our case, starting with an $\nlp$ $P$, we derive the associated $\setaf$ $(\ar_P, \att_P)$. Unlike the construction described in \citep{caminada15equivalence}, rules with identical conclusions in $P$ will result in a single argument in $\ar_P$. This distinction is capital for establishing the equivalence results between $\nlp$s and $\setaf$s. Additionally, it simplifies steps 2 and 3, making them more straightforward to follow. We now detail this process.

\subsection{$\setaf$ Construction} \label{subsec-step1}

We will devise one translation from $\nlp$ to $\setaf$ that is sufficiently robust to guarantee the equivalence between various kinds of $\nlp$s models and $\setaf$s labellings. Specifically, our approach will establish the correspondence between partial stable models and complete labellings, well-founded models and grounded labellings, regular models and preferred labellings, stable models and stable labellings, $L$-stable models and semi-stable labellings. Our method is built upon a translation from $\nlp$ to $\aaf$ proposed in \citep{caminada15equivalence}, where $\nlp$ rules are directly translated into arguments. We will adapt this approach for $\setaf$ by employing the translation method outlined in \citep{caminada15equivalence} to construct statements, and then statements corresponding to rules with the same head will be grouped to form a single argument. Taking an $\nlp$ $P$, we can start to construct statements recursively as follows:

\begin{mdef}[Statements and Arguments] \label{d:argument}
Let $P$ be an $\nlp$.

\begin{itemize}
  \item If $c \leftarrow \naf b_1, \ldots, \naf b_n$
	is a rule in $P$, then it is also a statement (say $s$) with
	\begin{itemize}
	  \item	$\Conc(s) = c$,
	  \item $\Rules(s) = \{ c \leftarrow \naf b_1, \ldots, 
		\naf b_n \}$,
	  \item $\Vul(s) = \{ b_1, \ldots, b_n \}$, and
	  \item $\Sub(s) = \{ s \}$.
	\end{itemize}
  \item If $c \leftarrow a_1,\ldots,a_m, \naf b_1,\ldots,
	\naf b_n$ is a rule in $P$ and for each $a_i$ ($1 \leq i 
	\leq m$) there exists a statement $s_i$ with $\Conc(s_i) = a_i$
	and $c \leftarrow a_1,\ldots, a_m, \naf b_1,\ldots,
	\naf b_n$ is not contained in $\Rules(s_i)$,
	then $c \leftarrow (s_1),\ldots,(s_m),$ $\naf b_1,\ldots,
	\naf b_n$ is a statement (say $s$) with
	\begin{itemize}
	  \item $\Conc(s) = c$,
	  \item
		  $\Rules(s) = \Rules(s_1) \cup \ldots \cup \Rules(s_m)\ \cup
		  \{ c \leftarrow a_1,\ldots, a_m, \naf b_1,\ldots, \naf b_n \}$	  
	  \item $\Vul(s) = \Vul(s_1)$ $\cup \ldots \cup$ $\Vul(s_m)$ $\cup$ 
		$\{b_1, \ldots, b_n \}$, and
	  \item $\Sub(s) = \{ s \} \cup \Sub(s_1) \cup \ldots \cup \Sub(s_m)$.
	\end{itemize}
\end{itemize}

By $\mathfrak S_P$ we mean the set of all statements we can construct from $P$ as above. Then we define $\ar_P = \set{\Conc(s) \mid s \in \mathfrak S_P}$ as the set of all arguments we can construct from $P$. For an argument $c$ from $P$ ($c \in \ar_P$), we have that 

\begin{itemize}
	  \item	$\Conc(c) = c$,
	  \item $\Vul_P(c) = \set{\Vul(s) \mid s \in \mathfrak S_P \textit{ and } \Conc(s) = c}$, and
	\end{itemize}
\end{mdef}

If $c$ is an argument, then $\Conc(c)$ is referred to as the \emph{conclusion} of $c$, and $\Vul_P(c)$ is referred to as the \emph{vulnerabilities} of $c$ in $P$. When the context is clear, we will write simply $\Vul(c)$ instead of $\Vul_P(c)$.

Now we will clarify the connection between the existence of statements and the existence of a derivation in a reduct. 

\begin{restatable}{lem}{statementreduct}\label{l:statement-reduct} Let $P$ be an $\nlp$, $\inter = \pair{T, F}$ an interpretation and $\Omega_P(\inter) = \pair{T', F'}$ the least 3-valued model of $\frac{P}{\inter}$. It holds

\begin{enumerate}[label=(\roman*)]
     \item\label{i:stat-redI} $c \in T'$ iff there exists a statement $s$ constructed from $P$ such that $\Conc(s) = c$ and $\Vul(s) \subseteq F$.
    \item\label{i:stat-redII} $c \in F'$ iff for every statement $s$ constructed from $P$ such that $\Conc(s) = c$, we have $\Vul(s) \cap T \neq \emptyset$  
\end{enumerate} 
\end{restatable}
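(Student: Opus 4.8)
The plan is to prove both biconditionals by induction on the iteration stages of the $\Psi$ operator applied to the reduct $\frac{P}{\inter}$, exploiting the recursive structure of statements in Definition \ref{d:argument}. The key observation is that $\Psi^{\uparrow\ \omega}_{\frac{P}{\inter}}$ is built up in stages, and statements are likewise built up recursively from sub-statements; so there should be a tight correspondence between ``$c$ enters $T'$ at stage $i$'' and ``there is a statement for $c$ whose sub-statement tree has bounded depth and whose vulnerabilities are all in $F$''. Before starting, I would record the crucial bridge between rules of $\frac{P}{\inter}$ and rules of $P$: a rule $c \leftarrow a_1,\ldots,a_m,\naf b_1,\ldots,\naf b_n$ of $P$ survives in $\frac{P}{\inter}$ (in the form $c \leftarrow a_1,\ldots,a_m$ possibly with some added $\mathbf u$'s) iff no $b_j \in T$, i.e.\ iff $\{b_1,\ldots,b_n\} \cap T = \emptyset$; and when it survives, $c$ is derived as true by $\Psi$ from this rule iff all $a_i \in T$ and no surviving body atom is $\mathbf u$, which happens iff additionally $\{b_1,\ldots,b_n\}\cap F = \{b_1,\ldots,b_n\}$, equivalently $\{b_1,\ldots,b_n\} \subseteq F$ (combining with the previous condition: $\{b_1,\dots,b_n\}\cap T=\emptyset$ and the leftover negatives are turned to $\mathbf u$, so to get $c$ true we need no leftover, i.e.\ all $b_j \in F$). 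This is what ties ``$\Vul$'' to ``$F$''.

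For part \ref{i:stat-redI}, the forward direction proceeds by induction on the least stage $i$ with $c \in T_i$ where $\Psi^{\uparrow\ i}_{\frac{P}{\inter}} = \pair{T_i,F_i}$. If $c$ enters at stage $i+1$, there is a rule of $\frac{P}{\inter}$ of the form $c \leftarrow a_1,\ldots,a_m$ (no $\mathbf u$) with all $a_k \in T_i$; this rule comes from a rule $c \leftarrow a_1,\ldots,a_m,\naf b_1,\ldots,\naf b_n$ of $P$ with $\{b_1,\ldots,b_n\}\subseteq F$. By induction hypothesis each $a_k$ has a statement $s_k$ with $\Vul(s_k)\subseteq F$, and I then need to assemble these into a statement $s$ for $c$ using the second clause of Definition \ref{d:argument}, with $\Vul(s) = \bigcup_k \Vul(s_k) \cup \{b_1,\ldots,b_n\} \subseteq F$. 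The converse direction of \ref{i:stat-redI} goes by induction on the structure of the statement $s$ (e.g.\ on $|\Sub(s)|$ or on a suitable height): given $s$ with $\Conc(s)=c$ and $\Vul(s)\subseteq F$, its top rule $c \leftarrow a_1,\ldots,a_m,\naf b_1,\ldots,\naf b_n$ has $\{b_1,\ldots,b_n\}\subseteq \Vul(s) \subseteq F$ so it survives in $\frac{P}{\inter}$ as $c \leftarrow a_1,\ldots,a_m$ with no $\mathbf u$, and each $s_k$ (sub-statement for $a_k$) has $\Vul(s_k)\subseteq\Vul(s)\subseteq F$, so by induction each $a_k \in T'$; since $T'$ is a fixed point of $\Psi_{\frac{P}{\inter}}$, we get $c \in T'$.

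For part \ref{i:stat-redII}, I would derive it largely as the ``dual'' of \ref{i:stat-redI}. Recall $c \in F'$ iff for every rule $c \leftarrow a_1,\ldots \in \frac{P}{\inter}$ some body atom is in $F'$ (and note that if there is no rule at all with head $c$ in $\frac{P}{\inter}$, then also $c\in F'$, provided $c \in \HB_P$; I should be slightly careful about atoms not in any rule head). The contrapositive of \ref{i:stat-redII} is: $c \notin F'$ iff there exists a statement $s$ with $\Conc(s)=c$ and $\Vul(s)\cap T = \emptyset$. I would prove this by a stage/structural induction parallel to the one above, using the bridge fact that a rule of $P$ with head $c$ gives rise to a surviving rule of $\frac{P}{\inter}$ with head $c$ all of whose body atoms avoid $F'$ exactly when its positive body atoms avoid $F'$ and its negative-part atoms $b_j$ avoid $T$ (those in $F$ are deleted, those undefined become $\mathbf u \notin F'$, and crucially we must have no $b_j \in T$ or the rule is deleted entirely). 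Unwinding recursively yields a statement $s$ for $c$ with $\Vul(s) \cap T = \emptyset$, and conversely.

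The main obstacle I anticipate is bookkeeping around the constant $\mathbf u$ and the subtle three-way interaction in the reduct: a negative literal $\naf b_j$ contributes to $\Vul$ unconditionally, but in the reduct it is handled in three different ways depending on whether $b_j \in T$, $b_j \in F$, or $b_j$ undefined, and only the first case kills the rule while the second and third differ precisely in whether $c$ can be made \emph{true} (needs $\mathbf u$-free body) versus merely \emph{non-false}. Getting the quantifier structure exactly right in \ref{i:stat-redII} (``for every statement \ldots'' versus ``there exists a rule \ldots'') and handling the degenerate cases (atoms appearing in no rule head, statements whose construction terminates because the non-circularity side condition in Definition \ref{d:argument} blocks reuse of a rule) is where care is needed; the well-foundedness of the statement construction (no rule is reused along a $\Sub$-chain, so $\Sub(s)$ is finite and statements have finite height) is what makes the structural induction legitimate, and I would state that explicitly as a preliminary remark.
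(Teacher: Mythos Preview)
Your proposal is correct and follows essentially the same approach as the paper's proof: stage induction on the $\Psi$-iteration for the directions that produce a statement from membership in $T'$ (resp.\ non-membership in $F'$), and structural induction on the statement for the converse directions. Your explicit attention to the non-circularity side condition in Definition~\ref{d:argument} and to the three-way case split on $\naf b_j$ in the reduct is more careful than the paper's own write-up, which glosses over how one guarantees that the assembled statement in the inductive step does not reuse the top rule; your remark that finiteness of $\Sub(s)$ legitimises the structural induction is a useful addition.
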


We can prove both results in Lemma \ref{l:statement-reduct} by induction. Assuming that $\Psi^{\uparrow\ i}_\frac{P}{\inter} = \pair{T_i, F_i}$ for each $i \in \mathbb{N}$, we can prove the right-hand side of item \ref{i:stat-redI} and the left-hand side of item \ref{i:stat-redII} by induction on the value of $i$ after guaranteeing the following results:

\begin{itemize}
\item If $c \in T_i$, then there exists a statement $s$ constructed from $P$ such that $\Conc(s) = c$ and $\Vul(s) \subseteq F$.
\item If $c \not\in F_i$, then there exists a statement $s$ constructed from $P$ such that $\Conc(s) = c$ and $\Vul(s) \cap T = \emptyset$.  
\end{itemize}

The remaining cases of Lemma \ref{l:statement-reduct} can be proved by structural induction on the construction of a statement $s$ (see a detailed account of the proof of Lemma \ref{l:statement-reduct} in Section \ref{ss:nlp-setaf-proofs} of \ref{s:proofs}). 

Lemma \ref{l:statement-reduct} ensures that statements are closely related to derivations in a reduct. An atom $c$ is true in the least 3-valued model of $\frac{P}{\inter}$ iff we can construct a statement with conclusion $c$ and whose vulnerabilities are false according to $\inter$; otherwise, $c$ is false in the least 3-valued model of $\frac{P}{\inter}$ iff for every statement whose conclusion is $c$, at least one of its vulnerabilities is true in $\inter$. The next result is a direct consequence of Lemma \ref{l:statement-reduct}:

\begin{cor}
\label{c:statement-reduct}  Let $P$ be an $\nlp$.

\begin{itemize}
\item Assume $\inter = \pair{\emptyset, \HB_P}$ and $\Omega_P(\inter) = \pair{T', F'}$. It holds that $c \in T'$ iff there exists a statement $s$ constructed from $P$ such that $\Conc(s) = c$.
\item There is no statement $s$ constructed from $P$ such that $\Conc(s) = c$ iff $c \in F'$ for every interpretation $\inter$ with $\Omega_P(\inter) = \pair{T', F'}$. 
\end{itemize}
\end{cor}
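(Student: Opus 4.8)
The plan is to derive both items of Corollary~\ref{c:statement-reduct} directly from Lemma~\ref{l:statement-reduct}, by reasoning about the bottom interpretation $\inter_\bot = \pair{\emptyset, \HB_P}$ and exploiting two elementary observations: first, that $\Vul(s) \subseteq \HB_P$ holds for every statement $s$ constructed from $P$ (its vulnerabilities are atoms occurring in $P$, as is immediate from Definition~\ref{d:argument}), and second, that a universally quantified claim ranging over an empty family of statements is vacuously true.

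For the first item, I would instantiate Lemma~\ref{l:statement-reduct}\ref{i:stat-redI} with $\inter = \inter_\bot$, so that $F = \HB_P$. Since $\Vul(s) \subseteq \HB_P = F$ for every statement $s$, the side condition ``$\Vul(s) \subseteq F$'' appearing in Lemma~\ref{l:statement-reduct}\ref{i:stat-redI} is automatically satisfied, and the characterisation collapses to: $c \in T'$ iff there exists a statement $s$ constructed from $P$ with $\Conc(s) = c$, which is precisely the claim.

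For the forward direction of the second item, suppose no statement $s$ with $\Conc(s) = c$ can be constructed from $P$, and let $\inter = \pair{T, F}$ be an arbitrary interpretation with $\Omega_P(\inter) = \pair{T', F'}$. Then the condition ``for every statement $s$ with $\Conc(s) = c$, we have $\Vul(s) \cap T \neq \emptyset$'' of Lemma~\ref{l:statement-reduct}\ref{i:stat-redII} holds vacuously, whence $c \in F'$; as $\inter$ was arbitrary, this gives the conclusion. For the converse I would argue contrapositively: if some statement $s$ with $\Conc(s) = c$ does exist, then applying the (already established) first item to $\inter_\bot$ yields $c \in T'$, where $\Omega_P(\inter_\bot) = \pair{T', F'}$; since $T' \cap F' = \emptyset$ by the definition of a $3$-valued interpretation, we get $c \notin F'$, so it is not the case that $c$ lies in the false part of every such interpretation.

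I do not expect a genuine obstacle here: the corollary is essentially a specialisation of Lemma~\ref{l:statement-reduct}. The only points that need a moment's care are recording that $\Vul(s) \subseteq \HB_P$ always holds (so that the membership test $\Vul(s) \subseteq F$ becomes trivial at $\inter_\bot$) and that the quantified condition in item~\ref{i:stat-redII} is trivially met when there are no statements with the relevant conclusion; everything else follows immediately from the disjointness of the two components of a $3$-valued interpretation.
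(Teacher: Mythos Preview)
Your proposal is correct and follows exactly the approach the paper intends: the paper does not spell out a proof but states the corollary as ``a direct consequence of Lemma~\ref{l:statement-reduct}'', and your argument is precisely the specialisation of that lemma to $\inter_\bot = \pair{\emptyset,\HB_P}$ together with the vacuous-quantifier and disjointness observations you identify.
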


The reduct of $P$ with respect to $\pair{\emptyset, \HB_P}$ gives us all the possible derivations of $P$, and from these derivations, we can construct all the statements associated with $P$. On the other hand, the atoms that are lost in the translation, i.e., the atoms not associated with statements are simply those that are false in the least 3-valued model of every possible reduct of $P$. Besides establishing connections between statements and derivations in a reduct, Lemma \ref{l:statement-reduct} also plays a central role in the proof of Theorems \ref{t:modarg} and \ref{t:psmcomp}.  

Apart from that, intuitively, we can see a statement as a tree-like structure representing a possible derivation of an atom from the rules of a program. In contrast, an argument for $c$ in $P$ is associated with the (derivable) atom $c$ itself and can be obtained by collecting all the statements with the same conclusion $c$ (i.e., all the possible ways of deriving $c$ in $P$). 

\begin{example}\label{ex:argument}
Consider the $\nlp$ $P$ below with rules $\set{r_1, \ldots, r_8}$:
$$
\begin{array}{llcllcll}
r_1:	&	a			& &
r_2:	&	b \leftarrow a	    & & 
r_3:	&	c \leftarrow \mathtt{not}\; c				\\
r_4:	&	d \leftarrow b, \mathtt{not}\; a, \mathtt{not}\; d		
       & & 
r_5:	&	d \leftarrow \mathtt{not}\; c, \mathtt{not}\; d   &  & 
r_6:	&	e \leftarrow b, c, \mathtt{not}\; e			\\
r_7:	&	c \leftarrow f, \mathtt{not}\; g		& & 
r_8:	&	f \leftarrow c, g & & \ & \ 	
\end{array}
$$

According to Definition \ref{d:argument}, we  can construct the following statements from $P$:
$$
\begin{array}{llcllcll}
s_1:	&	a 					& &
s_2:	&	b \leftarrow (s_1) & & s_3:  & c \leftarrow \mathtt{not}\; c\\
s_4:	&	d \leftarrow (s_2), \mathtt{not}\; a, \mathtt{not}\; d					& &
s_5:	& d \leftarrow \mathtt{not}\; c, \mathtt{not}\; d	 & & 
s_6:	& e \leftarrow (s_2), (s_3),\mathtt{not}\; e
\end{array}
$$

In the next table, we give the conclusions and vulnerabilities of each statement:
\begin{center}
\renewcommand{\arraystretch}{1.2}
\begin{tabular}{ c | *{6}{c}}
 & $s_1$ & $s_2$ & $s_3$ & $s_4$ & $s_5$ & $s_6$ \\
 \hline
$\Conc(.)$ & $a$ & $b$ & $c$ & $d$ & $d$ & $e$\\
$\Vul(.)$ & $\emptyset$ & $\emptyset$ & $\{ c \}$ & $\{ a, d \}$ & $\{ c, d \}$ & $\{ c, e \}$
\end{tabular}
 \renewcommand{\arraystretch}{1}
\end{center}

Alternatively, we can depict statements as possible derivations as in Fig. \ref{f:statement-derivation}:

\begin{figure}[h!]
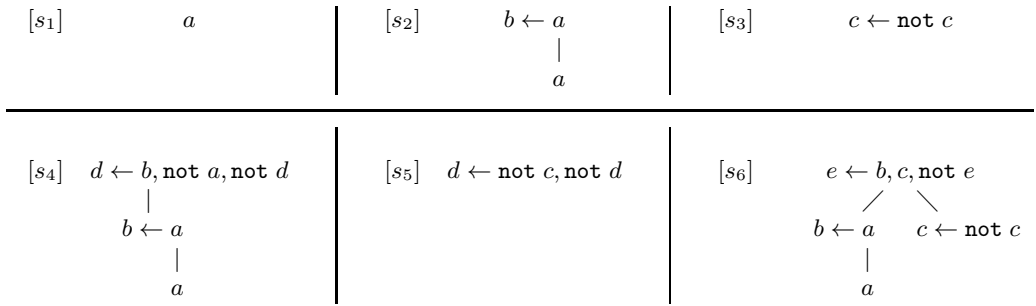

$$
\begin{array}{lcl|llcl|llc}
\ [s_1] & a 			& \ & \ & [s_2] & b \leftarrow a 			& \ & \ & [s_3] & c \leftarrow \mathtt{not}\; c\\
 \ 	  & \ 			 		& \ & \ & \ 	  & \ \ \ \ \ \ |	 	 	 	 	& \ & \ & \ 	   &  \ 	\\
 \ 	  & \ 					& \ & \ & \ 	  & \ \ \ \ \ \ a   			& \ & \ & \ 	   & \	\\
\hline
 \ 	  & \ 			 		& \ & \ & \ 	  & \ 		 	 	 	 & \ & \ & \ 	   & \ 			\\
\ [s_4]  & d \leftarrow b, \mathtt{not}\; a, \mathtt{not}\; d	 & \ & \ & [s_5] & d \leftarrow \mathtt{not}\; c, \mathtt{not}\; d	 & \ & \ & [s_6] & e \leftarrow b, c, \mathtt{not}\; e	\\
\ 	  &  | \ \ \ \ \ \ \ \ \ \ 			 		& \ & \ & \ 	  & \ 		 	 	 	 & \ & \ & \ 	   & \diagup   \ \ \ \	\diagdown	\\ 
 \ 	  & b \leftarrow a \ \ \ \ \ \ \ \ \ 			 		& \ & \ & \ 	  & \ 			 	 	 & \ & \ & \ 	   &\ \ \ \ b \leftarrow a \ \ \ \ \ c \leftarrow \mathtt{not}\; c	\\
\ 	  & | \ \ \  			 		& \ & \ & \ 	  & \ 		 	 	 	 & \ & \ & \ 	   & | \ \ \ \ \ \ \ \		\\
 \ 	  & a \ \ \   			 		& \ & \ & \ 	  & \ 			 	 	 & \ & \ & \ 	   & a \ \ \ \ \ \ \ \
\end{array}
$$
\caption{Statements constructed from $P$}
\label{f:statement-derivation}
\end{figure}

The vulnerabilities of a statement $s$ are associated with the negative literals found in the derivation of $s$. If $\mathtt{not}\; a$ is one of them, we know that $a$ is one of its vulnerabilities. This means that if $a$ is derived, then $\Conc(s)$ cannot be obtained via this derivation represented by $s$. However, it can still be obtained via other derivations/statements. For instance, in the program $P$ of Example \ref{ex:argument}, the derivation of $a$ suffices to prevent the derivation of $d$ via statement $s_4$ (for that reason, $a \in \Vul(s_4)$), but we still can derive $d$ via $s_5$. Notice also that there are no statements with conclusions $f$ and $g$. From Corollary \ref{c:statement-reduct}, we know that it is not possible to derive them in $P$ as they are false in the least 3-valued model of each reduct of $P$. In addition, to determine the vulnerabilities of an atom (and not only of a specific derivation leading to this atom), we collect these data about the statements with the same conclusions to give the conclusions and vulnerabilities of each argument. In our example, we obtain the following results:
\begin{center}
\renewcommand{\arraystretch}{1.2}
\begin{tabular}{ c | *{5}{c}}
 & $a$ & $b$ & $c$ & $d$ & $e$ \\
 \hline
$\Conc(.)$ & $ a $ & $ b$ & $c$ & $d$ & $e$ \\
$\Vul(.)$ & $\set{ \emptyset }$ & $\set{ \emptyset }$ & $\set{ c }$ &  $\set{ \set{a, d},\set{c, d} }$     & $\set{\set{c, e}}$
\end{tabular}
 \renewcommand{\arraystretch}{1}
\end{center}

\end{example}

As the vulnerabilities of an atom/argument $a$ are a collection of the vulnerabilities of the statements whose conclusion is $a$, any set containing at least one atom in each of these statements suffices to prevent the derivation of $a$ in $P$. In our example, there are two statements with the same conclusion $d$ and $\Vul(d) = \set{\set{a, d}, \set{c, d}}$. Thus any set of atoms containing $\set{d}$ or $\set{a, c}$ prevents the conclusion of $d$ in $P$. We will resort to these minimal sets to determine the attack relation:

\begin{mdef} \label{d:attack}
Let $P$ be an $\nlp$ and let $\mathcal B$ and $a$ be respectively a set of arguments and an argument in the sense of Definition \ref{d:argument}. We say that $(\mathcal B, a) \in \att_P$ iff $\mathcal B$ is a minimal set (w.r.t. set inclusion) such that for each $V \in \Vul_P(a)$, there exists $b \in \mathcal B \cap V$.
\end{mdef}

For the arguments of Example \ref{ex:argument}, it holds that both $a$ and $b$ are not attacked, $c$ attacks itself, $c$ attacks $e$, $e$ attacks itself, $d$ attacks itself, $a$ and $c$ (collectively) attack $d$. This strategy of extracting statements from $\nlp$s rules and then gathering those with identical conclusions into arguments is not novel; in \citep{alcantara2019equivalence}, the authors proposed a translation from $\nlp$s into Abstract Dialectical Frameworks \citep{brewka2013abstract,brewka2010abstract} by following a similar path. Using the thus-defined notions of arguments and attacks, we define the $\setaf$ associated with an $\nlp$.

\begin{mdef} \label{d:nlp-setaf}
Let $P$ be an $\nlp$. We define its associated $\setaf$
as $\af_P = (\ar_P, \att_P)$, where $\ar_P$ is the set of arguments in the sense of Definition \ref{d:argument} and $\att_P$ is the attack relation in the sense of Definition \ref{d:attack}.
\end{mdef}

As an example, the $\setaf$ $\af_P = (\ar_P,\att_P)$ associated with the $\nlp$ of Example \ref{ex:argument} is depicted in Fig. \ref{f:nlp-setaf}.

\begin{figure}
\centering
\begin{tikzpicture}[>=stealth',shorten >=1pt,auto,node distance=2cm,
  thick,main node/.style={circle,fill=white!20,draw}, inner sep=3pt]

  \node[main node] (d) {$d$};
  \node[main node] (c) [xshift = 1.5cm, below of=d] {$c$};
  \node[main node] (e) [right of=c] {$e$};
  \node[main node] (b) [above of=e] {$b$};
  \node[main node] (a) [xshift = -1.5cm, below of=d] {$a$};

  \path[every node/.style={font=\sffamily\small}]
    (c) edge [->] node {} (e)
        edge [->,loop above]	node {} (c)
    (d) edge [->,loop left]	node {} (d)
    (e) edge [->,loop right]	node {} (e);
    
    \connectThree[
          @edge 3=->
        ]{a}{c}{d}
      
\end{tikzpicture}
\caption{A $\setaf$\@ $\af_P = (\ar_P,\att_P)$}
\label{f:nlp-setaf}
\end{figure}

\subsection{Equivalence Results}

Once the $\setaf$ has been constructed, we show the equivalence between the semantics for an $\nlp$ $P$ and their counterpart for the associated $\setaf$ $\af_P$. One distinguishing characteristic of our approach in comparison with König et al.'s proposal \citep{konig22just} is that it is more organic. We prove the equivalence results by identifying connections between fundamental notions used in the definition of the semantics for $\nlp$s and $\setaf$s. With this purpose, we introduce two functions: $\LabMod$ associates an interpretation to each labelling while $\ModLab$ associates a labelling to each interpretation. We then investigate the conditions under which they are each other's inverse and employ these results to prove the equivalence between the semantics. These functions essentially permit us to treat interpretations and labellings interchangeably. 

\begin{mdef}[$\LabMod$ and $\ModLab$ Functions] \label{d:labmod}
Let $P$ be an $\nlp$, $\af_P = (\ar_P, \att_P)$ be its associated $\setaf$, $\model$ be the set of all the 3-valued interpretations of $P$ and $\lab$ be the set of all labellings of $\af_p$. We introduce a function $\LabMod : \lab \to \model$ such that $\LabMod(\nlab) = \left<
T , F \right>$, in which

\begin{itemize}
    \item $T = \set{c \in \HB_P \mid c \in \ar_P \textit{ and } \nlab(c) = \inn}$;
    \item $F = \{c \in \HB_P \mid c \not\in \ar_P \textit{ or } c \in \ar_P \textit{ and } \nlab(c) = \out \}$;
    \item $\overline{T \cup F} = \set{c \in \HB_P \mid c \in \ar_P \textit{ and } \nlab(c) = \undec}$.
\end{itemize}

We introduce a function $\ModLab : \model \to \lab$ such that for any $\mathcal I = \left< T, F \right> \in \model$ and any $c \in \ar_P$,

\begin{itemize}
    \item $\ModLab(\mathcal I)(c) = \inn$ if $c\in T$;
    \item $\ModLab(\mathcal I)(c) = \out$ if $c \in F$; 
    \item $\ModLab(\mathcal I)(c) = \undec$ if $c \not\in T \cup F$. 
\end{itemize}

$\ModLab(\mathcal I)(c)$ is not defined if $c \not\in \ar_P$.
\end{mdef}

The correspondence between labellings and interpretations is clear for those atoms $c \in \HB_P$ in which $c \in \ar_P$. In this case, we have that $c$ is interpreted as true iff $c$ is labelled as $\inn$; $c$ is interpreted as false iff $c$ is labelled as $\out$. In contradistinction, those atoms $c \in \HB_P$ not associated with arguments ($c \not\in \ar_P$) are simply interpreted as false. This will suffice to guarantee our results; next theorem assures us that $\ModLab(\LabMod(\mathcal L)) = \mathcal L$:

\begin{restatable}{theo}{inverse}\label{t:inverse}
Let $P$ be an $\nlp$ and $\af_P = (\ar_P, \att_P)$ be the associated $\setaf$. For any labelling $\mathcal L$ of $\af_P$, it holds $\ModLab(\LabMod(\mathcal L)) = \mathcal L$. 
\end{restatable}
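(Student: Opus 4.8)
The plan is to unfold the definitions of $\LabMod$ and $\ModLab$ and show that the composition $\ModLab \circ \LabMod$ acts as the identity on an arbitrary labelling $\mathcal{L}$ of $\af_P$. Since a labelling is a function $\ar_P \to \set{\inn, \out, \undec}$, it suffices to fix an arbitrary argument $c \in \ar_P$ and verify that $\ModLab(\LabMod(\mathcal{L}))(c) = \mathcal{L}(c)$. Write $\LabMod(\mathcal{L}) = \pair{T, F}$.

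The argument proceeds by case analysis on the value $\mathcal{L}(c)$. If $\mathcal{L}(c) = \inn$, then since $c \in \ar_P$, the definition of $T$ in Definition \ref{d:labmod} puts $c \in T$, and then the definition of $\ModLab$ gives $\ModLab(\pair{T,F})(c) = \inn$. If $\mathcal{L}(c) = \out$, then since $c \in \ar_P$, the definition of $F$ puts $c \in F$; but to conclude $\ModLab(\pair{T,F})(c) = \out$ we must also check $c \notin T$, which holds because membership in $T$ requires $\mathcal{L}(c) = \inn$, contradicting $\mathcal{L}(c) = \out$ (a labelling assigns a single value). If $\mathcal{L}(c) = \undec$, then the third clause of $\LabMod$ places $c$ in $\overline{T \cup F}$, i.e.\ $c \notin T$ and $c \notin F$; again one should double-check this directly from the definitions of $T$ and $F$ (for $T$: membership needs $\mathcal{L}(c)=\inn$; for $F$: since $c \in \ar_P$, membership needs $\mathcal{L}(c) = \out$), so neither holds, and the third clause of $\ModLab$ yields $\ModLab(\pair{T,F})(c) = \undec$. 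In all three cases the recovered value equals $\mathcal{L}(c)$, as required.

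One bookkeeping point worth making explicit: $\ModLab(\LabMod(\mathcal{L}))$ is only defined on $\ar_P$ (the last line of Definition \ref{d:labmod}), and $\mathcal{L}$ is also a function with domain $\ar_P$, so the two functions have the same domain; hence pointwise agreement on $\ar_P$ genuinely gives equality of the two labellings. I would state this at the outset to make the case analysis a complete proof.

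There is essentially no hard part here — the statement is a routine consistency check between the two translation functions, and the only thing to be careful about is not to skip the "and $c \notin T$" / "$c \notin T \cup F$" verifications in the $\out$ and $\undec$ cases, since the definition of $\ModLab$ is given by a cascade of conditions that implicitly presupposes mutual exclusivity. That mutual exclusivity is immediate from $T \cap F = \emptyset$ in any interpretation together with the fact that a labelling is single-valued, but it should be stated rather than assumed silently.
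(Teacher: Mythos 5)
Your proof is correct and follows essentially the same route as the paper's: a pointwise case analysis on $\mathcal{L}(c)$ for $c \in \ar_P$, tracing each of the three labels through $\LabMod$ and back through $\ModLab$. The extra checks you make explicit (that $c \notin T$ in the $\out$ case, that $c \notin T \cup F$ in the $\undec$ case, and that both functions have domain $\ar_P$) are left implicit in the paper but add nothing beyond what its argument already relies on.
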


In general, $\LabMod(\ModLab(\inter))$ is not equal to $\inter$, because of those atoms $c$ occurring in an $\nlp$ $P$, but not in $\ar_P$. However, when $\mathcal M$ is a partial stable model, $\LabMod(\ModLab(\mathcal M)) = \mathcal M$:

\begin{restatable}{theo}{ModArg}\label{t:modarg}
Let $P$ be an $\nlp$, $\af_P = (\ar_P, \att_P)$ be the associated $\setaf$ and $\mathcal M = \left< T, F \right>$ be a partial stable model of $P$. It holds that $\LabMod(\ModLab(\mathcal M)) = \mathcal M$.
\end{restatable}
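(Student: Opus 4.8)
The plan is to compare $\mathcal{M} = \pair{T,F}$ with $\mathcal{M}' := \LabMod(\ModLab(\mathcal{M})) = \pair{T'', F''}$ componentwise, using the definitions of the two translation functions together with the hypothesis that $\mathcal{M}$ is a partial stable model. By Definition \ref{d:labmod}, $\ModLab(\mathcal{M})$ is the labelling $\nlab$ of $\af_P$ with $\nlab(c) = \inn$ iff $c \in T$, $\nlab(c) = \out$ iff $c \in F$, and $\nlab(c) = \undec$ iff $c \notin T \cup F$, defined only for $c \in \ar_P$. Applying $\LabMod$ to $\nlab$ then gives $T'' = \set{c \in \ar_P \mid c \in T}$, $F'' = \set{c \in \HB_P \mid c \notin \ar_P} \cup \set{c \in \ar_P \mid c \in F}$, and $\overline{T'' \cup F''} = \set{c \in \ar_P \mid c \notin T \cup F}$.

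First I would establish $T'' = T$. The inclusion $T'' \subseteq T$ is immediate from the definition. For $T \subseteq T''$ it suffices to show $T \subseteq \ar_P$: if $c \in T$, then since $\mathcal{M}$ is a partial stable model we have $\Omega_P(\mathcal{M}) = \mathcal{M}$, so $c$ is true in the least 3-valued model of $P/\mathcal{M}$; by Lemma \ref{l:statement-reduct}\ref{i:stat-redI} there is a statement $s$ with $\Conc(s) = c$, hence $c \in \ar_P$ by Definition \ref{d:argument}. This gives $c \in T''$.

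Next I would handle $F'' = F$. For $F \subseteq F''$: if $c \in F$, then either $c \in \ar_P$, in which case $c \in F''$ directly, or $c \notin \ar_P$, in which case $c \in F''$ by the first clause defining $F''$; either way $c \in F''$ (here we also use $F \subseteq \HB_P$ from Definition \ref{d:interpretations}). For $F'' \subseteq F$: take $c \in F''$. If $c \in \ar_P$ and $c \in F$ we are done. The remaining case is $c \in \HB_P$ with $c \notin \ar_P$; then by Corollary \ref{c:statement-reduct} (second item), $c$ is false in the least 3-valued model of \emph{every} reduct of $P$, in particular in $\Omega_P(\mathcal{M}) = \mathcal{M}$, so $c \in F$. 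Once $T'' = T$ and $F'' = F$ are established, the undefined parts coincide automatically, and $\mathcal{M}' = \mathcal{M}$ follows. (One should also note in passing that $\mathcal{M}'$ is a well-defined interpretation, i.e.\ $T'' \cap F'' = \emptyset$, which follows from $T \cap F = \emptyset$ and the fact that $\nlab$ is a function.)

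The main obstacle is the direction $F'' \subseteq F$, specifically disposing of the atoms $c \in \HB_P \setminus \ar_P$: one must argue that such atoms are necessarily false in $\mathcal{M}$, which is exactly where Corollary \ref{c:statement-reduct} (and through it Lemma \ref{l:statement-reduct}\ref{i:stat-redII}) does the work, combined crucially with the partial-stable-model hypothesis $\Omega_P(\mathcal{M}) = \mathcal{M}$. The other inclusions are bookkeeping from the definitions; the only subtlety on the $T$ side is remembering that membership in $T$ forces membership in $\ar_P$, again via Lemma \ref{l:statement-reduct}\ref{i:stat-redI} and the fixpoint property.
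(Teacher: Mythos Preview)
Your proposal is correct and follows essentially the same approach as the paper: both argue componentwise that $T'' = T$ and $F'' = F$, invoking Lemma~\ref{l:statement-reduct} (via the fixpoint property $\Omega_P(\mathcal M) = \mathcal M$) to ensure that atoms in $T$, respectively atoms not in $F$, are in $\ar_P$. The only cosmetic difference is that for the $F$-part the paper argues by contrapositive (if $c \notin F$ then Lemma~\ref{l:statement-reduct}\ref{i:stat-redII} yields a statement with conclusion $c$, so $c \in \ar_P$ and hence $c \notin F''$), whereas you do a direct case split and appeal to Corollary~\ref{c:statement-reduct} for the case $c \notin \ar_P$; these are equivalent formulations of the same argument.
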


This means that when restricted to partial stable models and complete labellings, $\LabMod$ and $\ModLab$ are each other's inverse. From Lemma \ref{l:statement-reduct}, and Theorems \ref{t:inverse} and \ref{t:modarg}, we can obtain the following result:

\begin{restatable}{theo}{psmcomp}\label{t:psmcomp}
Let $P$ be an $\nlp$ and $\af_P = (\ar_P, \att_P)$ be the associated $\setaf$. It holds

\begin{itemize}
    \item $\nlab$ is a complete labelling of $\af_P$ iff $\LabMod(\nlab)$ is a partial stable model of $P$.
    \item $\mathcal M$ is a partial stable model of $P$ iff $\ModLab(\mathcal M)$ is a complete labelling of $\af_P$.
\end{itemize}
\end{restatable}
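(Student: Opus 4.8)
The plan is to prove the two bullet points of Theorem \ref{t:psmcomp} by reducing the fixpoint condition $\Omega_P(\inter) = \inter$ to the complete-labelling conditions of Definition \ref{d:labelling}, using Lemma \ref{l:statement-reduct} as the bridge between statements/vulnerabilities and derivations in the reduct, and using the attack relation of Definition \ref{d:attack} to translate ``$\Vul$'' conditions into ``$\att_P$'' conditions. Since Theorem \ref{t:inverse} gives $\ModLab(\LabMod(\nlab)) = \nlab$ and Theorem \ref{t:modarg} gives $\LabMod(\ModLab(\mathcal M)) = \mathcal M$ for partial stable models, the two bullets are in fact equivalent: if I prove the first bullet, then for a partial stable model $\mathcal M$ I have $\LabMod(\ModLab(\mathcal M)) = \mathcal M$ is a partial stable model, hence $\ModLab(\mathcal M)$ is a complete labelling, and conversely if $\ModLab(\mathcal M)$ is complete then $\LabMod(\ModLab(\mathcal M)) = \mathcal M$ is a partial stable model by the first bullet; so the first bullet (together with the two prior theorems) yields the second. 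Thus the real work is a single biconditional: $\nlab$ complete $\iff$ $\LabMod(\nlab)$ a partial stable model.

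For that biconditional, write $\LabMod(\nlab) = \pair{T,F}$ and $\Omega_P(\LabMod(\nlab)) = \pair{T',F'}$. The goal is $T = T'$ and $F = F'$ under the hypothesis that $\nlab$ is complete, and conversely. First I would record the translation dictionary using Lemma \ref{l:statement-reduct} and Definition \ref{d:attack}: (a) $c \in T'$ iff some statement $s$ with $\Conc(s)=c$ has $\Vul(s) \subseteq F$ iff $c \in \ar_P$ and for every attacker $\mathcal B \in \att_P(c)$ some $b \in \mathcal B$ has $b \in F$ — here the key combinatorial fact is that, by minimality in Definition \ref{d:attack}, ``$\mathcal B$ hits every $V \in \Vul_P(c)$'' being avoidable by $F$ (i.e.\ $F$ is a ``blocking set'' for $c$) is equivalent to ``there is a $V \in \Vul_P(c)$, i.e.\ a statement's vulnerability set, disjoint from the complement of $F$'', which is exactly $\Vul(s) \subseteq F$; (b) $c \in F'$ iff every statement with conclusion $c$ has a vulnerability in $T$ iff either $c \notin \ar_P$ or $c \in \ar_P$ and there is an attacker $\mathcal B \in \att_P(c)$ with $\mathcal B \subseteq T$. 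I would prove these two equivalences as a preliminary lemma-style paragraph, taking care that $c \notin \ar_P$ is handled via Corollary \ref{c:statement-reduct} (no statement for $c$ forces $c \in F'$ always, matching $F$ containing all such $c$).

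With the dictionary in hand, the forward direction splits into showing $T \subseteq T'$, $T' \subseteq T$, $F \subseteq F'$, $F' \subseteq F$. For $T \subseteq T'$: if $c \in T$ then $\nlab(c) = \inn$, so by admissibility every $\mathcal B \in \att_P(c)$ has some $b \in \mathcal B$ with $\nlab(b) = \out$, hence $b \in F$; by (a), $c \in T'$. For $F \subseteq F'$: if $c \in F$ then either $c \notin \ar_P$ (so $c \in F'$ by (b)) or $\nlab(c) = \out$, and then by admissibility some $\mathcal B \in \att_P(c)$ has $\nlab(b) = \inn$, i.e.\ $b \in T$, for all $b \in \mathcal B$, so $\mathcal B \subseteq T$, giving $c \in F'$ by (b). For $T' \subseteq T$: suppose $c \in T'$; then $c \in \ar_P$, and $\nlab(c)$ cannot be $\out$ (that would need some $\mathcal B \in \att_P(c)$ with $\mathcal B \subseteq T'$... wait, rather use: if $\nlab(c)=\out$ then by the complete/admissible clauses there is an all-$\inn$ attacker, and combining with $c\in T'$ via (a) yields a contradiction between an attacker being all-$\inn$ and that same attacker needing an $\out$ member); and $\nlab(c)$ cannot be $\undec$ (the $\undec$ clause of completeness says every attacker has a non-$\inn$ member and some attacker has no $\out$ member — the latter contradicts (a)); hence $\nlab(c) = \inn$, i.e.\ $c \in T$. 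For $F' \subseteq F$: symmetric, ruling out $\nlab(c) = \inn$ and $\nlab(c) = \undec$ for $c \in \ar_P \cap F'$ using admissibility and the $\undec$ clause against (b). The converse direction (from $\LabMod(\nlab)$ being a partial stable model to $\nlab$ complete) runs the same equivalences backwards: for each $c \in \ar_P$ one checks the three labelling-cases of Definition \ref{d:labelling} directly from $T=T'$, $F=F'$ and the dictionary (a)--(b); e.g.\ $\nlab(c)=\undec$ means $c \notin T' \cup F'$, and $c \notin F'$ gives (by (b)) that no attacker is $\subseteq T = \inn(\nlab)$... one must instead derive the precise $\undec$ clause, namely that some attacker has no $\out$ member (from $c \notin T'$ via (a), since if every attacker had an $\out$, i.e.\ $F$-, member we'd get $c \in T'$) and every attacker has a non-$\inn$ member (from $c \notin F'$ via (b)).

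I expect the main obstacle to be the combinatorial equivalence in the dictionary between the ``statement/$\Vul(s)$'' formulation of Lemma \ref{l:statement-reduct} and the ``minimal attacker $\mathcal B$'' formulation of Definition \ref{d:attack} — in particular verifying that a set $F$ fails to contain an attacker of $c$ exactly when some statement $s$ for $c$ has $\Vul(s) \subseteq F$, which amounts to a hitting-set/transversal duality: $\att_P(c)$ consists of the minimal transversals of the hypergraph $\Vul_P(c) = \{\Vul(s) : \Conc(s)=c\}$, so a set avoids all minimal transversals iff it fails to be a transversal iff it is disjoint from some hyperedge's complement... more precisely, $F$ contains no element of $\att_P(c)$ iff $\HB_P \setminus F$ is itself a transversal is false, i.e.\ iff $\HB_P\setminus F$ misses some $V\in\Vul_P(c)$ — no: the clean statement is ``$F \supseteq$ some attacker'' $\iff$ ``$F$ meets every $V \in \Vul_P(c)$'', whose negation is ``$F$ is disjoint from some $V = \Vul(s)$'', i.e.\ $\Vul(s) \subseteq \HB_P \setminus F$ — so I must be careful about which side ($F$ or its complement) plays the transversal. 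I would state this duality as an explicit small lemma (or cite that it is immediate from Definition \ref{d:attack}) and then the rest is the bookkeeping of the four inclusions and the three labelling cases, which is routine given Lemma \ref{l:statement-reduct}, Theorem \ref{t:inverse} and Theorem \ref{t:modarg}.
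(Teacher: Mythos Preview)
Your approach is correct and essentially the same as the paper's: the ``dictionary'' / hitting-set duality you single out as the main obstacle is exactly what the paper isolates as Lemma~\ref{l:vul} (for each label value $v$, ``every $\mathcal B \in \att_P(c)$ contains a $b$ with $\nlab(b)=v$'' iff ``some $V \in \Vul_P(c)$ has $\nlab(b)=v$ for all $b \in \ar_P \cap V$''), and the remaining pieces you invoke---Lemma~\ref{l:statement-reduct} for the reduct side and Theorems~\ref{t:inverse} and~\ref{t:modarg} to pass between the two bullets---are precisely the paper's ingredients. The only cosmetic difference is bookkeeping: the paper proves the forward implications of \emph{both} bullets directly (each as a chain of iffs through Lemma~\ref{l:vul} and Lemma~\ref{l:statement-reduct}) and then obtains the two converses from the inverse theorems, whereas you prove the full biconditional of bullet~1 and derive bullet~2 from it; this is a reshuffling of the same argument, not a different route.
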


Theorem \ref{t:psmcomp} is one of the main results of this paper. It plays a central role in ensuring the equivalence between the semantics for $\nlp$ and their counterpart for $\setaf$:

\begin{restatable}{theo}{equivalence}\label{t:equivalence}
Let $P$ be an $\nlp$ and $\af_P = (\ar_P, \att_P)$ be the associated $\setaf$. It holds

\begin{enumerate}
    \item $\nlab$ is a grounded labelling of $\af_P$ iff $\LabMod(\nlab)$ is a well-founded model of $P$.
    \item $\nlab$ is a preferred labelling of $\af_P$ iff $\LabMod(\nlab)$ is a regular model of $P$.
    \item $\nlab$ is a stable labelling of $\af_P$ iff $\LabMod(\nlab)$ is a stable model of $P$.
    \item $\nlab$ is a semi-stable labelling of $\af_P$ iff $\LabMod(\nlab)$ is an $L$-stable model of $P$.
\end{enumerate}

\end{restatable}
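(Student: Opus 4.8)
The plan is to leverage Theorem~\ref{t:psmcomp} together with Theorems~\ref{t:inverse} and~\ref{t:modarg} to lift the bijection between complete labellings and partial stable models to a bijection that respects the relevant extremality conditions. Each of the four items pairs a semantics for $\setaf$s (defined by minimality or maximality of $\inn(\nlab)$ or $\undec(\nlab)$, or by $\undec(\nlab)=\emptyset$) with a semantics for $\nlp$s (defined by minimality or maximality of $T$ or $T\cup F$, or by $T\cup F=\HB_P$). Since Theorem~\ref{t:psmcomp} already tells us $\nlab\mapsto\LabMod(\nlab)$ is a bijection from complete labellings onto partial stable models with inverse $\mathcal M\mapsto\ModLab(\mathcal M)$, it suffices to show this bijection translates the respective order-theoretic/size conditions into one another. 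So the first step is to record, for $\nlab$ a complete labelling and $\LabMod(\nlab)=\pair{T,F}$, the precise dictionary between the sets occurring in the two definitions.

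The key translation facts I would establish are: (a) $\inn(\nlab)=T$, i.e., $T=\set{c\in\ar_P\mid\nlab(c)=\inn}$, which is literally the definition of $\LabMod$ (here I use that partial stable models assign $\fa$ to all atoms outside $\ar_P$, via Corollary~\ref{c:statement-reduct}, so no spurious atoms appear in $T$); (b) $\out(\nlab)=F\cap\ar_P$ and $F=\out(\nlab)\cup(\HB_P\setminus\ar_P)$, so that for a \emph{fixed} program the map $\nlab\mapsto F$ is monotone in $\out(\nlab)$ and, more importantly, $F_1\subseteq F_2 \iff \out(\nlab_1)\subseteq\out(\nlab_2)$ whenever $\nlab_1,\nlab_2$ range over complete labellings of the \emph{same} $\af_P$; (c) $\undec(\nlab)=\HB_P\setminus(T\cup F)$ intersected with $\ar_P$ — but since $\HB_P\setminus\ar_P\subseteq F$, we actually get $\undec(\nlab)=\HB_P\setminus(T\cup F)$ exactly, hence $\undec(\nlab_1)\subseteq\undec(\nlab_2)\iff (T_1\cup F_1)\supseteq(T_2\cup F_2)$; and (d) $\undec(\nlab)=\emptyset \iff T\cup F=\HB_P$. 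Fact~(d) is immediate from~(c). With this dictionary the four items become almost mechanical:

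For item~1 (grounded $\leftrightarrow$ well-founded): $\inn(\nlab)$ minimal among complete labellings $\iff$ $T$ minimal among partial stable models, using~(a) and Theorem~\ref{t:psmcomp} to match up the two collections being minimised over; then $\LabMod(\nlab)$ is the partial stable model with minimal $T$, i.e., the well-founded model. Item~2 (preferred $\leftrightarrow$ regular) is identical with "minimal" replaced by "maximal". Item~3 (stable $\leftrightarrow$ stable): $\undec(\nlab)=\emptyset\iff T\cup F=\HB_P$ by~(d). Item~4 (semi-stable $\leftrightarrow$ $L$-stable): $\undec(\nlab)$ minimal $\iff$ $T\cup F$ maximal, by~(c) — the inclusion-reversal in~(c) is exactly what turns minimality of the undecided set into maximality of the decided set. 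In every case one must also invoke the fact that $\nlab$ is required to be complete for these $\setaf$-semantics and $\LabMod(\nlab)$ is required to be a partial stable model for these $\nlp$-semantics, and that Theorem~\ref{t:psmcomp} makes these two side-conditions correspond; and one uses Theorems~\ref{t:inverse}/\ref{t:modarg} to pass freely between $\nlab$ and $\LabMod(\nlab)$ (so that the "$\iff$" is genuinely an equivalence and not just one direction).

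The main obstacle is subtler than the arithmetic above suggests: it is making sure the collections over which one minimises/maximises really match up. On the $\setaf$ side one minimises $\inn(\nlab)$ over \emph{all complete labellings of $\af_P$}; on the $\nlp$ side one minimises $T$ over \emph{all partial stable models of $P$}. Theorem~\ref{t:psmcomp} gives a bijection between these two index sets, and Theorem~\ref{t:modarg} guarantees $\LabMod$ and $\ModLab$ are mutually inverse \emph{when restricted to this pair of sets} — this is the crucial point, because $\LabMod\circ\ModLab$ is \emph{not} the identity on arbitrary interpretations. So the argument for, say, item~1 should be phrased as: if $\nlab$ is grounded then for any partial stable model $\mathcal M'=\pair{T',F'}$ we have $\ModLab(\mathcal M')$ complete (Theorem~\ref{t:psmcomp}) with $\inn(\ModLab(\mathcal M'))=T'$, so by groundedness $\inn(\nlab)\subseteq T'$, i.e.\ $T\subseteq T'$; conversely if $\LabMod(\nlab)$ is well-founded then for any complete $\nlab'$ we have $\LabMod(\nlab')$ partial stable with $T$-component $\inn(\nlab')$, forcing $\inn(\nlab)\subseteq\inn(\nlab')$ — and one checks the round-trips close using Theorems~\ref{t:inverse} and~\ref{t:modarg}. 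Once this bookkeeping is set up cleanly for item~1, items~2--4 follow by the same template with the dictionary entries~(a)--(d) substituted appropriately.
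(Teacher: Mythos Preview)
Your proposal is correct and follows essentially the same approach as the paper. The paper packages your ``dictionary'' facts (a)--(d) into three standalone lemmas (one each for $\inn$, $\out$, and $\undec$, showing that $\inn(\nlab_1)\subseteq\inn(\nlab_2)\iff T_1\subseteq T_2$, etc.), and then the proof of the theorem is a one-line chain of iffs per item citing Theorem~\ref{t:psmcomp} together with the relevant lemma; your version derives these order-translation facts inline and is more explicit about why the bijection between complete labellings and partial stable models is needed to match the index sets for the extremality conditions, but the mathematical content is identical.
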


The following result is a direct consequence of Theorems \ref{t:modarg} and \ref{t:equivalence}:

\begin{restatable}{cor}{equivalenceII}\label{c:equivalenceII}
Let $P$ be an $\nlp$ and $\af_P = (\ar_P, \att_P)$ be the associated $\setaf$. It holds

\begin{enumerate}
    \item $\mathcal M$ is a well-founded model of $P$ iff $\ModLab(\mathcal M)$ is a grounded labelling of $\af_P$.
    \item $\mathcal M$ is a regular model of $P$ iff $\ModLab(\mathcal M)$ is a preferred labelling of $\af_P$.
    \item $\mathcal M$ is a stable model of $P$ iff $\ModLab(\mathcal M)$ is a stable labelling of $\af_P$.
    \item $\mathcal M$ is an $L$-stable model of $P$ iff $\ModLab(\mathcal M)$ is a semi-stable labelling of $\af_P$.
\end{enumerate}

\end{restatable}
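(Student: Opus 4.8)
The plan is to derive all four biconditionals mechanically by composing Theorem~\ref{t:equivalence} with the fixed-point identity $\LabMod(\ModLab(\mathcal M)) = \mathcal M$ of Theorem~\ref{t:modarg}. Since the four items are completely parallel, I would write out item~1 (well-founded model versus grounded labelling) in detail and then observe that items 2--4 are obtained verbatim by substituting ``regular/preferred'', ``stable/stable'' and ``$L$-stable/semi-stable'' for ``well-founded/grounded'' and invoking parts (2), (3), (4) of Theorem~\ref{t:equivalence} in place of part~(1).

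For the ``only if'' direction of item~1: a well-founded model $\mathcal M$ of $P$ is in particular a partial stable model, so Theorem~\ref{t:modarg} gives $\LabMod(\ModLab(\mathcal M)) = \mathcal M$. Since $\ModLab(\mathcal M)$ is a labelling of $\af_P$, Theorem~\ref{t:equivalence}(1) applies to it and yields that $\ModLab(\mathcal M)$ is a grounded labelling iff $\LabMod(\ModLab(\mathcal M))$ is a well-founded model of $P$; the right-hand side holds because $\LabMod(\ModLab(\mathcal M)) = \mathcal M$ is a well-founded model by hypothesis, hence $\ModLab(\mathcal M)$ is grounded. For the ``if'' direction: if $\ModLab(\mathcal M)$ is a grounded labelling of $\af_P$, then by Theorem~\ref{t:equivalence}(1) the interpretation $\LabMod(\ModLab(\mathcal M))$ is a well-founded model of $P$, and invoking $\LabMod(\ModLab(\mathcal M)) = \mathcal M$ from Theorem~\ref{t:modarg} identifies this interpretation with $\mathcal M$, so $\mathcal M$ is a well-founded model. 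The same two lines, with the matching part of Theorem~\ref{t:equivalence}, settle items 2, 3 and 4.

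I do not expect a genuine obstacle, since the argument is just a composition of two already-proved theorems; the one point that will need care is the ``if'' direction, where Theorem~\ref{t:modarg} may only be applied once we know that $\mathcal M$ is a partial stable model. This is not a real limitation: the four classes on the left-hand sides (well-founded, regular, stable, and $L$-stable models) are all subclasses of the partial stable models, so the intended reading is that $\mathcal M$ ranges over partial stable models. Theorems~\ref{t:psmcomp}, \ref{t:inverse} and \ref{t:modarg} together put the partial stable models of $P$ in bijection with the complete labellings of $\af_P$ via $\ModLab$ and $\LabMod$; the corollary then simply reads off, on each side of that bijection, the matching refinement, and each item reduces to the two-line composition above with the corresponding part of Theorem~\ref{t:equivalence}.
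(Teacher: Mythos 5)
Your proposal is correct and follows exactly the route the paper takes: the corollary is obtained by composing Theorem~\ref{t:equivalence} with the identity $\LabMod(\ModLab(\mathcal M)) = \mathcal M$ of Theorem~\ref{t:modarg} (the paper's proof is literally ``These results come from Theorems~\ref{t:modarg} and~\ref{t:equivalence}''). Your extra care in the ``if'' direction --- using Theorem~\ref{t:psmcomp} to first establish that $\mathcal M$ is a partial stable model so that Theorem~\ref{t:modarg} may be applied --- is exactly the right way to close the one small gap the paper leaves implicit.
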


Next, we consider the $\nlp$ exploited by Caminada et al. \citep{caminada15equivalence} as a counterexample to show that in general, $L$-stable models and semi-stable labellings do not coincide with each other in their translation from $\nlp$s to $\aaf$s:

\begin{example}\label{ex:nlp-setaf}
Let $P$ be the $\nlp$ and $\af_P$ be the associated $\setaf$ depicted in Fig. \ref{f:nlp-setafII}:

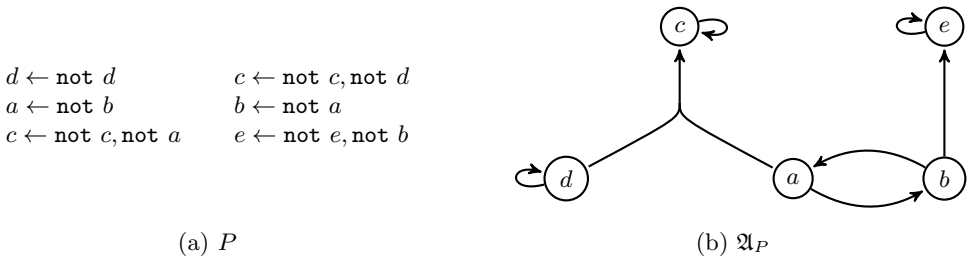
\begin{figure}[ht!]
    \centering
     \begin{subfigure}[b]{0.43\textwidth}
$$
\begin{array}{lcl}
d \leftarrow \naf d	 & & c \leftarrow \naf c, \naf d\\
a \leftarrow \naf b	 & & b \leftarrow \naf a\\
c \leftarrow \naf c, \naf a	& & e \leftarrow \naf e, \naf b
\end{array}
$$
\vspace{1.75em}
\caption{$P$}
     \end{subfigure}
     \hfill
    \begin{subfigure}[b]{0.54\textwidth}
         \centering
\begin{tikzpicture}[>=stealth',shorten >=1pt,auto,node distance=2cm,
  thick,main node/.style={circle,fill=white!20,draw}, inner sep=3pt]

  \node[main node] (c) {$c$};
  \node[main node] (a) [xshift = 1.5cm, below of=c] {$a$};
  \node[main node] (b) [right of=a] {$b$};
  \node[main node] (e) [above of=b] {$e$};
  \node[main node] (d) [xshift = -1.5cm, below of=c] {$d$};

  \path[every node/.style={font=\sffamily\small}]
    (a) edge [->,bend right]	node {} (b)
    (b) edge [->,bend right]	node {} (a)
        edge [->]               node {} (e)
    (c) edge [->,loop right]	node {} (c)
    (d) edge [->,loop left]	node {} (d)
    (e) edge [->,loop left]	node {} (e);
    
    \connectThree[
          @edge 3=->
        ]{d}{a}{c}     
\end{tikzpicture} 
         \caption{$\af_P$}
         \label{f:nlp-setafIIb}
     \end{subfigure}
  \caption{$\nlp$ $P$ and its associated $\setaf$ $\af_P$}
        \label{f:nlp-setafII}   
\end{figure}
\end{example}

As expected from Theorems \ref{t:psmcomp} and \ref{t:equivalence}, we obtain in Table \ref{tab:p-afp} the equivalence between partial stable models and complete labellings, well-founded models and grounded labellings, regular models and preferred labellings, stable models and stable labellings, $L$-stable models and semi-stable labellings. We emphasise the coincidence between $L$-stable models and semi-stable labellings in Table \ref{tab:p-afp} as it does not occur in \citep{caminada15equivalence}. In that reference, the associated $\aaf$ possesses two semi-stable labellings in contrast with the unique $L$-stable model $\mathcal M_3$ of $P$. In the next two sections, we will show that this relation between $\nlp$s and $\setaf$s has even deeper implications.

\begin{table}[!ht]
 \centering
 \caption{Semantics for $P$ and $\af_P$}
 \label{tab:p-afp}
 {\tablefont\begin{tabular}{@{\extracolsep{\fill}}llcl}
   \topline
    \multirow{3}{*}[0pt]{Partial Stable Models } & $\mathcal M_1 = \pair{ \emptyset, \emptyset }$ & \multirow{3}{*}[0pt]{Complete Labellings} & $\nlab_1 = ( \emptyset, \emptyset, \set{ a, b, c, d, e } )$\\
   & $\mathcal M_2 = \pair{ \set{a}, \set{b} }$  & & $\nlab_2 = ( \set{a}, \set{b}, \set{c, d, e} )$\\
   &  $\mathcal M_3 = \pair{\set{b}, \set{a,e} }$ &  & $\nlab_3 = ( \set{ b }, \set{a, e}, \set{c, d} )$ \\
   \hline
    Well-Founded Models\ & $\mathcal M_1 = \pair{\emptyset, \emptyset }$ &   Grounded Labellings & $\nlab_1 = ( \emptyset, \emptyset, \set{ a, b, c, d, e } )$  \\
    \hline
    \multirow{2}{*}[0pt]{Regular Models} & $\mathcal M_2 = \pair{ \set{a}, \set{b} }$  & \multirow{2}{*}[0pt]{Preferred Labellings} & $\nlab_2 = ( \set{ a }, \set{ b }, \set{ c, d, e } )$ \\
    & $\mathcal M_3 = \pair{ \set{b}, \set{a, e} }$   &  &  $\nlab_3 = ( \set{ b }, \set{ a, e}, \set{ c, d } )$ \\
    \hline
    Stable Models &  None &  Stable Labellings &  None \\ 
    \hline
    $L$-stable Models & $\mathcal M_3 = \pair{ \set{b}, \set{a, e} }$ & \multirow{1}{*}[5pt]{Semi-stable} \ & $\nlab_3 = ( \set{ b }, \set{ a,  e }, \set{ c, d } )$ \vspace{-.6em} \\
    & & Labellings &
   \botline
    \end{tabular}}
\end{table} 

\section{From $\setaf$ to $\nlp$}\label{s:setaflp}

Now we will provide a translation in the other direction, i.e., from $\setaf$s to $\nlp$s. As in the previous section, this translation guarantees the equivalence between the semantics for $\nlp$s and their counterpart for $\setaf$s. 

\begin{mdef}\label{d:setaf-nlp} Let $\af = (\ar, \att)$ be a $\setaf$. For any argument $a \in \ar$, we will assume $\mathcal V_a = \{ V \mid V \textit{ is a minimal set (w.r.t. set inclusion) such that  for each } \mathcal B \in \att(a), \textit{ there exists } b \in \mathcal B \cap V \}$.
We define the associated $\nlp$ $P_\af$ as follows:
\[P_\af = \set{a \leftarrow \naf b_1, \ldots \naf b_n \mid a \in \ar, V \in \mathcal V_a \textit{ and } V = \set{b_1, \ldots, b_n}}. \]
\end{mdef}

\begin{example}\label{ex:setaf-nlp}
Recall the $\setaf$ $\af$ of Example \ref{ex:setaf} (it is the same as that in Fig \ref{f:nlp-setafIIb}). The associated $\nlp$ $P_\af$ is 
$$
\begin{array}{lcl}
d \leftarrow \naf d	 & & c \leftarrow \naf c, \naf d\\
a \leftarrow \naf b	 & & b \leftarrow \naf a\\
c \leftarrow \naf c, \naf a	& & e \leftarrow \naf e, \naf b	
\end{array}
$$
\end{example}

Notice that $P_\af$ and the $\nlp$ $P$ of Example \ref{ex:nlp-setaf} are the same. As it will be clear in the next section, this is not merely a coincidence. Besides, from Definition \ref{d:setaf-nlp}, it is clear that $\HB_{P_\af} = \ar$. Consequently, when considering a $\setaf$ $\af$ and its associated $\nlp$ $P_\af$, the definition of the function $\lm$ (resp. $\ml$), which associates labellings with interpretations (resp.  interpretations with labellings), will be simpler than the definition of $\LabMod$ (resp. $\ModLab$) presented in the previous section. 

\begin{mdef}[$\lm$ and $\ml$ Functions] \label{d:lm}
Let $\af$ be a $\setaf$ and $P$ be its associated $\nlp$, $\lab$ be the set of all labellings of $\af$ and $\model$ be the set of all the 3-valued interpretations of $P_\af$. We introduce the functions 

\begin{itemize}
\item $\lm : \lab \to \model$, in which 
\[ \lm(\nlab) = \pair{\inn(\nlab), \out(\nlab)}. \] 
Obviously $\overline{\inn(\nlab) \cup \out(\nlab)} = \undec(\nlab)$.
\item $\ml : \model \to \lab$, in which for $\mathcal M = \pair{T, F} \in \model$, 
\[ \ml(\mathcal M) = (T, F, \overline{T \cup F}). \]
\end{itemize}
\end{mdef}

In contrast with $\LabMod$ and $\ModLab$, the functions $\lm$ and $\ml$ are each other's inverse in the general case:

\begin{restatable}{theo}{inverseafsetaf}\label{t:inverse-af-setaf}
Let $\af = (\ar, \att)$ be a $\setaf$ and $P_\af$ its associated $\nlp$.

\begin{itemize}
    \item For any labelling $\nlab$ of $\af$, it holds $\ml(\lm(\nlab)) = \nlab$.
    \item For any interpretation $\inter$ of $P_\af$, it holds $\lm(\ml(\inter)) = \inter$.
\end{itemize}
\end{restatable}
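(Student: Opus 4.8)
The plan is to reduce both claims to the single structural fact, already noted after Definition~\ref{d:setaf-nlp}, that $\HB_{P_\af} = \ar$. Once this is in place, a labelling of $\af$ and a $3$-valued interpretation of $P_\af$ are merely two notations for the same object, namely an ordered triple of pairwise disjoint (possibly empty) subsets of $\ar$ whose union is $\ar$; the maps $\lm$ and $\ml$ just rewrite one notation as the other, so they are inverse to each other on the nose. Accordingly the proof is a short direct computation with no induction.

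First I would spell out why $\HB_{P_\af} = \ar$, which is where the only real care is needed. If $a \in \ar$ is unattacked, i.e.\ $\att(a) = \emptyset$, then the defining condition of $\mathcal V_a$ is vacuously met by $V = \emptyset$, so $\emptyset \in \mathcal V_a$ and $P_\af$ contains the fact $a \leftarrow$; if $a$ is attacked, every $V \in \mathcal V_a$ is non-empty and $a$ occurs as a rule head. Either way $a \in \HB_{P_\af}$, and conversely every atom occurring in $P_\af$ is drawn from an attacker set $\mathcal B \subseteq \ar$ or is a head in $\ar$, so $\HB_{P_\af} \subseteq \ar$. I would also record well-definedness of the two maps: for a labelling $\nlab$ the sets $\inn(\nlab)$ and $\out(\nlab)$ are disjoint subsets of $\ar = \HB_{P_\af}$, so $\lm(\nlab) = \pair{\inn(\nlab), \out(\nlab)}$ is a genuine interpretation of $P_\af$; and for an interpretation $\mathcal M = \pair{T,F}$ the three sets $T$, $F$, $\overline{T \cup F}$ partition $\ar$, so $\ml(\mathcal M)$ is a genuine labelling of $\af$.

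For the first bullet I would take an arbitrary labelling $\nlab$ of $\af$ and compute: $\lm(\nlab) = \pair{\inn(\nlab), \out(\nlab)}$, hence $\ml(\lm(\nlab)) = (\inn(\nlab),\, \out(\nlab),\, \overline{\inn(\nlab) \cup \out(\nlab)})$. Because $\nlab$ is a function $\ar \to \set{\inn, \out, \undec}$, the sets $\inn(\nlab)$, $\out(\nlab)$, $\undec(\nlab)$ partition $\ar = \HB_{P_\af}$, so $\overline{\inn(\nlab) \cup \out(\nlab)} = \HB_{P_\af} \setminus (\inn(\nlab) \cup \out(\nlab)) = \undec(\nlab)$. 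Thus $\ml(\lm(\nlab)) = (\inn(\nlab), \out(\nlab), \undec(\nlab))$, which, read as a labelling, assigns to each $a \in \ar$ precisely the label it has under $\nlab$; hence $\ml(\lm(\nlab)) = \nlab$. For the second bullet I would take an arbitrary interpretation $\inter = \pair{T, F}$ of $P_\af$; then $\ml(\inter) = (T, F, \overline{T \cup F})$ is a labelling $\nlab$ with $\inn(\nlab) = T$ and $\out(\nlab) = F$, and applying $\lm$ gives $\lm(\ml(\inter)) = \pair{\inn(\nlab), \out(\nlab)} = \pair{T, F} = \inter$.

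The hard part, to the extent there is one, is nothing more than verifying $\HB_{P_\af} = \ar$ — in particular that unattacked arguments still contribute a rule — since this equality is exactly what forces $\lm$ and $\ml$ to be mutually inverse in full generality, in contrast with $\LabMod$ and $\ModLab$, whose composition can fail to be the identity precisely because an atom of an $\nlp$ need not survive as an argument.
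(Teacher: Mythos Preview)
Your proposal is correct and follows essentially the same direct computation as the paper's proof, which simply chases an arbitrary argument through the three cases $\inn$, $\out$, $\undec$ in each direction. You are more explicit than the paper in re-deriving $\HB_{P_\af} = \ar$ (including the unattacked case) and in checking well-definedness of $\lm$ and $\ml$, but the paper takes these for granted and the core argument is identical.
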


A similar result to Theorem \ref{t:psmcomp} also holds here:

\begin{restatable}{theo}{comppsmsetaf}\label{t:comppsmsetaf}
Let $\af$ be a $\setaf$ and $P_\af$ be its associated $\nlp$. It holds

\begin{itemize}
    \item $\nlab$ is a complete labelling of $\af$ iff $\lm(\nlab)$ is a partial stable model of $P_\af$.
    \item $\mathcal M$ is a partial stable model of $P_\af$ iff $\ml(\mathcal M)$ is a complete labelling of $\af$.
\end{itemize}
\end{restatable}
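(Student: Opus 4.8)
The plan is to leverage the machinery already built for the $\nlp$-to-$\setaf$ direction by establishing a tight correspondence between the construction $\af \mapsto P_\af$ and the constructions of Section~\ref{s:nlp-setaf}. First I would observe that since $\HB_{P_\af} = \ar$ (noted right after Definition~\ref{d:setaf-nlp}), every atom of $P_\af$ is an argument, so the awkward ``lost atoms'' that forced $\LabMod$ and $\ModLab$ to behave asymmetrically simply do not arise here; this is precisely why $\lm$ and $\ml$ are honest inverses (Theorem~\ref{t:inverse-af-setaf}). The heart of the argument is to show that the $\setaf$ associated with $P_\af$ via Definition~\ref{d:nlp-setaf} is (isomorphic to, indeed equal to) $\af$ itself: each rule $a \leftarrow \naf b_1,\ldots,\naf b_n$ of $P_\af$ with $\{b_1,\ldots,b_n\} = V \in \mathcal V_a$ is already a statement (it has no positive body literals), so $\Vul_{P_\af}(a) = \mathcal V_a$, and then the attack relation of Definition~\ref{d:attack} applied to $P_\af$ recovers exactly $\att$ because $\mathcal V_a$ was defined as the minimal hitting sets of $\att(a)$ and the minimal hitting sets of $\mathcal V_a$ are in turn exactly $\att(a)$ (minimality being preserved under the double-transversal operation on a clutter, since $\att(a)$ itself is subset-minimal by Definition~\ref{d:setaf}). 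Once $\af_{P_\af} = \af$ is in hand, and one checks that under this identification $\lm$ coincides with $\LabMod$ and $\ml$ with $\ModLab$ (both trivial since $T = \inn(\nlab)$, $F = \out(\nlab)$ on the nose), the theorem follows immediately from Theorem~\ref{t:psmcomp}.

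Concretely, the steps in order would be: (1) prove $\Vul_{P_\af}(a) = \mathcal V_a$ for every $a \in \ar$, directly from Definition~\ref{d:argument} applied to $P_\af$ (only the base case of the statement construction is relevant, since $P_\af$ has no positive body atoms); (2) prove that the clutter $\mathcal V_a$ of minimal transversals of $\att(a)$ has $\att(a)$ as its own clutter of minimal transversals — this is the classical Boolean-duality fact $\mathrm{tr}(\mathrm{tr}(\mathcal{H})) = \mathcal{H}$ for a clutter $\mathcal{H}$, and it is here that subset-minimality of attacks in Definition~\ref{d:setaf} is essential; (3) conclude $\att_{P_\af} = \att$ via Definition~\ref{d:attack}, hence $\af_{P_\af} = \af$; (4) verify $\LabMod = \lm$ and $\ModLab = \ml$ under $\HB_{P_\af} = \ar$; (5) invoke Theorem~\ref{t:psmcomp} to transfer both bullet points.

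I expect step~(2) to be the main obstacle — not because it is deep (it is a standard result about clutters/hypergraph transversals), but because it must be stated and proved carefully in the paper's own notation, and it genuinely depends on the subset-minimality convention adopted in Definition~\ref{d:setaf}; without that convention one only gets $\att(a) \subseteq \mathrm{tr}(\mathcal V_a)$ up to supersets, and the equality $\af_{P_\af} = \af$ would fail on the nose (though the semantics would still agree, since attacks and their supersets induce the same labellings). An alternative route that sidesteps the duality lemma would be to re-run the proofs of Theorems~\ref{t:inverse-af-setaf} and~\ref{t:comppsmsetaf} from scratch, mirroring the proofs of Theorems~\ref{t:inverse}, \ref{t:modarg} and~\ref{t:psmcomp} but exploiting $\HB_{P_\af} = \ar$ to simplify; this is more work but keeps the argument self-contained. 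I would favour the first route, relegating the clutter-duality fact to a short lemma in the appendix, since it also yields the stronger structural statement $\af_{P_\af} = \af$ that is needed anyway in Section~\ref{s:slp-setaf}.
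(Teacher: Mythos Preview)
Your proposal is correct, but it takes a genuinely different route from the paper. The paper proves Theorem~\ref{t:comppsmsetaf} by direct verification --- exactly what you call the ``alternative route'': it shows by a three-case analysis (on whether $a$ lies in $T$, $F$, or $\overline{T\cup F}$) that $\lm(\nlab)$ is a fixpoint of $\Omega_{P_\af}$ whenever $\nlab$ is complete, and conversely that $\ml(\mathcal M)$ satisfies the complete-labelling conditions whenever $\mathcal M$ is a partial stable model; the remaining two directions then follow from $\lm$ and $\ml$ being mutual inverses (Theorem~\ref{t:inverse-af-setaf}). The structural fact $\af_{P_\af} = \af$ is established only afterwards, as Theorem~\ref{t:inverse-setaf-slp} in Section~\ref{s:slp-setaf}, and is proved there independently of Theorem~\ref{t:comppsmsetaf}.

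Your route inverts this dependency: you prove $\af_{P_\af} = \af$ first (your steps (1)--(3), hinging on the clutter-duality identity $\mathrm{tr}(\mathrm{tr}(\mathcal H)) = \mathcal H$), then identify $\lm$ with $\LabMod$ and $\ml$ with $\ModLab$ (legitimate since $\HB_{P_\af} = \ar = \ar_{P_\af}$), and finally invoke Theorem~\ref{t:psmcomp}. This buys economy --- no duplicated case analysis --- and yields Theorem~\ref{t:inverse-setaf-slp} as a byproduct rather than a separate result. The paper's direct route, on the other hand, keeps Theorem~\ref{t:comppsmsetaf} self-contained and avoids importing the hypergraph-transversal fact, at the cost of repeating the pattern of Theorem~\ref{t:psmcomp}; when the paper later reaches Theorem~\ref{t:inverse-setaf-slp}, its proof of $\att = \att_{P_\af}$ is a compressed chain of iffs that in effect asserts the double-transversal identity without isolating it as a lemma. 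Your instinct to make that identity explicit is sound, and your observation that it genuinely depends on the subset-minimality convention of Definition~\ref{d:setaf} is exactly right.
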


From Theorem \ref{t:comppsmsetaf}, we can ensure the equivalence between the semantics for $\nlp$ and their counterpart for $\setaf$:

\begin{restatable}{theo}{equivalencesetaf}\label{t:equivalencesetaf}
Let $\af$ be a $\setaf$ and $P_\af$ its associated $\nlp$. It holds

\begin{enumerate}
    \item $\nlab$ is a grounded labelling of $\af$ iff $\lm(\nlab)$ is a well-founded model of $P_\af$.
    \item $\nlab$ is a preferred labelling of $\af$ iff $\lm(\nlab)$ is a regular model of $P_\af$.
    \item $\nlab$ is a stable labelling of $\af$ iff $\lm(\nlab)$ is a stable model of $P_\af$.
    \item $\nlab$ is a semi-stable labelling of $\af$ iff $\lm(\nlab)$ is an $L$-stable model of $P_\af$.
\end{enumerate}

\end{restatable}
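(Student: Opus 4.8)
The plan is to reduce everything to Theorem \ref{t:comppsmsetaf} together with the fact that $\lm$ and $\ml$ are inverse bijections (Theorem \ref{t:inverse-af-setaf}), so that the four clauses follow by checking that the relevant optimality conditions are transported correctly across the bijection. First I would observe that, by Theorem \ref{t:comppsmsetaf}, $\lm$ restricts to a bijection from the set of complete labellings of $\af$ onto the set of partial stable models of $P_\af$, with inverse $\ml$. Moreover, by the very definition of $\lm$, if $\nlab$ is a labelling and $\lm(\nlab) = \pair{T,F}$, then $T = \inn(\nlab)$, $F = \out(\nlab)$, and $\overline{T\cup F} = \undec(\nlab)$. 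Hence for complete labellings $\nlab, \nlab'$ with $\lm(\nlab) = \pair{T,F}$, $\lm(\nlab') = \pair{T',F'}$ we have the three order-equivalences: $\inn(\nlab) \subseteq \inn(\nlab')$ iff $T \subseteq T'$; $\undec(\nlab) \subseteq \undec(\nlab')$ iff $\overline{T\cup F} \subseteq \overline{T'\cup F'}$; and, since $\undec(\nlab) = \emptyset$ iff $\overline{T\cup F} = \emptyset$ iff $T \cup F = \HB_{P_\af}$ (recall $\HB_{P_\af} = \ar$), emptiness of the undecided part matches the 2-valuedness condition. Also note $\overline{T \cup F} \subseteq \overline{T' \cup F'}$ is equivalent to $T' \cup F' \subseteq T \cup F$, i.e. reverse inclusion on the "decided" sets, which is exactly how $L$-stable is phrased (maximality of $T \cup F$).

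Next I would dispatch the clauses one at a time. For clause (1): a complete labelling $\nlab$ is grounded iff $\inn(\nlab)$ is $\subseteq$-minimal among complete labellings; by the bijection and the order-equivalence on $\inn$, this holds iff $T = \inn(\nlab)$ is $\subseteq$-minimal among the $T$-components of partial stable models, i.e. iff $\lm(\nlab)$ is a well-founded model of $P_\af$. Clause (2) (preferred $\leftrightarrow$ regular) is identical with "minimal" replaced by "maximal". Clause (3): $\nlab$ is stable iff $\undec(\nlab) = \emptyset$ iff $T \cup F = \HB_{P_\af}$, i.e. iff $\lm(\nlab)$ is a stable model. Clause (4): $\nlab$ is semi-stable iff $\undec(\nlab)$ is $\subseteq$-minimal among complete labellings iff $T \cup F$ is $\subseteq$-maximal among partial stable models (using the reverse-inclusion translation above), i.e. iff $\lm(\nlab)$ is an $L$-stable model. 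In each case the surjectivity of $\lm$ onto partial stable models is what guarantees that "minimal/maximal among complete labellings" and "minimal/maximal among partial stable models" really refer to the same comparison class, and injectivity plus the inverse $\ml$ guarantee the other direction.

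I do not expect a serious obstacle here: the work has essentially already been done in Theorems \ref{t:inverse-af-setaf} and \ref{t:comppsmsetaf}, and the remaining content is the bookkeeping of how $\lm$ interacts with the four optimality conditions. The only point requiring a little care is clause (4), where the minimality of $\undec(\nlab)$ must be correctly converted into maximality of $T \cup F$; this is where one must use that $\HB_{P_\af} = \ar$ so that $\undec(\nlab) = \ar \setminus (T \cup F)$, making $\undec(\nlab) \subseteq \undec(\nlab')$ literally the same as $T' \cup F' \subseteq T \cup F$. I would state this equivalence once as a small observation at the start of the proof and then invoke it for clauses (1), (2), and (4) uniformly, which keeps the argument short and avoids repeating the comparison-class reasoning four times.
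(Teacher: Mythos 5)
Your proposal is correct and follows essentially the same route as the paper: both reduce the four clauses to Theorem \ref{t:comppsmsetaf} (with Theorem \ref{t:inverse-af-setaf} supplying the bijection), use that $\lm(\nlab) = \pair{\inn(\nlab),\out(\nlab)}$ so the $\inn$/$\out$/$\undec$ parts literally coincide with $T$/$F$/$\overline{T\cup F}$, and then transport the minimality/maximality conditions, including converting $\undec$-minimality into $T\cup F$-maximality for the semi-stable/$L$-stable case. No gaps.
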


The following result is a direct consequence of Theorems \ref{t:inverse-af-setaf} and \ref{t:equivalencesetaf}:

\begin{restatable}{cor}{setafequivalenceII}\label{c:setafequivalenceII}
Let $\af$ be a $\setaf$ and $P_\af$ its associated $\nlp$. It holds

\begin{enumerate}
    \item $\mathcal M$ is a well-founded model of $P_\af$ iff $\ml(\mathcal M)$ is a grounded labelling of $\af$.
    \item $\mathcal M$ is a regular model of $P_\af$ iff $\ml(\mathcal M)$ is a preferred labelling of $\af$.
    \item $\mathcal M$ is a stable model of $P_\af$ iff $\ml(\mathcal M)$ is a stable labelling of $\af$.
    \item $\mathcal M$ is an $L$-stable model of $P_\af$ iff $\ml(\mathcal M)$ is a semi-stable labelling of $\af$.
\end{enumerate}

\end{restatable}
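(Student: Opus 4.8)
The plan is to derive this corollary directly from two facts already in hand: that $\lm$ and $\ml$ are mutually inverse bijections between the labellings of $\af$ and the interpretations of $P_\af$ (Theorem~\ref{t:inverse-af-setaf}), and that the semantic correspondences have already been proved on the $\lm$ side (Theorem~\ref{t:equivalencesetaf}). In essence, each of the four biconditionals is obtained from the matching clause of Theorem~\ref{t:equivalencesetaf} by transporting it along the bijection $\ml$, the round-trip identity $\lm \circ \ml = \mathrm{id}$ doing all the bookkeeping.

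Concretely, I would argue item (1) in full and remark that items (2)--(4) are identical after substituting ``regular model / preferred labelling'', ``stable model / stable labelling'', and ``$L$-stable model / semi-stable labelling'' for ``well-founded model / grounded labelling'', and invoking clauses (2), (3), (4) of Theorem~\ref{t:equivalencesetaf} in place of clause (1). For the left-to-right direction of (1): suppose $\mathcal M$ is a well-founded model of $P_\af$; then in particular $\mathcal M$ is an interpretation of $P_\af$, so $\ml(\mathcal M)$ is a well-defined labelling of $\af$, and by Theorem~\ref{t:inverse-af-setaf} we have $\lm(\ml(\mathcal M)) = \mathcal M$; hence $\lm(\ml(\mathcal M))$ is a well-founded model of $P_\af$, and Theorem~\ref{t:equivalencesetaf}(1) yields that $\ml(\mathcal M)$ is a grounded labelling of $\af$. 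For the converse: suppose $\mathcal M$ is an interpretation of $P_\af$ with $\ml(\mathcal M)$ a grounded labelling of $\af$; by Theorem~\ref{t:equivalencesetaf}(1) the interpretation $\lm(\ml(\mathcal M))$ is a well-founded model of $P_\af$, and since $\lm(\ml(\mathcal M)) = \mathcal M$ by Theorem~\ref{t:inverse-af-setaf}, we conclude $\mathcal M$ is a well-founded model of $P_\af$.

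I do not expect any real obstacle: the substantive content lives in Theorem~\ref{t:comppsmsetaf} (base case: complete labellings versus partial stable models), Theorem~\ref{t:equivalencesetaf} (the minimality/maximality-based semantics), and Theorem~\ref{t:inverse-af-setaf} (bijectivity of $\lm$ and $\ml$). The only point deserving a word of care is the implicit domain condition in the statement --- $\ml(\mathcal M)$ is only meaningful when $\mathcal M$ is an interpretation of $P_\af$ --- but this is automatic in each left-to-right direction (a model is an interpretation) and is part of the hypothesis in each right-to-left direction, so the argument goes through without incident.
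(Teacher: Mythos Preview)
Your proposal is correct and matches the paper's own approach exactly: the paper's proof is the single line ``These results come from Theorems~\ref{t:inverse-af-setaf} and~\ref{t:equivalencesetaf}.'' Your write-up simply unpacks this by using the identity $\lm(\ml(\mathcal M)) = \mathcal M$ from Theorem~\ref{t:inverse-af-setaf} to transport each clause of Theorem~\ref{t:equivalencesetaf} to the $\ml$ side, which is precisely the intended argument.
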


Recalling the $\setaf$ $\af$ and its associated $P_\af$ of Example \ref{ex:setaf-nlp}, we obtain the expected equivalence results related to their semantics (see Table \ref{tab:p-afp}). In the next section, we will identify a class of $\nlp$s in which the translation from a $\setaf$ to an $\nlp$ (Definition \ref{d:setaf-nlp}) behaves as the inverse of the translation from an $\nlp$ to a $\setaf$ (Definition \ref{d:nlp-setaf}).

\section{On the relation between $\rfalp$s and $\setaf$s}\label{s:slp-setaf}

We will recall a particular kind of $\nlp$s, called Redundancy-Free Atomic Logic Programs $(\rfalp s)$. From an $\rfalp$ $P$, we obtain its associated $\setaf$ $\af_P$ via Definition \ref{d:nlp-setaf}; from $\af_P$, we obtain its associated $\nlp$ $P_{\af_P}$ via Definition \ref{d:setaf-nlp}. By following the other direction, from a $\setaf$ $\af$, we obtain its associated $\nlp$ $P_\af$, and from $P_\af$, its associated $\setaf$ $\af_{P_\af}$.  An important result mentioned in this section is that $P = P_{\af_P}$ and $\af = \af_{P_\af}$, i.e., the translation from an $\nlp$ to a $\setaf$ and the translation from a $\setaf$ to an $\nlp$ are each other's inverse. Next, we define $\rfalp$s:

\begin{mdef}[$\rfalp$ \protect\citep{konig22just}]\label{d:slp} We define a Redundancy-Free Atomic Logic Program $(\rfalp)$\@ $P$ as an $\nlp$ such that  
\begin{enumerate}
\item $P$ is redundancy-free, i.e., $\HB_P = \set{\mathit{head}(r) \mid r \in P}$ and if $c \leftarrow \naf b_1, \ldots, \naf b_n \in P$, there is no rule $c \leftarrow \naf c_1, \ldots, \naf c_{n'} \in P$ such that $\set{c_1, \ldots, c_{n'}} \subset \set{b_1, \ldots, b_n}$.
\item $P$ is atomic, i.e., each rule has the form $c \leftarrow \naf b_1, \ldots, \naf b_n$ $(n \geq 0)$.
\end{enumerate}
\end{mdef}

Firstly, Proposition \ref{p:slp} sustains that for any $\setaf$ $\af$, its associated $\nlp$ $P_\af$ will always be an $\rfalp$:

\begin{restatable}{prop}{slp}\label{p:slp} Let $\af = (\ar, \att)$ be a $\setaf$ and $P_\af$ its associated $\nlp$. It holds $P_\af$ is an $\rfalp$.
\end{restatable}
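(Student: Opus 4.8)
The plan is to verify directly that the program $P_\af$ produced by Definition~\ref{d:setaf-nlp} satisfies both defining conditions of an $\rfalp$ from Definition~\ref{d:slp}. The second condition (atomicity) is immediate: every rule placed in $P_\af$ has the shape $a \leftarrow \naf b_1, \ldots, \naf b_n$ by construction, with no positive body literals, so $P_\af$ is atomic with no work required. Hence the entire content of the proposition is the redundancy-freeness condition, and that is where I would concentrate.

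For redundancy-freeness I first note that $\HB_{P_\af} = \ar = \set{\mathit{head}(r) \mid r \in P_\af}$: every head of a rule of $P_\af$ is an argument $a \in \ar$, and conversely for every $a \in \ar$ the set $\mathcal V_a$ is nonempty (even when $\att(a) = \emptyset$, the empty set $V = \emptyset$ is the unique minimal set vacuously satisfying the condition, yielding the fact $a \leftarrow{}$), so every argument appears as a head. The only atoms occurring in $P_\af$ are arguments of $\af$ (heads are arguments, and each $b_j$ lies in some $\mathcal B \in \att(a) \subseteq 2^\ar\setminus\{\emptyset\}$, hence $b_j \in \ar$), so $\HB_{P_\af} = \ar$ exactly. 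Then I would take two rules $a \leftarrow \naf b_1,\ldots,\naf b_n$ and $a \leftarrow \naf c_1,\ldots,\naf c_{n'}$ in $P_\af$ with the same head $a$, coming from sets $V = \set{b_1,\ldots,b_n} \in \mathcal V_a$ and $V' = \set{c_1,\ldots,c_{n'}} \in \mathcal V_a$, and suppose for contradiction that $V' \subsetneq V$. Both $V$ and $V'$ are elements of $\mathcal V_a$, which by definition is the set of \emph{minimal} (w.r.t.\ $\subseteq$) sets $W$ such that for each $\mathcal B \in \att(a)$ there is some $b \in \mathcal B \cap W$. But $V' \subsetneq V$ means $V$ is not minimal among such sets, since $V'$ is a strictly smaller set with the same hitting property --- contradicting $V \in \mathcal V_a$. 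Therefore no such pair exists, establishing redundancy-freeness.

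I anticipate no serious obstacle here; the proposition is essentially unwinding definitions. The one subtlety worth stating carefully is the edge case $\att(a) = \emptyset$: one should confirm that $\mathcal V_a = \set{\emptyset}$ in that case (the condition "for each $\mathcal B \in \att(a)$, there exists $b \in \mathcal B \cap V$" is vacuously true for $V = \emptyset$, and $\emptyset$ is trivially the unique minimal such set), so that unattacked arguments become facts $a \leftarrow{}$ and still appear as heads --- this is what makes $\HB_{P_\af} = \ar$ hold without exception. A second point to handle cleanly is that the sets in $\mathcal V_a$ are pairwise $\subseteq$-incomparable by their very definition as minimal hitting sets, which is precisely the redundancy-freeness requirement; I would phrase the argument as "minimality of $V$ is exactly the negation of the redundancy condition" to keep it crisp. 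With these observations in place, the proof is a short paragraph.
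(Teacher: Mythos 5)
Your proof is correct and follows essentially the same route as the paper's: atomicity is immediate from the form of the rules, body atoms lie in attacking sets and hence are arguments that appear as heads, and the minimality built into $\mathcal V_a$ directly rules out a rule with a strictly smaller body for the same head. Your treatment is in fact slightly more careful than the paper's, since you explicitly verify that $\mathcal V_a \neq \emptyset$ for every argument (including the unattacked case $\att(a)=\emptyset$, where $\mathcal V_a=\set{\emptyset}$ yields a fact), a point the paper's argument uses implicitly.
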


The following results guarantee that $\af = \af_{P_\af}$ (Theorem \ref{t:inverse-setaf-slp}) and $P = P_{\af_P}$ (Theorem \ref{t:inverse-slp-setaf}):

\begin{restatable}{theo}{inversesetafslp}\label{t:inverse-setaf-slp}
Let $\af = (\ar, \att)$ be a $\setaf$, $P_\af$ its associated $\nlp$ and $\af_{P_\af}$ the associated $\setaf$ of $P_\af$. It holds that $\af = \af_{P_\af }$.
\end{restatable}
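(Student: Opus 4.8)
The plan is to show that the two $\setaf$s $\af = (\ar, \att)$ and $\af_{P_\af} = (\ar_{P_\af}, \att_{P_\af})$ agree on arguments and on attacks. For the arguments, I would first unwind the definitions: by Definition~\ref{d:setaf-nlp} the Herbrand base of $P_\af$ is exactly $\ar$ (each $a \in \ar$ appears as the head of at least one rule, using that $\mathcal V_a \neq \emptyset$ — note the empty set vacuously satisfies the covering condition when $\att(a) = \emptyset$, giving the fact $a \leftarrow$), and by Proposition~\ref{p:slp} the program $P_\af$ is an $\rfalp$, so in particular $\HB_{P_\af} = \set{\hd(r) \mid r \in P_\af}$. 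Since $P_\af$ is atomic, every statement constructed from $P_\af$ in the sense of Definition~\ref{d:argument} is just a single rule $a \leftarrow \naf b_1, \ldots, \naf b_n$ (there are no positive body atoms to recurse on), hence $\ar_{P_\af} = \set{\Conc(s) \mid s \in \mathfrak{S}_{P_\af}} = \set{\hd(r) \mid r \in P_\af} = \HB_{P_\af} = \ar$. This settles the first coordinate.

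Next I would compute the vulnerabilities. For each $a \in \ar = \ar_{P_\af}$, the statements with conclusion $a$ are precisely the rules $a \leftarrow \naf b_1, \ldots, \naf b_n$ of $P_\af$, and such a statement has $\Vul(s) = \set{b_1, \ldots, b_n}$. By Definition~\ref{d:setaf-nlp} these bodies are exactly the sets $V \in \mathcal V_a$, so $\Vul_{P_\af}(a) = \mathcal V_a$. Now I invoke Definition~\ref{d:attack}: $(\mathcal B, a) \in \att_{P_\af}$ iff $\mathcal B$ is minimal (w.r.t.\ $\subseteq$) with the property that for each $V \in \Vul_{P_\af}(a) = \mathcal V_a$ there is $b \in \mathcal B \cap V$. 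On the other side, by the very definition of $\mathcal V_a$ in Definition~\ref{d:setaf-nlp}, each $V \in \mathcal V_a$ is a minimal hitting set of the family $\att(a)$, and conversely the family $\mathcal V_a$ is exactly the set of all minimal hitting sets of $\att(a)$. So I need the combinatorial fact: $\mathcal B$ is a minimal hitting set of $\mathcal V_a$ (the minimal hitting sets of $\att(a)$) iff $\mathcal B$ is a minimal hitting set of $\att(a)$ itself, i.e., iff $\mathcal B \in \att(a)$ (here using that $\att$ is subset-minimal by Definition~\ref{d:setaf}, so the elements of $\att(a)$ are already minimal). This gives $\att_{P_\af}(a) = \att(a)$ for every $a$, hence $\att_{P_\af} = \att$, completing the proof.

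The main obstacle is the combinatorial duality step: proving that the minimal hitting sets of the family of minimal hitting sets of $\att(a)$ are exactly the minimal hitting sets of $\att(a)$. This is a form of the Berge / blocker-of-the-blocker property for clutters (a clutter equals the blocker of its blocker), and I would either cite it or prove it directly by a short double-inclusion argument: (i) every $\mathcal B \in \att(a)$ hits every $V \in \mathcal V_a$ since $V$ by definition meets every member of $\att(a)$ including $\mathcal B$; and for minimality, if $\mathcal B' \subsetneq \mathcal B$ still hit all of $\mathcal V_a$ one derives a contradiction with minimality of $\mathcal B$ inside $\att(a)$ by exhibiting a minimal hitting set of $\att(a)$ contained in $\ar \setminus (\mathcal B \setminus \mathcal B')$-type complement; (ii) conversely, any minimal hitting set of $\mathcal V_a$ must, by an analogous argument, contain some member of $\att(a)$, and minimality forces equality. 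I would also need to handle the degenerate case $\att(a) = \emptyset$ cleanly: then $\mathcal V_a = \set{\emptyset}$, so $a \leftarrow$ is a fact of $P_\af$, $\Vul_{P_\af}(a) = \set{\emptyset}$, and no set $\mathcal B$ can hit the member $\emptyset$, so $\att_{P_\af}(a) = \emptyset = \att(a)$, consistent with $a$ being unattacked on both sides.
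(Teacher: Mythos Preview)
Your proof is correct and follows essentially the same route as the paper's: establish $\ar_{P_\af} = \ar$ via Proposition~\ref{p:slp} (the paper uses the auxiliary Lemma~\ref{l:hp-arp} where you argue directly from atomicity), identify $\Vul_{P_\af}(a) = \mathcal V_a$, and then reduce $\att_{P_\af}(a) = \att(a)$ to the hitting-set duality. The main difference is one of explicitness: the paper compresses the attack equality into a terse chain of iffs that silently uses the blocker-of-the-blocker property, whereas you correctly isolate this as the nontrivial combinatorial step and sketch a proof (your sketch of direction~(ii) is right --- if $\mathcal B$ meets every $V \in \mathcal V_a$ but contains no member of $\att(a)$, then $\ar \setminus \mathcal B$ hits every element of $\att(a)$, yielding some $V \in \mathcal V_a$ disjoint from $\mathcal B$, a contradiction). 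Your handling of the unattacked case $\att(a)=\emptyset$ is also a useful addition that the paper omits.
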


\begin{restatable}{theo}{inverseslpsetaf}\label{t:inverse-slp-setaf}
Let $P$ be an $\rfalp$, $\af_P$ its associated $\setaf$ and $P_{\af_P}$ the associated $\nlp$ of $\af_P$. It holds that $P = P_{\af_P}$.
\end{restatable}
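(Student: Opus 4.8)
The plan is to prove $P = P_{\af_P}$ by unfolding the two translations in sequence and matching rules on both sides. Since $P$ is an $\rfalp$, by Definition~\ref{d:slp} every rule of $P$ has the atomic form $c \leftarrow \naf b_1, \ldots, \naf b_n$, so it already coincides with its own statement (the recursive clause of Definition~\ref{d:argument} never triggers because there are no positive body atoms). Hence the set of statements $\mathfrak S_P$ is, up to notation, exactly $P$ itself, each statement $s$ with $\Conc(s) = c$ corresponds to a rule $c \leftarrow \naf b_1, \ldots, \naf b_n$ with $\Vul(s) = \set{b_1, \ldots, b_n}$, and $\ar_P = \set{\hd(r) \mid r \in P} = \HB_P$ (the last equality by redundancy-freeness, item~1 of Definition~\ref{d:slp}). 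Thus $\Vul_P(c) = \set{\set{b_1, \ldots, b_n} \mid c \leftarrow \naf b_1, \ldots, \naf b_n \in P}$ for each $c \in \ar_P$.

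Next I would compute $\att_P$ via Definition~\ref{d:attack} and then $\mathcal V_c$ (the minimal ``hitting sets'' of Definition~\ref{d:setaf-nlp}) for $\af_P$, and observe they describe the same collection of sets. By Definition~\ref{d:attack}, $(\mathcal B, c) \in \att_P$ iff $\mathcal B$ is a minimal set meeting every $V \in \Vul_P(c)$; by Definition~\ref{d:setaf-nlp}, $\mathcal V_c$ consists of exactly the minimal sets meeting every $\mathcal B \in \att_P(c)$. So the key combinatorial fact I need is: for a finite family $\mathcal F$ of finite sets (here $\mathcal F = \Vul_P(c)$ after discarding non-minimal members, which does not change minimal hitting sets), if $\mathcal H$ is the family of minimal hitting sets of $\mathcal F$, then the family of minimal hitting sets of $\mathcal H$ is precisely the family of $\subseteq$-minimal members of $\mathcal F$. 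This is the duality between a clutter and its blocker (the statement that blocking is an involution on clutters). Using the redundancy-freeness of $P$, the members of $\Vul_P(c)$ are already pairwise $\subseteq$-incomparable, so $\Vul_P(c)$ is itself a clutter and equals its double blocker; therefore $\mathcal V_c = \Vul_P(c)$.

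Finally I would assemble the pieces: $P_{\af_P} = \set{c \leftarrow \naf b_1, \ldots, \naf b_n \mid c \in \ar_P,\ V \in \mathcal V_c,\ V = \set{b_1, \ldots, b_n}}$ by Definition~\ref{d:setaf-nlp}, and since $\ar_P = \HB_P$ and $\mathcal V_c = \Vul_P(c) = \set{\set{b_1, \ldots, b_n} \mid c \leftarrow \naf b_1, \ldots, \naf b_n \in P}$, this set is literally $P$. One should double-check the edge cases: a fact $c \leftarrow$ in $P$ gives $\Vul_P(c) = \set{\emptyset}$, hence $\att_P(c) = \emptyset$ (the argument $c$ is unattacked), hence $\mathcal V_c = \set{\emptyset}$, recovering the fact; and conversely an atom $c$ that heads no rule cannot arise since $\HB_P = \set{\hd(r) \mid r \in P}$.

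The main obstacle is the clutter/blocker involution argument of the second paragraph: it must be carried out carefully, because it is exactly where the two minimality conditions (minimal attacking sets in Definition~\ref{d:attack} and minimal $V$ in Definition~\ref{d:setaf-nlp}) interact, and it is also where the redundancy-freeness hypothesis on $P$ is genuinely used. I expect the proof to isolate this as a lemma --- that for a clutter $\mathcal F$, applying ``take minimal hitting sets'' twice returns $\mathcal F$ --- and then the rest is bookkeeping about how statements of an atomic program coincide with its rules. Care is also needed that the subset-minimality built into the $\setaf$ attack relation (Definition~\ref{d:setaf}) is consistent with the $\att_P$ produced here, which it is, since Definition~\ref{d:attack} already yields only minimal attackers.
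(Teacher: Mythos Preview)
Your proposal is correct and follows essentially the same route as the paper. The paper, like you, first observes that for an $\rfalp$ the statements coincide with the rules (so $\ar_P = \HB_P$ and $\Vul_P(c)$ is exactly the collection of negative bodies of rules with head $c$), and then reduces the claim to the double-hitting-set involution; it simply splits that involution into two halves rather than naming it as blocker duality: the direction ``minimal hitting set of $\att_P(c)$ $\Rightarrow$ member of $\Vul_P(c)$'' is isolated as Lemma~\ref{l:inverse-slp-setaf}, while the converse direction ``$V \in \Vul_P(c)$ $\Rightarrow$ $V$ is a minimal hitting set of $\att_P(c)$'' is argued inline in the main proof, using redundancy-freeness exactly where you indicate.
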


\begin{rmk}
Minimality is crucial to ensure that the translation from an $\nlp$ to a $\setaf$ and the translation from a $\setaf$ to an $\nlp$ are each other's inverse. If the minimality requirement in Definition \ref{d:setaf} (and consequently in Definition \ref{d:attack}) were dropped, any $\setaf$ (among other combinations) in Fig. \ref{f:min-dropped} could be a possible candidate to be the associated $\setaf$ $\af_P$ of the $\rfalp$ $P$
\[
\begin{array}{lcl}
c \leftarrow \naf a, \naf c	 & & c \leftarrow \naf b, \naf c\\
a  & & b 	
\end{array}
\]

\begin{figure}[ht!]
    \centering
    \begin{subfigure}[b]{0.32\textwidth}
         \centering
\begin{tikzpicture}[>=stealth',shorten >=1pt,auto,node distance=2cm,
  thick,main node/.style={circle,fill=white!20,draw}, inner sep=3pt]

  \node[main node] (c) {$c$};
  \node[main node] (b) [xshift = 1.5cm, below of=c] {$b$};
  \node[main node] (a) [xshift = -1.5cm, below of=c] {$a$};

  \path[every node/.style={font=\sffamily\small}]
    (c) edge [->,loop above]	node {} (c);
    \connectThree[
          @edge 3=->
        ]{a}{b}{c}     
\end{tikzpicture} 
         \caption{$\af_P'$}
         \label{f:min-droppedI}
     \end{subfigure}
     \hfill
    \begin{subfigure}[b]{0.32\textwidth}
         \centering
\begin{tikzpicture}[>=stealth',shorten >=1pt,auto,node distance=2cm,
  thick,main node/.style={circle,fill=white!20,draw}, inner sep=3pt]

  \node[main node] (c) {$c$};
  \node[main node] (b) [xshift = 1.5cm, below of=c] {$b$};
  \node[main node] (a) [xshift = -1.5cm, below of=c] {$a$};

  \path[every node/.style={font=\sffamily\small}]
    (c) edge [->,loop above] node {} (c)
        edge [->,out=150,in=210,looseness=12] node[above] {} (c);  
    
    \connectThree[
          @edge 3=->
        ]{a}{b}{c}   

 \draw (a) to [out=150,in=210]               (-.6,-.29);

\end{tikzpicture} 
         \caption{$\af_P''$}
         \label{f:min-droppedII}
     \end{subfigure}
\begin{subfigure}[b]{0.32\textwidth}
         \centering
\begin{tikzpicture}[>=stealth',shorten >=1pt,auto,node distance=2cm,
  thick,main node/.style={circle,fill=white!20,draw}, inner sep=3pt]
        
  \node[main node] (c) {$c$};
  \node[main node] (b) [xshift = 1.5cm, below of=c] {$b$};
  \node[main node] (a) [xshift = -1.5cm, below of=c] {$a$};
        
  \path[every node/.style={font=\sffamily\small}]
      (c) edge [->,loop above] node {} (c)
        edge [->,out=150,in=210,looseness=12] node[above] {} (c)
        edge [->,out=30,in=-30,looseness=12] node[above] {} (c); 
    \connectThree[
          @edge 3=->
        ]{a}{b}{c}  

 \draw (a) to [out=150,in=210]               (-.6,-.29);       \draw (b) to [out=30,in=-30]               (.6,-.29);     
\end{tikzpicture} 
         \caption{$\af_P'''$}
         \label{f:min-droppedIII}
     \end{subfigure}     
  \caption{Possible $\setaf$s associated with $P$}
        \label{f:min-dropped}   
\end{figure}
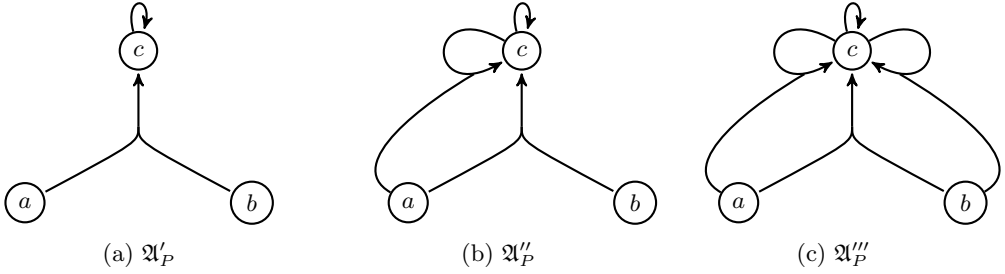

As a result, Theorem \ref{t:inverse-setaf-slp} would no longer hold, and these translations would not be each other's inverse. Notice also that the $\setaf$s in Fig. \ref{f:min-dropped} have the same complete labellings as non-minimal attacks are irrelevant and can be ignored when determining semantics based on complete labellings.    
\end{rmk}

Theorems \ref{t:inverse-setaf-slp} and \ref{t:inverse-slp-setaf} reveal that $\setaf$s and $\rfalp$s are essentially the same formalism. The equivalence between them involves their semantics and is also structural: two distinct $\setaf$s will always be translated into two distinct $\rfalp$s and vice versa. In contradistinction, Theorem \ref{t:inverse-slp-setaf} would not hold if we had replaced our translation from $\nlp$ to $\setaf$ (Definition \ref{d:nlp-setaf}) with that from $\nlp$ to $\aaf$ presented in \citep{caminada15equivalence}. Thus, the connection between $\nlp$s and $\setaf$s is more robust than that between $\nlp$s and $\aaf$s. In the forthcoming section, we will explore how expressive $\rfalp$s can be; we will ensure they are as expressive as $\nlp$s.

\section{On the Expressiveness of $\rfalp$s}\label{s:expressiveness}

 Dvorák et al. comprehensively characterised the expressiveness of $\setaf$s \citep{dvovrak2019expressive}. Now we compare the expressiveness of $\nlp$s with that of $\rfalp$s. In the previous section, we established that $\setaf$s and $\rfalp$s are essentially the same formalism. We demonstrated that from the $\setaf$ $\af_P$ associated with an $\nlp$ $P$, we can obtain $P$; and conversely, from the $\nlp$ $P_\af$ associated with a $\setaf$ $\af$, we can obtain $\af$. Here, we reveal that this connection between $\setaf$s and $\rfalp$s is even more substantial: $\rfalp$s are as expressive as $\nlp$s when considering the semantics for $\nlp$s we have exploited in this paper. With this aim in mind, we transform any $\nlp$ $P$ into an $\rfalp$ $P^*$ by resorting to a specific combination (denoted by $\mapsto_\utpm$) of some program transformations proposed by Brass and Dix \citep{brass1994disjunctive,brass1997characterizations,brass1999semantics}. Although each program transformation in $\mapsto_\utpm$ was proposed in \citep{brass1994disjunctive,brass1997characterizations,brass1999semantics}, the combination of these program transformations (as far as we know) has not been investigated yet. Then, we show that $P$ and $P^*$ share the same partial stable models. Since well-founded models, regular models, stable models, and $L$-stable models are all settled on partial stable models, it follows that both $P$ and $P^*$ also coincide under these semantics. Based on Dunne et al.'s work \citep{dunne2015characteristics}, where they define the notion of expressiveness of the semantics for $\aaf$s, we define formally expressiveness in terms of the signatures of the semantics for $\nlp$s:

\begin{mdef}[Expressiveness]\label{d:expressiveness}
Let $\mathcal P$ be a class of $\nlp$s. The signature $\Sigma^\mathcal P_{\mathit{PSM}}$ of the partial stable models associated with $\mathcal P$ is defined as
\[ \Sigma^\mathcal P_{\mathit{PSM}} = \set{ \sigma(P) \mid P \in \mathcal P}, \]
where $\sigma(P) = \set{\inter \mid \inter \textit{ is a partial stable model of } P}$ is the set of all partial stable models of $P$.

Given two classes $\mathcal P_1$ and $\mathcal P_2$ of $\nlp$s, we say that $\mathcal P_1$ and $\mathcal P_2$ have the same expressiveness for the partial stable models semantics if $\Sigma^{\mathcal P_1}_{\mathit{PSM}} = \Sigma^{\mathcal P_2}_{\mathit{PSM}}$
\end{mdef}

In other words, $\mathcal P_1$ and $\mathcal P_2$ have the same expressiveness if 
\begin{itemize}
\item For every $P_1 \in \mathcal P_1$, there exists $P_2 \in \mathcal P_2$ such that $P_1$ and $P_2$ have the same set of partial stable models. 
\item For every $P_2 \in \mathcal P_2$, there exists $P_1 \in \mathcal P_1$ such that $P_1$ and $P_2$ have the same set of partial stable models.
\end{itemize}

Similarly, we can define when $\mathcal P_1$ and $\mathcal P_2$ have the same expressiveness for the well-founded, regular, stable, and $L$-stable semantics.

As the class of $\rfalp$s is contained in the class of all $\nlp$s, to show that these classes have the same expressiveness for these semantics, it suffices to prove that for every $\nlp$, there exists an $\rfalp$ with the same set of partial stable models. We will obtain this result by resorting to a combination of program transformations:

\begin{mdef}[Program Transformation \protect\citep{brass1994disjunctive,brass1997characterizations,brass1999semantics}] A program transformation is any binary relation $\mapsto$ between $\nlp$s. By $\mapsto^*$ we mean the reflexive and transitive closure of $\mapsto$.
\end{mdef}

Thus, $P \mapsto^* P'$ means that there is a finite sequence $P = P_1 \mapsto \cdots \mapsto P_n = P'$. We are particularly interested in program transformations preserving partial stable models:

\begin{mdef}[Equivalence Transformation \protect\citep{brass1994disjunctive,brass1997characterizations,brass1999semantics}] We say a program transformation $\mapsto$ is a partial stable model equivalence transformation if for any $\nlp$s $P_1$ and $P_2$ with $P_1 \mapsto P_2$, it holds $\mathcal M$ is a partial stable model of $P_1$ iff $\mathcal M$ is a partial stable model of $P_2$.
\end{mdef}

From Definitions \ref{d:unfolding} to \ref{d:eliminination-non-minimal}, we focus on the following program transformations introduced in \citep{brass1994disjunctive,brass1997characterizations,brass1999semantics}: \emph{Unfolding} (\emph{it is also known as Generalised Principle of Partial Evaluation} (\emph{GPPE})), \emph{Elimination of Tautologies}, \emph{Positive Reduction}, and \emph{Elimination of Non-Minimal Rules}. They are sufficient for our purposes.
 
\begin{mdef}[Unfolding \protect\citep{brass1994disjunctive,brass1997characterizations,brass1999semantics}]\label{d:unfolding}
An $\nlp$ $P_2$ results from an $\nlp$ $P_1$ by unfolding ($P_1 \mapsto_U P_2$) iff there exists a rule $c \leftarrow a, a_1, \ldots, a_m, \naf b_1, \ldots, \naf b_n \in P_1$ such that 
\begin{align*}
    P_2 = &\ (P_1 - \set{c \leftarrow a, a_1, \ldots, a_m, \naf b_1, \ldots, \naf b_n})\\ 
    & \hspace{5em} \cup \{c \leftarrow a_1', \ldots, a_p', a_1, \ldots, a_m, \naf b_1', \ldots, \naf b_q', \naf b_1, \ldots, \naf b_n \mid \\
    & \hspace{6.675em} a \leftarrow a_1', \ldots, a_p', \naf b_1', \ldots, \naf b_q' \in P_1 \}.
\end{align*}
\end{mdef}

\begin{mdef}[Elimination of Tautologies \protect\citep{brass1994disjunctive,brass1997characterizations,brass1999semantics}] An $\nlp$ $P_2$ results from an $\nlp$ $P_1$ by elimination of tautologies ($P_1 \mapsto_T P_2$) iff there exists a rule $r \in P_1$ such that $\hd(r) \in \body^+(r)$ and $P_2 = P_1 - \set{r}$. 
\end{mdef}

\begin{mdef}[Positive Reduction \protect\citep{brass1994disjunctive,brass1999semantics}]  An $\nlp$ $P_2$ results from an $\nlp$ $P_1$ by positive reduction ($P_1 \mapsto_P P_2$) iff there exists a rule $c \leftarrow a_1, \ldots, a_m, \naf b, \naf b_1, \ldots, \naf b_n \in P_1$ such that $b \not\in \set{\hd(r) \mid r \in P_1}$ and 
\begin{align*}
    P_2 = &\ (P_1 - \set{c \leftarrow a_1, \ldots, a_m, \naf b, \naf b_1, \ldots, \naf b_n})\\ 
    & \hspace{4.475em} \cup \set{c \leftarrow a_1, \ldots, a_m, \naf b_1, \ldots, \naf b_n}.
\end{align*}
\end{mdef}

\begin{mdef}[Elimination of Non-Minimal Rules \protect\citep{brass1994disjunctive,brass1999semantics}]\label{d:eliminination-non-minimal}   An $\nlp$ $P_2$ results from an $\nlp$ $P_1$ by elimination of non-minimal rules ($P_1 \mapsto_M P_2$) iff there are two distinct rules $r$ and $r'$ in $P_1$ such that $\hd(r) = \hd(r')$, $\body^+(r') \subseteq \body^+(r)$, $\body^-(r') \subseteq \body^-(r)$ and $P_2 = P_1 - \set{r}$.
\end{mdef}

Now we combine these program transformations and define $\mapsto_\utpm$ as follows:

\begin{mdef}[Combined Transformation]\ 
Let $\mapsto_\utpm = \mapsto_U \cup \mapsto_T \cup \mapsto_P \cup \mapsto_M$.
\end{mdef}

We call an $\nlp$ $P$ \emph{irreducible} concerning $\mapsto$ if there is no $\nlp$ $P' \neq P$ with $P \mapsto^* P'$. Besides, we say $\mapsto$ is \emph{strongly terminating} iff every sequence of successive applications of $\mapsto$ eventually leads to an irreducible $\nlp$. As displayed in \citep{brass1998characterizations}, not every program transformation is strongly terminating. For instance, in the $\nlp$
\[
\begin{array}{rcl}
 a & \leftarrow & b\\
b & \leftarrow & a\\
c & \leftarrow & a, \naf c\\
c
\end{array}
\]
if we apply unfolding $(\mapsto_U)$ to the third rule, this rule is replaced by $c \leftarrow b, \naf c$. We can now apply unfolding again to this rule and get the original program; such an oscillation can repeat indefinitely. Thus we have a sequence of program transformations that do not terminate. However, if we restrict ourselves to fair sequences of program transformations, the termination is guaranteed:

\begin{mdef}[Fair Sequences \protect\citep{brass1998characterizations}]\label{d:confluent} 
A sequence of program transformations $P_1 \mapsto \cdots \mapsto P_n$ is fair with respect to $\mapsto$ if 

\begin{itemize}
    \item Every positive body atom occurring in $P_1$ is eventually removed in some $P_i$ with $1 < i \leq n$ (either by removing the whole rule using a suitable program transformation or by an application of $\mapsto_U$);
    \item  Every rule $r \in P_i$ such that $\hd(r) \in \body^+(r)$ is eventually removed in some $P_j$ with $i < j \leq n$ (either by applying $\mapsto_T$ or another suitable program transformation).
\end{itemize}

\end{mdef}

The sequence above of program transformations is not fair, because it does not remove the positive body atoms occurring in the program. In contrast, the sequence of program transformations given by
\[
\small{
\begin{array}{rcl}
a & \leftarrow & b\\
b & \leftarrow & a\\
c & \leftarrow & a, \naf c\\
c
\end{array}
\mapsto_U
\begin{array}{rcl}
a & \leftarrow & a\\
b & \leftarrow & a\\
c & \leftarrow & a, \naf c\\
c
\end{array}
\mapsto_T
\begin{array}{rcl}
b & \leftarrow & a\\
c & \leftarrow & a, \naf c\\
c
\end{array}
\mapsto_U
\begin{array}{rcl}
b & \leftarrow & a\\
c
\end{array}
\mapsto_U
\begin{array}{rcl}
c
\end{array}
}
\]
is not only fair but also terminates. The next result guarantees that it is not simply a coincidence:

\begin{restatable}{theo}{terminates}\label{t:terminates}
The relation $\mapsto_{\utpm}$ is strongly terminating for fair sequences of program transformations, i.e., such fair sequences always lead to irreducible programs.
\end{restatable}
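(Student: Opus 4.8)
The plan is to define a well-founded measure on $\nlp$s that strictly decreases along any application of $\mapsto_\utpm$ within a fair sequence, at least once we are past an initial phase. The key observation is that the two "problematic" transformations are $\mapsto_U$ and $\mapsto_P$: the other two, $\mapsto_T$ and $\mapsto_M$, strictly reduce the number of rules, so they can only be applied finitely often in a row. The oscillation example in the excerpt shows why $\mapsto_U$ alone is not terminating — unfolding can reintroduce positive body atoms. So the argument must exploit fairness to rule out exactly this kind of loop.

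**First I would** set up the measure in stages. Let $\mathit{Atoms}^+(P)$ be the set of atoms that occur in the positive body of some rule of $P$. Fairness guarantees that every atom in $\mathit{Atoms}^+(P_1)$ is eventually removed from all positive bodies (by $\mapsto_U$, $\mapsto_T$, $\mapsto_M$, or whole-rule deletion). The subtlety, as the oscillation shows, is that $\mapsto_U$ can put an atom \emph{back} into a positive body. To handle this I would argue: an application of $\mapsto_U$ to the rule $c \leftarrow a, a_1,\ldots,a_m,\naf b_1,\ldots,\naf b_n$ using the rules for $a$ replaces one occurrence of $a$ in a positive body by the positive bodies of the $a$-rules. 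Consider the \emph{multiset} of positive body atoms across all rules, but ordered so that we track which atoms are still "live." A cleaner route: define a rank on atoms based on the dependency structure. Since fairness forces every positive body atom (and every tautological rule) to disappear, and since the dependency graph on the finitely many atoms that ever appear is finite, I would show that each "round" of removing one designated positive body atom $a$ — eliminate all positive occurrences of $a$ by unfolding against $a$'s rules, then delete tautologies — can only feed into the bodies atoms that were \emph{already present}, so the set of distinct atoms occurring anywhere is non-increasing and eventually the set of positive body atoms strictly shrinks.

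**The main obstacle** — and the step I expect to require the most care — is precisely proving that unfolding cannot sustain an infinite fair sequence, i.e., that the measure really is well-founded despite $\mapsto_U$ potentially increasing rule count and reintroducing positive atoms. I would handle this by a careful lexicographic measure, something like $\bigl(|\mathit{Atoms}^+(P)|,\ N_T(P),\ \mu(P)\bigr)$, where $N_T(P)$ counts rules $r$ with $\hd(r)\in\body^+(r)$ and $\mu(P)$ is, say, the total number of positive body atom occurrences weighted by a fixed topological rank of the atom in the (cycle-collapsed) positive dependency graph — a rank that is fixed once and for all by the initial program, since fairness ensures no genuinely new atoms are created. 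One then checks: $\mapsto_T$ decreases $N_T$ (and doesn't increase $|\mathit{Atoms}^+|$); $\mapsto_M$ decreases rule count without increasing the first two components and without increasing $\mu$; $\mapsto_P$ decreases the number of negative body literals referencing non-head atoms while leaving positive bodies untouched — so it needs its own slot in the tuple or a separate finiteness argument; and $\mapsto_U$ either decreases $\mu$ (when it replaces a higher-ranked atom by strictly lower-ranked ones) or, in a cycle, keeps $\mu$ flat but creates a tautology, bumping $N_T$, after which fairness \emph{forces} a $\mapsto_T$ that strictly decreases $N_T$. Making this bookkeeping airtight — in particular pinning down the right notion of atom rank so that unfolding against a cyclic dependency is provably tamed by the subsequent forced tautology elimination — is the crux. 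Once the measure is in place, strong termination for fair sequences is immediate by well-foundedness, and since $\mapsto_\utpm$ has no infinite fair sequence, every fair sequence reaches an irreducible program.
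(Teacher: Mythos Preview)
Your strategy is substantially more intricate than the paper's, and the intricacy hides a real gap. The paper does not build a global well-founded measure that drops along every step of a fair sequence. Instead it argues in two phases. First, it reads the fairness condition as guaranteeing directly that each atom eventually vanishes from all positive bodies and stays gone; since only finitely many atoms are involved, there is an index $k'$ after which every program in the sequence has $\body^+(r)=\emptyset$ for every rule $r$. Second, once positive bodies are empty, neither $\mapsto_U$ nor $\mapsto_T$ can apply, and one is left with $\mapsto_M$ (which strictly decreases the number of rules) and $\mapsto_P$ (which strictly decreases the total number of negative body literals). A simple pair like (number of rules, number of negative literals) then handles the tail. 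The whole proof is a few lines.

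Your lexicographic triple $\bigl(|\mathit{Atoms}^+(P)|,\ N_T(P),\ \mu(P)\bigr)$ does not work as stated: you yourself note that $\mapsto_U$ inside a cycle ``keeps $\mu$ flat but creates a tautology, bumping $N_T$''. That means the second component can \emph{increase} under $\mapsto_U$, so the triple is not monotone along the sequence, and you cannot conclude termination by well-foundedness alone. Your patch --- ``fairness forces a subsequent $\mapsto_T$'' --- is a look-ahead argument, not a per-step decrease; to make it rigorous you would need to bundle steps into blocks and show a decrease per block, which is exactly the two-phase decomposition the paper does, only obscured. Drop the dependency-graph rank and the global measure; argue instead that fairness forces the positive bodies to empty out after finitely many steps, and then finish with the trivial termination of $\mapsto_P$ and $\mapsto_M$.
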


Theorem \ref{t:terminates} is crucial to obtain the following result:

\begin{restatable}{theo}{irreducible}\label{t:irreducible}
For any $\nlp$ $P$, there exists an irreducible $\nlp$ $P^*$ such that $P \mapsto^*_{\utpm} P^*$.
\end{restatable}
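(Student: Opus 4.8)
The plan is to derive this as an almost immediate corollary of Theorem \ref{t:terminates}, after checking one gap: Theorem \ref{t:terminates} guarantees termination only along \emph{fair} sequences, so I must first argue that, starting from an arbitrary $\nlp$ $P$, one can actually schedule the applications of $\mapsto_{\utpm}$ so that the resulting (finite or infinite) sequence is fair in the sense of Definition \ref{d:confluent}. First I would observe that $\HB_P$ is finite, hence at any stage there are only finitely many positive body atoms occurring in the current program and finitely many rules $r$ with $\hd(r) \in \body^+(r)$; a round-robin (fair) scheduler can therefore repeatedly pick, say, the least positive body atom still present and eliminate it (by $\mapsto_U$ on a rule containing it, or by $\mapsto_T$/$\mapsto_P$/$\mapsto_M$ removing the offending rule), and similarly cycle through the tautological rules. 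Any sequence produced by such a scheduler is fair by construction. If at some point no transformation in $\mapsto_{\utpm}$ applies, the sequence is finite and we have reached an irreducible program, so we are done directly.

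Otherwise the scheduler produces an infinite fair sequence $P = P_1 \mapsto_{\utpm} P_2 \mapsto_{\utpm} \cdots$, and here I invoke Theorem \ref{t:terminates}: it states precisely that $\mapsto_{\utpm}$ is strongly terminating for fair sequences, i.e. every fair sequence eventually reaches an irreducible program. This contradicts the assumption that the sequence is infinite (one cannot keep applying a transformation past an irreducible program, since irreducibility of $P'$ means there is no $P'' \neq P'$ with $P' \mapsto^*_{\utpm} P''$). Hence the fair sequence is in fact finite, ending in some $P^*$ with $P \mapsto^*_{\utpm} P^*$ and $P^*$ irreducible, which is exactly the claim.

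The one point needing a little care — and the main obstacle — is reconciling the bookkeeping in the definition of fairness with the scheduling argument: Definition \ref{d:confluent} quantifies not only over the positive body atoms of $P_1$ but also, in its second clause, over rules $r \in P_i$ with $\hd(r) \in \body^+(r)$ arising at \emph{later} stages (such rules are created by $\mapsto_U$, e.g. unfolding $a \leftarrow b$ into a rule whose head reappears in its positive body). So the scheduler must interleave ``clear all current positive body atoms'' phases with ``clear all current self-referential tautologies'' phases, and I must check that alternating these two tasks, each pursued to completion on the finitely many current offenders before moving on, yields a sequence meeting both bullet points of Definition \ref{d:confluent} simultaneously. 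Since each task involves only finitely many items at each visit and each item, once targeted, is removed in one step (the whole rule disappears, or the atom disappears from that rule via $\mapsto_U$), a standard dovetailing of the two tasks does the job; I would spell this interleaving out explicitly and then conclude as above.
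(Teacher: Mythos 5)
Your proposal is correct and follows essentially the same route as the paper: both construct an explicit fair scheduling of the transformations (the paper's policy is to apply $\mapsto_T$ to a rule before $\mapsto_U$ and to finish eliminating all occurrences of one positive body atom before moving to the next, which plays the role of your dovetailing) and then invoke Theorem \ref{t:terminates} to conclude that the resulting fair sequence reaches an irreducible program. Your extra care about the second fairness clause for tautologies created at later stages is a reasonable elaboration of a point the paper handles implicitly via its ``$\mapsto_T$ before $\mapsto_U$'' rule.
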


This means that from an $\nlp$ $P$, it is always possible to obtain an irreducible $\nlp$ $P^*$ after successive applications of $\mapsto_{\utpm}$. Indeed, $P^*$ is an $\rfalp$:

\begin{restatable}{theo}{utpmsetaf}\label{t:utpmf-setaf}
Let $P$ be an $\nlp$ and $P^*$ be an $\nlp$ obtained after applying repeatedly the program transformation $\mapsto_\utpm$ until no further transformation is possible, i.e., $P \mapsto^*_\utpm P^*$ and $P^*$ is irreducible. Then $P^*$ is an $\rfalp$.
\end{restatable}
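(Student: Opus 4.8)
The plan is to show that an irreducible program $P^*$ (with respect to $\mapsto_\utpm$) necessarily satisfies both defining conditions of an $\rfalp$ from Definition \ref{d:slp}: it is \emph{atomic} and it is \emph{redundancy-free}. I would argue the contrapositive for each failure mode, exhibiting an applicable transformation whenever one of the conditions is violated, which contradicts irreducibility. By Theorem \ref{t:irreducible} such a $P^*$ exists, so this gives the claim.

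First I would handle \emph{atomicity}. Suppose some rule $r: c \leftarrow a_1,\ldots,a_m,\naf b_1,\ldots,\naf b_n \in P^*$ has $m \geq 1$, i.e.\ a nonempty positive body. Pick the positive body atom $a_1$. There are two sub-cases. If $a_1$ is the head of no rule of $P^*$, then $a_1 \notin \set{\hd(r) \mid r \in P^*}$; but here I need care, since positive reduction $\mapsto_P$ only removes a \emph{negative} literal whose atom has no defining rule. The honest observation is that if $a_1$ has no defining rule, unfolding $\mapsto_U$ applied to $r$ on the atom $a_1$ replaces $r$ by the empty set of rules (the union over defining rules of $a_1$ is empty), so $P^*$ is not irreducible. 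If instead $a_1$ does have at least one defining rule, then $\mapsto_U$ on $r$ over $a_1$ produces a strictly different program (the old rule $r$ is gone), again contradicting irreducibility — unless the unfolding leaves $P^*$ literally unchanged, which can only happen in degenerate situations I would need to rule out (e.g.\ the unique defining rule of $a_1$ is $a_1 \leftarrow a_1$, which would be removable by $\mapsto_T$; or unfolding reproduces $r$ itself, i.e.\ $a_1 \leftarrow a_1,\ldots$, which is a tautology removable by $\mapsto_T$). In all these residual cases a transformation is applicable, so $P^*$ cannot be irreducible. Hence every rule of $P^*$ has empty positive body, i.e.\ $P^*$ is atomic.

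Next, \emph{redundancy-freeness}, which by Definition \ref{d:slp} has two parts. For $\HB_{P^*} = \set{\hd(r) \mid r \in P^*}$: since $P^*$ is atomic, every body atom of $P^*$ occurs negatively, say as $\naf b$. If such a $b$ is not the head of any rule, then $\mapsto_P$ is applicable to that rule (dropping $\naf b$), contradicting irreducibility; so every atom appearing anywhere in $P^*$ is a head, giving the Herbrand-base condition. For the second part — no rule $c \leftarrow \naf c_1,\ldots,\naf c_{n'}$ with $\set{c_1,\ldots,c_{n'}} \subsetneq \set{b_1,\ldots,b_n}$ for another rule $c \leftarrow \naf b_1,\ldots,\naf b_n$ — observe that since $P^*$ is atomic, $\body^+$ of every rule is empty, so the hypotheses of $\mapsto_M$ (elimination of non-minimal rules) reduce exactly to: two distinct rules with the same head and $\body^-(r') \subseteq \body^-(r)$. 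If such a pair existed, $\mapsto_M$ would apply, contradicting irreducibility. (I should also dispose of the boundary case where $\body^-(r') = \body^-(r)$ with $r \neq r'$: then the two rules are syntactically identical, so they are not "distinct" as rules in the set-theoretic sense, and the proper-subset requirement in Definition \ref{d:slp} is not triggered — so nothing to prove there, though I would state it explicitly.)

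The main obstacle I anticipate is the atomicity step, specifically making rigorous that $\mapsto_U$ really does change an irreducible atomic-violating program, carefully enumerating the degenerate configurations (self-loops in the positive dependency, unfolding that regenerates a tautology, unfolding over an atom with no defining rule) and verifying that in each one \emph{some} transformation in $\set{\mapsto_U,\mapsto_T,\mapsto_P,\mapsto_M}$ is applicable. The cleanest way to organise this may be to appeal to the structure used in Theorem \ref{t:terminates}: in an irreducible program no $\mapsto_U$, $\mapsto_T$, $\mapsto_P$, or $\mapsto_M$ step is possible by definition, so it suffices to show "$P$ atomic and redundancy-free" is equivalent to "none of the four transformations applies to $P$," which is a finite case check on the syntactic preconditions of Definitions \ref{d:unfolding}--\ref{d:eliminination-non-minimal}. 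Everything else is bookkeeping.
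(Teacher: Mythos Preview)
Your proposal is correct and takes essentially the same route as the paper: argue by contraposition that each way $P^*$ could fail the $\rfalp$ conditions makes one of $\mapsto_U$, $\mapsto_T$, $\mapsto_P$, $\mapsto_M$ applicable, contradicting irreducibility. The paper's version is terser and does not spell out the degenerate unfolding case you flag (unfolding returning $P^*$ unchanged), but your resolution---that any such fixed point forces a tautological rule for the unfolded atom, to which $\mapsto_T$ then applies---is exactly what closes that gap.
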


From Theorems \ref{t:terminates} and  \ref{t:utpmf-setaf}, we can infer that for fair sequences, after applying repeatedly $\mapsto_\utpm$, we will eventually produce an $\rfalp$. In fact, every $\rfalp$ is irreducible:

\begin{restatable}{theo}{setafutpm}\label{t:setaf-utpmf}
Let $P$ be an $\rfalp$. Then $P$ is irreducible with respect to $\mapsto_\utpm$.
\end{restatable}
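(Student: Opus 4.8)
The goal is to show that an $\rfalp$ $P$ admits no applicable transformation in $\mapsto_\utpm$, i.e., that none of $\mapsto_U$, $\mapsto_T$, $\mapsto_P$, $\mapsto_M$ can fire. The plan is to go through the four transformations one at a time and use the two defining conditions of an $\rfalp$ (atomicity and redundancy-freeness) to rule each one out. Recall that an $\rfalp$ has every rule of the form $c \leftarrow \naf b_1, \ldots, \naf b_n$, so $\body^+(r) = \emptyset$ for every $r \in P$, and moreover $\HB_P = \set{\hd(r) \mid r \in P}$ together with the non-subsumption condition on negative bodies.

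First I would dispatch $\mapsto_U$ and $\mapsto_T$, which are immediate from atomicity. The transformation $\mapsto_U$ requires a rule containing a positive body atom $a$; since every rule of $P$ has empty positive body, no such rule exists, so $\mapsto_U$ cannot be applied. Likewise $\mapsto_T$ requires a rule $r$ with $\hd(r) \in \body^+(r)$, which is impossible when $\body^+(r) = \emptyset$. Next I would handle $\mapsto_P$: it requires a rule $c \leftarrow a_1, \ldots, a_m, \naf b, \naf b_1, \ldots, \naf b_n$ with $b \notin \set{\hd(r) \mid r \in P}$. Again the $a_i$ part is vacuous, but the real point is the condition $b \notin \set{\hd(r) \mid r \in P}$: since $b$ is a negative body atom it lies in $\HB_P$, and by redundancy-freeness $\HB_P = \set{\hd(r) \mid r \in P}$, so every atom appearing anywhere in $P$ — in particular every $\naf b$ — has $b$ occurring as the head of some rule. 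Hence the side condition of $\mapsto_P$ fails for every rule, and $\mapsto_P$ cannot be applied.

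The remaining case, $\mapsto_M$, is where the redundancy-free condition does the essential work, and I expect this to be the main (though still short) obstacle to state cleanly. Suppose for contradiction $\mapsto_M$ applies: there are distinct rules $r, r' \in P$ with $\hd(r) = \hd(r') = c$, $\body^+(r') \subseteq \body^+(r)$, and $\body^-(r') \subseteq \body^-(r)$. Write $r: c \leftarrow \naf b_1, \ldots, \naf b_n$ and $r': c \leftarrow \naf c_1, \ldots, \naf c_{n'}$ (using atomicity, so both positive bodies are empty and $\body^+(r') \subseteq \body^+(r)$ is automatic). Then $\set{c_1, \ldots, c_{n'}} \subseteq \set{b_1, \ldots, b_n}$. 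If the inclusion is strict, this directly contradicts the redundancy-free condition in Definition \ref{d:slp}, which forbids a rule $c \leftarrow \naf c_1, \ldots, \naf c_{n'}$ with $\set{c_1, \ldots, c_{n'}} \subset \set{b_1, \ldots, b_n}$ whenever $c \leftarrow \naf b_1, \ldots, \naf b_n \in P$. If the inclusion is an equality, then $r$ and $r'$ have the same head and the same (set of) negative body literals, hence are the same rule — an $\nlp$ being a set of rules — contradicting that $r$ and $r'$ are distinct. Either way we reach a contradiction, so $\mapsto_M$ cannot be applied. Since none of the four constituent transformations of $\mapsto_\utpm$ can fire, no $P' \neq P$ satisfies $P \mapsto_\utpm P'$, and a fortiori (taking a length-one sequence) $P$ is irreducible with respect to $\mapsto_\utpm$. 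The only subtlety worth spelling out is the equality-subcase of $\mapsto_M$, i.e., being careful that the definition of $\mapsto_M$ with $r \neq r'$ genuinely excludes two syntactically identical rules because programs are sets; everything else is a direct unfolding of the definitions.
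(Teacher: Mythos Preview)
Your proposal is correct and follows essentially the same approach as the paper: go through each of the four transformations and show its precondition fails, using atomicity for $\mapsto_U$ and $\mapsto_T$, the condition $\HB_P = \set{\hd(r) \mid r \in P}$ for $\mapsto_P$, and the non-subsumption clause for $\mapsto_M$. If anything, your treatment of $\mapsto_M$ is slightly more careful than the paper's, which jumps directly to the strict inclusion $\body^-(r') \subset \body^-(r)$ without spelling out (as you do) that the equality case is excluded because programs are sets of rules.
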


Theorems \ref{t:irreducible} and \ref{t:utpmf-setaf} guarantee that every $\nlp$ $P$ can be transformed into an $\rfalp$ $P^*$ by applying $\mapsto_\utpm$ a finite number of times. It remains to show that $P$ and $P^*$ share the same partial stable models (and consequently, the same well-founded, regular, stable, and $L$-stable models). Before, however, note that $\mapsto_\utpm$ does not introduce new atoms; instead, it can eliminate the occurrence of existing atoms in an $\nlp$. For simplicity in notation, we assume throughout the rest of this section that $\HB_P = \HB_{P'}$ whenever $P \mapsto_\utpm^* P'$.   Next, we recall that these program transformations preserve the least models of positive programs:

\begin{lem}[\protect \citep{brass1995disjunctive,brass1997characterizations}]\label{l:definitex}
Let $P_1$ and $P_2$ be positive programs such that $P_1 \mapsto_x P_2$, in which $x \in \set{U,T,P,M}$. It holds $\mathcal M$ is the least model of $P_1$ iff $\mathcal M$ is the least model of $P_2$.
\end{lem}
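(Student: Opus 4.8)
\emph{Proof idea.} The plan is to reduce the claim to the invariance, under each of the four transformations, of two finite-tree descriptions of the least model. Since $P_1$ and $P_2$ are positive, their least models are the least fixpoints $\Psi^{\uparrow\omega}_{P_1}$ and $\Psi^{\uparrow\omega}_{P_2}$. Unfolding the $\Psi$-iteration, for a positive program $P$ I would record: (i) an atom $c$ is \emph{true} in $\Psi^{\uparrow\omega}_{P}$ iff there is a finite tree rooted at $c$ in which every node $h$ is justified by some rule $h \leftarrow \vec d \in P$ with $\mathbf u \notin \vec d$, its children being exactly the atoms of $\vec d$ (so leaves are justified by facts); and (ii) $c$ is \emph{not false} in $\Psi^{\uparrow\omega}_{P}$ iff there is such a tree in which justifying rules may additionally contain $\mathbf u$, those occurrences being dropped when forming the children. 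Writing $T^*_P$ for the set from (i) and $N^*_P$ for the set from (ii), one has $\Psi^{\uparrow\omega}_{P} = \pair{T^*_P, \HB_P \setminus N^*_P}$; hence ``$\mathcal M$ is the least model of $P$'' is equivalent to $\mathcal M = \pair{T^*_P, \HB_P \setminus N^*_P}$, and it suffices to show $T^*_{P_1}=T^*_{P_2}$ and $N^*_{P_1}=N^*_{P_2}$ whenever $P_1 \mapsto_x P_2$.

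The verification goes transformation by transformation, and since the two tree notions differ only in whether $\mathbf u$ is permitted, the same manipulation settles both at once. The case $x=P$ is vacuous: a positive program contains no rule with a negative body literal (the special constant $\mathbf u$ is an atom, not a negative literal), so $\mapsto_P$ never applies. For $x=T$ and $x=M$ we have $P_2 \subseteq P_1$, so every $P_2$-tree is already a $P_1$-tree; conversely, starting from a $P_1$-tree I would eliminate every node justified by the removed rule $r$. If $\hd(r)\in\body^+(r)$ (the case of $\mapsto_T$), such a node has a child labelled $\hd(r)$ that carries its own subtree, which can be spliced in place of the whole subtree; if $r$ is non-minimal with witness $r'$ (the case of $\mapsto_M$), re-justify the node by $r'$ and discard the subtrees below the children in $\body^+(r)\setminus\body^+(r')$. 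An induction on tree size, always treating an outermost offending node, turns the $P_1$-tree into a $P_2$-tree with the same root, so $T^*$ and $N^*$ are unchanged.

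For $x=U$ only the rules with head $c$ change, and I would set up a root-preserving contraction/expansion correspondence between trees. In a $P_1$-tree, a node $v$ justified by $c \leftarrow a, a_1,\dots,a_m \in P_1$ whose $a$-child is justified at its root by $a \leftarrow a_1',\dots,a_p' \in P_1$ is \emph{contracted} to a single node justified by the $P_2$-rule $c \leftarrow a_1',\dots,a_p',a_1,\dots,a_m$, whose children are the subtrees that hung below $a_1',\dots,a_p'$ together with those for $a_1,\dots,a_m$; conversely, every use of one of the new rules in a $P_2$-tree is \emph{expanded} back into a two-node configuration. Both directions preserve the root, so the sets of trees — and therefore $T^*$ and $N^*$ — coincide for $P_1$ and $P_2$. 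Assembling the four cases yields the lemma (this merely recasts the result of Brass and Dix in the tree form convenient here).

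I expect the one genuine subtlety to be the unfolding case in its self-referential corner: when $c$ occurs in the body of the rule being unfolded, or in one of the rules $a \leftarrow a_1',\dots,a_p'$ used to unfold it, the naive contraction/expansion can fail to terminate. The remedy is to run the argument on \emph{minimal} trees only and to observe — by exactly the splicing used for $\mapsto_T$ — that a node justified by such a self-referential rule can always be deleted, so minimal trees never contain one. A second, routine point is that $\mapsto_x$ may remove atoms from bodies or eliminate rules; one should therefore pin down the common Herbrand base (as the paper does immediately after this lemma), after which the identity $\mathcal M = \pair{T^*_P, \HB_P \setminus N^*_P}$ transfers verbatim from $P_1$ to $P_2$.
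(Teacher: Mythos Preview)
The paper does not prove this lemma: it is quoted from \citep{brass1995disjunctive,brass1997characterizations} and invoked as a black box in the proofs of Theorems~\ref{t:psm-T} and~\ref{t:psm-M}. There is no paper proof to compare against; your proposal is an independent reconstruction.

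That reconstruction is sound. Your derivation-tree characterisation of the true and not-false parts of $\Psi^{\uparrow\omega}_P$ is correct --- it is the purely positive instance of Lemma~\ref{l:statement-reduct}, with the constant $\mathbf u$ acting as an always-undefined leaf --- and the tree-surgery arguments you sketch for $\mapsto_T$, $\mapsto_M$ and $\mapsto_U$ are the standard ones. Your observation that $\mapsto_P$ is vacuous on positive programs is right. The self-referential corner you flag for unfolding is real and your minimal-tree remedy is adequate; an equivalent clean alternative is to induct on the number of occurrences of the removed rule in the tree and always contract an \emph{innermost} such occurrence, which guarantees the $a$-child is justified by a surviving rule even when $a=c$.
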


In the sequel, we aim to extend Lemma \ref{l:definitex} to $\nlp$s. Notice, however, that we already have the result for the program transformation $\mapsto_U$:

\begin{theo}[\protect \citep{aravindan1995correctness}]\label{t:psm-U}
Let $P_1$ and $P_2$ be $\nlp$s such that $P_1 \mapsto_U P_2$. It holds $\mathcal M$ is a partial stable model of $P_1$ iff $\mathcal M$ is a partial stable model of $P_2$.
\end{theo}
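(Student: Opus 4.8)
By Definition~\ref{d:nlp-semantics}, an interpretation $\inter$ is a partial stable model of an $\nlp$ $P$ exactly when $\Omega_P(\inter)=\inter$, where $\Omega_P(\inter)$ is the least 3-valued model of the reduct $P/\inter$. Since unfolding introduces no new atoms, we may assume $\HB_{P_1}=\HB_{P_2}$, so it suffices to prove the stronger statement that $\Omega_{P_1}(\inter)=\Omega_{P_2}(\inter)$ for \emph{every} interpretation $\inter$ of $P_1$; comparing fixpoints then yields the theorem. Hence the whole argument reduces to the following claim: for each fixed $\inter=\pair{T,F}$, the (positive) programs $P_1/\inter$ and $P_2/\inter$ have the same least 3-valued model.

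To prove this, fix $\inter=\pair{T,F}$ and let $r$ be the rule $c\leftarrow a,a_1,\ldots,a_m,\naf b_1,\ldots,\naf b_n$ of $P_1$ that is unfolded on the atom $a$, so that $P_2=(P_1-\set{r})\cup N$, where $N$ collects, for each rule $\rho$ of the form $a\leftarrow a_1',\ldots,a_p',\naf b_1',\ldots,\naf b_q'$ in $P_1$, the rule $\nu_\rho$ given by $c\leftarrow a_1',\ldots,a_p',a_1,\ldots,a_m,\naf b_1',\ldots,\naf b_q',\naf b_1,\ldots,\naf b_n$. The crux is that the reduct operation acts rule by rule and treats the negative literals $\naf b_j$ (and $\naf b_k'$) identically wherever they appear. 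If $b_j\in T$ for some $j$, then step~(1) of the reduct deletes $r$, and since every rule of $N$ also contains $\naf b_j$ it deletes all of $N$; hence $P_1/\inter=(P_1-\set{r})/\inter=P_2/\inter$ and there is nothing to prove. Otherwise no $b_j$ lies in $T$: then $r$ survives the reduct as a positive rule $r^{\inter}$ of the form $c\leftarrow a,a_1,\ldots,a_m,\un,\ldots,\un$, and for each $\rho$ with head $a$ one checks that $\nu_\rho$ survives the reduct iff $\rho$ does (both conditions amount to ``$b_k'\notin T$ for all $k$''), in which case $\nu_\rho^{\inter}$ is precisely the positive resolvent of $r^{\inter}$ with $\rho^{\inter}$ on $a$. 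Thus $P_2/\inter$ is obtained from $P_1/\inter$ by unfolding $r^{\inter}$ on $a$ in the positive-program sense of Definition~\ref{d:unfolding} --- replacing $r^{\inter}$ by its resolvents with the clauses for $a$ (and simply deleting $r^{\inter}$ when $a$ has no clause), possibly also retaining a redundant copy of $r^{\inter}$ if some rule of $P_1$ other than $r$ happens to reduce to $r^{\inter}$ too. Each of these modifications preserves the least 3-valued model of a positive program: the $\mapsto_U$ step does so by Lemma~\ref{l:definitex}, and keeping a rule all of whose resolvents on $a$ are already present changes nothing (the kept rule fires no new atom, and it blocks no atom from becoming false that was not already blocked). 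Therefore $P_1/\inter$ and $P_2/\inter$ have the same least model.

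Putting the two cases together gives $\Omega_{P_1}(\inter)=\Omega_{P_2}(\inter)$ for all $\inter$, so $\mathcal M$ is a partial stable model of $P_1$ iff $\Omega_{P_1}(\mathcal M)=\mathcal M$ iff $\Omega_{P_2}(\mathcal M)=\mathcal M$ iff $\mathcal M$ is a partial stable model of $P_2$, as required. The point I expect to be delicate is exactly the ``commutation'' step in the second case: that taking the reduct and then unfolding $r^{\inter}$ reproduces $P_2/\inter$ up to redundant rules. One must track that the negative bodies $\naf b_j$ and $\naf b_k'$ are processed in the same way on both sides, that replacing undefined negative literals by $\un$ does not disturb the resolvent correspondence, and --- the genuinely fiddly case --- that when several rules of $P_1$ share a common reduct the surplus copies surviving in $P_2/\inter$ are harmless, which relies on the small auxiliary observation above about redundant rules in positive programs rather than on a bare application of Lemma~\ref{l:definitex}.
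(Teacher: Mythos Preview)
The paper does not supply its own proof of this theorem: it is stated with a citation to \citep{aravindan1995correctness} and is used as a black box alongside Lemma~\ref{l:definitex}. So there is no ``paper's approach'' to compare against; your proposal is an independent proof.

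On its merits, your argument is sound and follows the natural line: show that for every interpretation $\inter$ the reducts $P_1/\inter$ and $P_2/\inter$ have the same least 3-valued model, which immediately gives $\Omega_{P_1}=\Omega_{P_2}$ and hence the same fixpoints. Your case split on whether some $b_j\in T$ is correct, and in the non-trivial case the key observation --- that taking the reduct commutes with unfolding up to the possible presence of a surviving copy of $r^{\inter}$ coming from another rule --- is exactly right. The verification that $\nu_\rho^{\inter}$ coincides with the positive resolvent of $r^{\inter}$ and $\rho^{\inter}$ goes through because bodies are sets (so multiple occurrences of $\un$ collapse) and because the $\naf b_j$ and $\naf b_k'$ are treated identically on both sides of the construction. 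Your handling of the ``fiddly case'' is a bit informal but easily tightened: if $P_2/\inter$ contains an extra copy of $r^{\inter}$ beyond what one step of $\mapsto_U$ on $P_1/\inter$ produces, then applying $\mapsto_U$ once more to that copy (using the very same clauses for $a$) yields a program contained in the first result, and a second appeal to Lemma~\ref{l:definitex} closes the gap cleanly, without the separate hand argument about redundant rules.
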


It remains to guarantee the result for the program transformation $\mapsto_T$, $\mapsto_P$ and $\mapsto_M$:

\begin{restatable}{theo}{psmT}\label{t:psm-T}
Let $P_1$ and $P_2$ be $\nlp$s such that $P_1 \mapsto_T P_2$. It holds $\mathcal M$ is a partial stable model of $P_1$ iff $\mathcal M$ is a partial stable model of $P_2$.
\end{restatable}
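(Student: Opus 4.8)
The plan is to show that elimination of a tautology $r: c \leftarrow c, a_1, \ldots, a_m, \naf b_1, \ldots, \naf b_n$ (with $c \in \body^+(r)$) from $P_1$ to obtain $P_2 = P_1 - \set{r}$ does not change the set of partial stable models. Since $\inter = \pair{T,F}$ is a partial stable model of $P_i$ iff $\Omega_{P_i}(\inter) = \inter$, and $\Omega_{P_i}(\inter)$ is the least $3$-valued model of the reduct $P_i/\inter$ obtained via the $\Psi$ operator, it suffices to prove that $\Omega_{P_1}(\inter) = \Omega_{P_2}(\inter)$ for \emph{every} interpretation $\inter$. Indeed, once that equality holds for all $\inter$, the fixpoint conditions $\Omega_{P_1}(\inter) = \inter$ and $\Omega_{P_2}(\inter) = \inter$ are literally the same condition, and the theorem follows immediately.

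First I would analyze how the reduct interacts with the removed rule. Fix $\inter = \pair{T,F}$. The reduct $P_2/\inter$ is obtained from $P_1/\inter$ by deleting exactly the (reduct image of the) rule $r$ — more precisely, either $r$ is deleted from both reducts for the same reason (some $b_j \in T$), in which case $P_1/\inter = P_2/\inter$ and we are done trivially; or $r$ survives into $P_1/\inter$ as a rule $r'$ of the form $c \leftarrow c, a_1, \ldots, a_m, (\text{some }\un\text{'s})$, which is a positive-program tautology (its head still appears in its positive body). So the real content is: removing a single rule whose head lies in its own positive body from a positive program does not change the least $3$-valued model. This is essentially Lemma \ref{l:definitex} for the case $x = T$, applied to $P_1/\inter \mapsto_T P_2/\inter$; but to be safe I would also give the direct argument via the $\Psi$ operator.

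The direct argument: let $Q_1 = P_1/\inter$, $Q_2 = P_2/\inter = Q_1 - \set{r'}$ where $r': c \leftarrow c, \ldots$ I claim $\Psi^{\uparrow\, i}_{Q_1} = \Psi^{\uparrow\, i}_{Q_2}$ for all $i$, by induction on $i$. The base case $i=0$ is $\pair{\emptyset, \HB}$ for both. For the inductive step, write $\Psi^{\uparrow\, i}_{Q_1} = \pair{T_i, F_i}$; the only way $r'$ could affect $\Psi_{Q_1}(\pair{T_i,F_i})$ versus $\Psi_{Q_2}(\pair{T_i,F_i})$ concerns the atom $c$. For the ``true'' part: $r'$ can only contribute $c$ to $T_{i+1}$ if all its positive body atoms, including $c$ itself, are in $T_i$; but then $c \in T_i$ already, and monotonicity of $\Psi$ along the iteration (or: any rule already present whose firing is witnessed) shows $c \in T_{i+1}$ regardless, so dropping $r'$ loses nothing. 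For the ``false'' part: $c \in F_{i+1}$ requires every rule for $c$ to have a false body atom under $F_i$; here $r'$ itself, if $c \in F_i$, already has $c$ as such a false body atom, so the clause for $r'$ in the $F'$-condition is automatically satisfied whenever it could matter, and hence removing $r'$ does not enlarge the set of $c$ forced false either — and if $c \notin F_i$ the question of $c \in F_{i+1}$ fails for $Q_1$ via $r'$ but the same atom-free analysis of the remaining rules governs both. The one subtlety to nail down is the interplay: one must check $c \in F_{i+1}^{Q_1} \iff c \in F_{i+1}^{Q_2}$ carefully, using that $r'$'s contribution to the ``$\forall$ rule, $\exists$ false atom'' condition is vacuously fine exactly when $c \in F_i$, which is consistent with the induction hypothesis $F_i^{Q_1} = F_i^{Q_2}$. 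Passing to the limit $\omega$ gives $\Omega_{P_1}(\inter) = \Omega_{P_2}(\inter)$.

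The main obstacle I anticipate is precisely that $F'$-side bookkeeping: one has to argue that a tautological rule $r'$ never \emph{prevents} an atom from becoming false, i.e., that if all \emph{other} rules for $c$ have a false body atom under $F_i$ but $r'$ does not, then necessarily $c \notin F_i$, so $r'$ genuinely has a false body atom ($c$ itself), contradiction — hence the situation cannot arise, and the two $F$-sets stay in lockstep. Once this small case distinction is handled cleanly, invoking monotonicity of $\Psi_P$ (standard, from \citep{przymusinski90well-founded}) and taking the least fixed point closes the proof. As a sanity anchor, I would remark that this is the reduct-level instantiation of Lemma \ref{l:definitex} with $x=T$, so no genuinely new machinery is needed — only the observation that $P_1 \mapsto_T P_2$ at the program level induces $P_1/\inter \mapsto_T P_2/\inter$ or $P_1/\inter = P_2/\inter$ at the reduct level, for every $\inter$.
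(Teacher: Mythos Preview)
Your approach is correct and matches the paper's proof exactly: case-split on whether some $b_j \in T$ kills $r$ in the reduct (then $P_1/\inter = P_2/\inter$, trivial) or not (then $P_1/\inter \mapsto_T P_2/\inter$ between positive programs, and Lemma~\ref{l:definitex} with $x=T$ applies). One small glitch in your supplementary direct $\Psi$-argument: from ``$r'$ has no false body atom under $F_i$'' you correctly infer $c \notin F_i$, but the next clause ``so $r'$ genuinely has a false body atom ($c$ itself), contradiction'' is the opposite of what you just established; the clean repair is to use the monotonicity you already cite --- the $F$-sequence is decreasing, so $c \in F_{i+1}^{Q_2}$ forces $c \in F_i^{Q_2} = F_i$, whence $r'$ \emph{does} have the false body atom $c$ and $c \in F_{i+1}^{Q_1}$ as well.
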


\begin{restatable}{theo}{psmP}\label{t:psm-P}
Let $P_1$ and $P_2$ be $\nlp$s such that $P_1 \mapsto_P P_2$. It holds $\mathcal M$ is a partial stable model of $P_1$ iff $\mathcal M$ is a partial stable model of $P_2$.
\end{restatable}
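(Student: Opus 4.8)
The plan is to reduce the claim to two simple facts: first, that the atom $b$ whose negative literal is removed by positive reduction is false in every partial stable model of both $P_1$ and $P_2$; and second, that once an interpretation makes $b$ false, the reducts of $P_1$ and $P_2$ with respect to it coincide. Writing $r_1 : c \leftarrow a_1, \ldots, a_m, \naf b, \naf b_1, \ldots, \naf b_n$ for the affected rule of $P_1$ and $r_2 : c \leftarrow a_1, \ldots, a_m, \naf b_1, \ldots, \naf b_n$ for its replacement, so that $P_2 = (P_1 \setminus \set{r_1}) \cup \set{r_2}$, I would first observe that $\hd(r_1) = \hd(r_2)$ and all other rules are untouched, whence $\set{\hd(r) \mid r \in P_1} = \set{\hd(r) \mid r \in P_2}$; in particular the positive-reduction hypothesis $b \notin \set{\hd(r) \mid r \in P_1}$ also gives $b \notin \set{\hd(r) \mid r \in P_2}$. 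Since forming a reduct never introduces new rule heads, $b$ is the head of no rule of $P_1/\inter$ or $P_2/\inter$ for any interpretation $\inter$, and a routine induction on the iteration index $k$ of $\Psi$ (with a vacuous successor step, since no rule has head $b$) shows that $b$ lies in the false part of $\Psi^{\uparrow\ k}_{P_i/\inter}$ for all $k$, hence in the false part of $\Omega_{P_i}(\inter)$, for $i \in \set{1,2}$; alternatively this follows from Corollary \ref{c:statement-reduct}, as no statement of $P_i$ has conclusion $b$. Thus any partial stable model $\mathcal M = \pair{T, F}$ of $P_1$ or of $P_2$ must have $b \in F$, because $\mathcal M = \Omega_{P_i}(\mathcal M)$.

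Next I would fix an interpretation $\inter = \pair{T, F}$ with $b \in F$ and check that $P_1/\inter = P_2/\inter$ by comparing the two reducts rule by rule: all rules other than $r_1$ and $r_2$ are common to $P_1$ and $P_2$ and yield identical reduct images, while for $r_1$ and $r_2$ one uses that $b \in F$ implies $b \notin T$, so step~1 of the reduct deletes $r_1$ from $P_1$ exactly when it deletes $r_2$ from $P_2$ (namely when some $b_j \in T$), and if neither is deleted, step~2 strips $\naf b$ from $r_1$ (since $b \in F$) and strips the same literals $\naf b_j$ from both, after which step~3 replaces the surviving negative literals by $\un$ in the same way; hence the images of $r_1$ and $r_2$ coincide and $\Omega_{P_1}(\inter) = \Omega_{P_2}(\inter)$. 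Combining, if $\mathcal M$ is a partial stable model of $P_1$ then $b \in F$, so $\Omega_{P_2}(\mathcal M) = \Omega_{P_1}(\mathcal M) = \mathcal M$ and $\mathcal M$ is a partial stable model of $P_2$; the converse is symmetric.

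I expect the proof to be almost entirely bookkeeping, with no substantive obstacle. The two points that need genuine care are establishing that $b$ is false in every partial stable model --- which is what licenses the restriction to interpretations with $b \in F$ --- and the step-by-step verification that the reduct construction really outputs the same rule from $r_1$ and from $r_2$; the latter depends on having simultaneously $b \notin T$ (so $r_1$ is not discarded for a spurious reason in step~1) and $b \in F$ (so $\naf b$ is deleted in step~2 rather than becoming $\un$ in step~3).
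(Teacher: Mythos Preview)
Your proof is correct and follows essentially the same approach as the paper: both hinge on the observation that $b$, lacking any defining rule, must be false in every partial stable model, and that once $b \in F$ the reducts of $P_1$ and $P_2$ coincide. The paper organises this as a case split on an arbitrary interpretation (either $b \in F$ or some $b_j \in T$, giving equal reducts; otherwise $\mathcal M$ fails to be a fixed point for either program), whereas you first restrict to partial stable models to secure $b \in F$ and then compare reducts---a slightly more streamlined presentation of the same idea.
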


\begin{restatable}{theo}{psmM}\label{t:psm-M}
Let $P_1$ and $P_2$ be $\nlp$s such that $P_1 \mapsto_M P_2$. It holds $\mathcal M$ is a partial stable model of $P_1$ iff $\mathcal M$ is a partial stable model of $P_2$.
\end{restatable}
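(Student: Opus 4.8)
The plan is to reduce the claim to the already-established behaviour of $\mapsto_M$ on positive programs, namely Lemma~\ref{l:definitex}. Fix an interpretation $\mathcal M = \pair{T,F}$ over the common Herbrand base (recall the standing convention $\HB_{P_1} = \HB_{P_2}$). It suffices to show that the reducts $P_1/\mathcal M$ and $P_2/\mathcal M$ have the same least $3$-valued model, i.e.\ $\Omega_{P_1}(\mathcal M) = \Omega_{P_2}(\mathcal M)$: once this holds, $\mathcal M$ is a partial stable model of $P_1$ iff $\Omega_{P_1}(\mathcal M) = \mathcal M$ iff $\Omega_{P_2}(\mathcal M) = \mathcal M$ iff $\mathcal M$ is a partial stable model of $P_2$, which is exactly the statement.

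Let $r, r'$ be the rules witnessing $P_1 \mapsto_M P_2$, so $\hd(r) = \hd(r') = h$, $\body^+(r') \subseteq \body^+(r)$, $\body^-(r') \subseteq \body^-(r)$, and $P_2 = P_1 - \set{r}$. The first observation is that the reduct is computed rule by rule: steps 1--3 in the construction of $P/\mathcal M$ each act on a single rule in isolation. Hence $P_1/\mathcal M$ is the set of reduct-images $\rho_s$ of the rules $s \in P_1$ that survive step~1 (those whose negative body mentions no atom of $T$), and $P_2/\mathcal M$ is the same set with the contribution of $r$ removed. Moreover, since $\body^-(r') \subseteq \body^-(r)$, whenever $r$ survives step~1 so does $r'$; thus $\rho_{r'}$ always belongs to $P_2/\mathcal M$ whenever $\rho_r$ is defined.

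The key step is to compare $\rho_r$ with $\rho_{r'}$. Both are positive rules with head $h$; the body of $\rho_r$ consists of $\body^+(r)$ together with the atom $\mathbf{u}$ precisely when some atom of $\body^-(r)$ is undefined in $\mathcal M$, and similarly for $\rho_{r'}$. From $\body^+(r') \subseteq \body^+(r)$ and $\body^-(r') \subseteq \body^-(r)$ we get $\body(\rho_{r'}) \subseteq \body(\rho_r)$ (an undefined atom of $\body^-(r')$ is an undefined atom of $\body^-(r)$, so if $\mathbf{u}$ occurs in $\rho_{r'}$ it also occurs in $\rho_r$). Now two cases. (i) If $r$ is deleted in step~1, or if $\rho_r$ equals the reduct-image of some surviving rule of $P_2$ (in particular if $\rho_r = \rho_{r'}$), then $P_1/\mathcal M = P_2/\mathcal M$ as sets of rules and their least models coincide trivially. (ii) Otherwise $\rho_r$ is defined, $\rho_r \notin P_2/\mathcal M$, $\rho_r \neq \rho_{r'}$, and $P_1/\mathcal M = (P_2/\mathcal M) \cup \set{\rho_r}$; then the distinct rules $\rho_r, \rho_{r'}$ have the same head and satisfy $\body^+(\rho_{r'}) \subseteq \body^+(\rho_r)$ and $\body^-(\rho_{r'}) = \emptyset = \body^-(\rho_r)$, so $P_1/\mathcal M \mapsto_M P_2/\mathcal M$ as positive programs, and Lemma~\ref{l:definitex} (case $x = M$) yields that the two have the same least model. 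In either case $\Omega_{P_1}(\mathcal M) = \Omega_{P_2}(\mathcal M)$.

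I expect the only delicate point to be the case analysis on whether the reduct-image $\rho_r$ survives and whether it happens to coincide with another reduct rule, which is what makes the reduction to the positive-program Lemma~\ref{l:definitex} go through cleanly; the rest (the rule-local nature of the reduct and the bookkeeping with $\mathbf{u}$) is routine.
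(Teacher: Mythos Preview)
Your proof is correct and follows essentially the same route as the paper: a case split on whether the rule $r$ survives the reduct and whether its reduct-image already lies in $P_2/\mathcal M$, followed by an appeal to Lemma~\ref{l:definitex} for the positive-program $\mapsto_M$ step. Your case analysis is slightly finer than the paper's (you check whether $\rho_r$ coincides with any surviving rule of $P_2/\mathcal M$, explicitly accounting for the $\mathbf{u}$ atom, rather than just comparing $\body^+(r)$ with $\body^+(r')$), but the argument is the same in spirit.
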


Consequently, if $P_1 \mapsto_\utpm P_2$, then $P_1$ and $P_2$ share the same partial stable models. By repeatedly resorting to this result, we can even show that for any $\nlp$, there exists an irreducible $\nlp$ with the same set of partial stable models, well-founded models, regular models, stable models, and $L$-stable models: 

\begin{restatable}{theo}{psmUTPM}\label{t:psm-UTPMF}
Let $P$ be an $\nlp$ and $P^*$ be an irreducible $\nlp$ such that $P \mapsto^*_\utpm P^*$.  It holds $\mathcal M$ is a partial stable model of $P$ iff $\mathcal M$ is a partial stable model of $P^*$.
\end{restatable}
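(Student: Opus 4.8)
The plan is to reduce Theorem~\ref{t:psm-UTPMF} to the single-step preservation results already in hand, namely the cited Theorem~\ref{t:psm-U} together with Theorems~\ref{t:psm-T}, \ref{t:psm-P} and \ref{t:psm-M}, via an induction on the length of the transformation sequence. Since $\mapsto_\utpm = \mapsto_U \cup \mapsto_T \cup \mapsto_P \cup \mapsto_M$ and $\mapsto^*_\utpm$ is its reflexive and transitive closure, the hypothesis $P \mapsto^*_\utpm P^*$ unfolds into a finite chain $P = P_1 \mapsto_\utpm P_2 \mapsto_\utpm \cdots \mapsto_\utpm P_k = P^*$ in which every individual step $P_i \mapsto_\utpm P_{i+1}$ is an instance of exactly one of $\mapsto_U$, $\mapsto_T$, $\mapsto_P$, $\mapsto_M$.

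First I would dispatch the base case $k = 1$ (that is, $P = P^*$), which is immediate. For the inductive step, assuming the claim for chains of length $k-1$, I would apply the induction hypothesis to $P_1 \mapsto^*_\utpm P_{k-1}$ to obtain that $\mathcal M$ is a partial stable model of $P_1$ iff it is one of $P_{k-1}$, and then combine this with the matching one of Theorems~\ref{t:psm-U}, \ref{t:psm-T}, \ref{t:psm-P}, \ref{t:psm-M} applied to the last step $P_{k-1} \mapsto_\utpm P_k$, which states that $\mathcal M$ is a partial stable model of $P_{k-1}$ iff it is one of $P_k = P^*$. Chaining the two biconditionals yields the result. An equivalent, more structural phrasing: ``has the same partial stable models as'' is an equivalence relation on $\nlp$s that, by the four cited theorems, contains $\mapsto_U$, $\mapsto_T$, $\mapsto_P$ and $\mapsto_M$; hence it contains $\mapsto_\utpm$ and therefore also its reflexive and transitive closure $\mapsto^*_\utpm$.

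The only point deserving a word of justification rather than calculation is that the two sides, ``$\mathcal M$ is a partial stable model of $P$'' and ``$\mathcal M$ is a partial stable model of $P^*$'', range over comparable interpretations: this is precisely what the standing convention stated just before Lemma~\ref{l:definitex} secures, namely $\HB_P = \HB_{P'}$ whenever $P \mapsto^*_\utpm P'$, so at every link of the chain the biconditional is between statements about the same set $\model$ of $3$-valued interpretations. I do not anticipate a real obstacle here: all the genuine difficulty of the expressiveness argument has already been absorbed into the per-transformation Theorems~\ref{t:psm-T}, \ref{t:psm-P} and \ref{t:psm-M} (and the cited Theorem~\ref{t:psm-U}), and the present statement is merely their composition along a finite sequence. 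The one place to be careful is the bookkeeping step of correctly classifying each single step as one of the four transformations before invoking the corresponding theorem.
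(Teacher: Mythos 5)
Your proposal is correct and matches the paper's own proof: the paper likewise unfolds $P \mapsto^*_\utpm P^*$ into a finite chain, applies Theorems~\ref{t:psm-U}, \ref{t:psm-T}, \ref{t:psm-P} and \ref{t:psm-M} to each individual step, and concludes by transitivity of the biconditional. Your induction on the chain length and the remark about the $\HB_P = \HB_{P'}$ convention are just slightly more explicit versions of the same argument.
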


\begin{restatable}{cor}{equivUTPM}\label{c:equiv-UTPMF}
Let $P$ be an $\nlp$ and $P^*$ be an irreducible $\nlp$ such that $P \mapsto^*_\utpm P^*$. It holds $\mathcal M$ is a well-founded, regular, stable, $L$-stable model of $P$ iff $\mathcal M$ is respectively a well-founded, regular, stable, $L$-stable model of $P^*$.
\end{restatable}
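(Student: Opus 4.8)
The plan is to derive the corollary as an immediate bookkeeping consequence of Theorem~\ref{t:psm-UTPMF}, which already establishes that $P$ and $P^*$ have exactly the same set of partial stable models, together with the standing convention $\HB_P = \HB_{P^*}$ that holds whenever $P \mapsto_\utpm^* P^*$. The key observation is that each of the four refined semantics in Definition~\ref{d:nlp-semantics} singles out, among the partial stable models of a program, those that are extremal with respect to a criterion that refers only to the collection of partial stable models (and, for stable models, to the Herbrand base). Once the two collections of partial stable models coincide, the selected subsets must coincide as well.

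Concretely, write $\mathit{PSM}(P)$ and $\mathit{PSM}(P^*)$ for the sets of partial stable models of $P$ and $P^*$; by Theorem~\ref{t:psm-UTPMF} we have $\mathit{PSM}(P) = \mathit{PSM}(P^*)$. For the well-founded case, $\mathcal M = \pair{T,F}$ is a well-founded model of $P$ iff $\mathcal M \in \mathit{PSM}(P)$ and there is no $\pair{T',F'} \in \mathit{PSM}(P)$ with $T' \subset T$; substituting $\mathit{PSM}(P^*)$ for $\mathit{PSM}(P)$ yields verbatim the condition defining a well-founded model of $P^*$. The regular case ($T$ maximal among partial stable models) and the $L$-stable case ($T \cup F$ maximal among partial stable models) are handled identically, merely replacing ``$T$ minimal'' by ``$T$ maximal'' or ``$T\cup F$ maximal''. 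For stable models, $\mathcal M = \pair{T,F}$ is stable for $P$ iff $\mathcal M \in \mathit{PSM}(P)$ and $T \cup F = \HB_P$; using both $\mathit{PSM}(P) = \mathit{PSM}(P^*)$ and $\HB_P = \HB_{P^*}$ this is exactly the defining condition of a stable model of $P^*$. Combining the four implications in both directions gives the stated equivalences.

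There is essentially no hard step here: the whole argument is a transcription of Definition~\ref{d:nlp-semantics} through the identity $\mathit{PSM}(P) = \mathit{PSM}(P^*)$. The only clause requiring a moment's care is the stable-model one, since it mentions the Herbrand base explicitly rather than only the set of partial stable models; for this I would invoke the convention, fixed just before Lemma~\ref{l:definitex}, that $\mapsto_\utpm$ does not introduce new atoms and that atoms removed by the transformations are still regarded as belonging to the Herbrand base (and are false in every interpretation), so that $\HB_P = \HB_{P^*}$ throughout. With that convention in force, all four equivalences reduce to $\mathit{PSM}(P) = \mathit{PSM}(P^*)$, completing the proof.
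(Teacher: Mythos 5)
Your proposal is correct and follows exactly the paper's own (one-line) argument: the corollary is an immediate consequence of Theorem~\ref{t:psm-UTPMF}, since each of the four semantics selects partial stable models by criteria depending only on the set of partial stable models (plus the convention $\HB_P=\HB_{P^*}$ for the stable case, which you rightly invoke). Your write-up simply makes explicit the bookkeeping the paper calls ``straightforward.''
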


As any irreducible $\nlp$ is an $\rfalp$ (Theorem \ref{t:utpmf-setaf}), the following result is immediate:

\begin{restatable}{cor}{lpequivsetaf}\label{c:lpequivsetaf}
For any $\nlp$ $P$, there exists an $\rfalp$ $P^*$ such that $\mathcal M$ is a partial stable, well-founded, regular, stable, $L$-stable model of $P$ iff $\mathcal M$ is respectively a partial stable, well-founded, regular, stable, $L$-stable model of $P^*$.
\end{restatable}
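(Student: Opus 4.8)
The statement is a corollary that simply assembles the machinery built up in this section, so the plan is to chain three earlier results. First I would take an arbitrary $\nlp$ $P$ and invoke Theorem \ref{t:irreducible} to obtain an irreducible $\nlp$ $P^*$ with $P \mapsto^*_\utpm P^*$; this is where the real work (strong termination for fair sequences, Theorem \ref{t:terminates}) has already been discharged, guaranteeing that the rewriting process actually halts at some normal form rather than oscillating indefinitely as in the counterexample discussed before Definition \ref{d:confluent}. Second, I would apply Theorem \ref{t:utpmf-setaf}, which tells us that any such irreducible program reached from $P$ via $\mapsto_\utpm$ is in fact an $\rfalp$; this pins down $P^*$ as a member of the target class and is what makes the corollary a statement about $\rfalp$s at all.

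Third, for the semantic equivalence, I would appeal directly to Theorem \ref{t:psm-UTPMF} for the partial stable model case and to Corollary \ref{c:equiv-UTPMF} for the well-founded, regular, stable, and $L$-stable cases: since $P \mapsto^*_\utpm P^*$, a finite sequence $P = P_1 \mapsto_\utpm \cdots \mapsto_\utpm P_n = P^*$ exists, and each single step preserves partial stable models (Theorem \ref{t:psm-U} together with Theorems \ref{t:psm-T}, \ref{t:psm-P}, \ref{t:psm-M}), hence by transitivity the whole sequence does; the remaining four semantics inherit the equivalence because they are all defined as distinguished subclasses of partial stable models (minimality/maximality of $T$, $T\cup F$, etc.). Putting these together: $P^*$ is an $\rfalp$ and $\mathcal M$ is a partial stable (resp.\ well-founded, regular, stable, $L$-stable) model of $P$ iff it is one of $P^*$, which is exactly the claim.

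\textbf{Main obstacle.} There is essentially no obstacle left at the level of this corollary itself; it is bookkeeping over Theorems \ref{t:irreducible}, \ref{t:utpmf-setaf}, and \ref{t:psm-UTPMF}. The only point demanding a little care is the tacit convention fixed just before Lemma \ref{l:definitex}, namely that $\HB_P = \HB_{P^*}$ along a $\mapsto_\utpm$-sequence (the transformations may delete occurrences of atoms but introduce none), so that ``same model $\mathcal M$'' is meaningful across $P$ and $P^*$; I would note this explicitly so that comparing interpretations of the two programs is well founded. Everything genuinely hard --- termination of fair sequences, the structural characterisation of irreducible programs, and semantic invariance of $\mapsto_T,\mapsto_P,\mapsto_M$ --- has already been established in the preceding theorems, so the proof here is a one-paragraph composition.
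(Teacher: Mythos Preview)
Your proposal is correct and mirrors the paper's own proof essentially line for line: invoke Theorem \ref{t:irreducible} to obtain an irreducible $P^*$, apply Theorem \ref{t:utpmf-setaf} to conclude $P^*$ is an $\rfalp$, and then use Theorem \ref{t:psm-UTPMF} together with Corollary \ref{c:equiv-UTPMF} for the semantic equivalences. Your extra remark about the $\HB_P = \HB_{P^*}$ convention is a helpful clarification but not required, since the paper already fixed that convention earlier in the section.
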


Given that each $\nlp$ can be associated with an $\rfalp$ preserving the semantics above, it follows that $\nlp$ and $\rfalp$s have the same expressiveness for those semantics:

\begin{restatable}{theo}{lpslp}\label{t:lp-slp}
$\nlp$s and $\rfalp$s have the same expressiveness for partial stable, well-founded, regular, stable, and $L$-stable semantics.
\end{restatable}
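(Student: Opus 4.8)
The plan is to derive Theorem~\ref{t:lp-slp} directly from the machinery already assembled, with essentially no new argument beyond unwinding Definition~\ref{d:expressiveness}. Recall that expressiveness of a class $\mathcal P$ for the partial stable (resp.\ well-founded, regular, stable, $L$-stable) semantics is captured by the signature $\Sigma^{\mathcal P}_{\mathit{PSM}} = \set{\sigma(P) \mid P \in \mathcal P}$, and two classes have the same expressiveness precisely when their signatures coincide. Let $\mathcal{N}$ denote the class of all $\nlp$s and $\mathcal{R}$ the class of all $\rfalp$s. Since every $\rfalp$ is an $\nlp$, we immediately have $\mathcal R \subseteq \mathcal N$, hence $\Sigma^{\mathcal R}_{\mathit{PSM}} \subseteq \Sigma^{\mathcal N}_{\mathit{PSM}}$; the content of the theorem is the reverse inclusion.

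For the reverse inclusion, I would take an arbitrary $P \in \mathcal N$ and invoke Corollary~\ref{c:lpequivsetaf}: there exists an $\rfalp$ $P^*$ such that $\mathcal M$ is a partial stable (resp.\ well-founded, regular, stable, $L$-stable) model of $P$ iff $\mathcal M$ is respectively such a model of $P^*$. Concretely, $P^*$ is obtained by Theorem~\ref{t:irreducible} as an irreducible program with $P \mapsto^*_\utpm P^*$, which is an $\rfalp$ by Theorem~\ref{t:utpmf-setaf}, and the semantic coincidence is exactly Theorem~\ref{t:psm-UTPMF} together with Corollary~\ref{c:equiv-UTPMF}. In the language of signatures, $\sigma(P) = \sigma(P^*)$ with $P^* \in \mathcal R$, so $\sigma(P) \in \Sigma^{\mathcal R}_{\mathit{PSM}}$. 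As $P$ was arbitrary, $\Sigma^{\mathcal N}_{\mathit{PSM}} \subseteq \Sigma^{\mathcal R}_{\mathit{PSM}}$, and combined with the trivial inclusion we get $\Sigma^{\mathcal N}_{\mathit{PSM}} = \Sigma^{\mathcal R}_{\mathit{PSM}}$. I would then repeat the identical argument verbatim for the well-founded, regular, stable, and $L$-stable signatures, again citing Corollary~\ref{c:lpequivsetaf}, which already packages all five semantics together; so in practice this is a single paragraph handling all cases at once.

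There is no genuine obstacle here: the theorem is a bookkeeping corollary, and all the real work has been discharged upstream --- strong termination of $\mapsto_\utpm$ on fair sequences (Theorem~\ref{t:terminates}), the fact that irreducible programs are $\rfalp$s (Theorem~\ref{t:utpmf-setaf}), and the preservation of partial stable models by each of $\mapsto_U, \mapsto_T, \mapsto_P, \mapsto_M$ (Theorem~\ref{t:psm-U} and Theorems~\ref{t:psm-T}--\ref{t:psm-M}), which lift to $\mapsto^*_\utpm$ in Theorem~\ref{t:psm-UTPMF}. The only point deserving a sentence of care is that one must apply a \emph{fair} sequence of transformations so that Theorem~\ref{t:irreducible} actually delivers an irreducible endpoint --- an unfair sequence could loop forever (as the $a \leftarrow b$, $b \leftarrow a$, $c \leftarrow a, \naf c$, $c$ example shows) --- but Theorem~\ref{t:irreducible} is already stated so as to guarantee existence of such a $P^*$, so even this is just a citation. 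I would therefore write the proof as: ``Immediate from Definition~\ref{d:expressiveness} and Corollary~\ref{c:lpequivsetaf}, noting $\mathcal R \subseteq \mathcal N$,'' possibly expanded to the two short paragraphs above for readability.
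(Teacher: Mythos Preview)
Your proposal is correct and mirrors the paper's own proof: both directions are handled by the trivial inclusion $\mathcal R \subseteq \mathcal N$ and by Corollary~\ref{c:lpequivsetaf}, which supplies for each $\nlp$ an $\rfalp$ with the same partial stable (and hence well-founded, regular, stable, $L$-stable) models. Your write-up is more explicit about the signature formalism and the upstream dependencies, but the argument is the same.
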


Another important result is that the $\setaf$ corresponding to an $\nlp$ is invariant with respect to $\mapsto_\utpm$:

\begin{restatable}{theo}{invariant}\label{t:invariant}
For any $\nlp$s $P_1$ and $P_2$, if $P_1 \mapsto_\utpm P_2$, then $\af_{P_1} = \af_{P_2}$ 
\end{restatable}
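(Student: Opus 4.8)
The plan is to reduce the claim to a case analysis on which of the four elementary transformations witnesses $P_1 \mapsto_\utpm P_2$, and in each case show that the associated $\setaf$s coincide by comparing their arguments and attacks via Definition \ref{d:nlp-setaf}. The conceptual backbone is Lemma \ref{l:statement-reduct} together with Corollary \ref{c:statement-reduct}: an argument $c$ belongs to $\ar_P$ exactly when some statement with conclusion $c$ can be built from $P$, equivalently when $c$ is true in the least $3$-valued model of some reduct of $P$; and the attack relation is entirely determined by the vulnerability sets $\Vul_P(c)$, which in turn are read off from the statements. So it suffices to prove two things for each elementary transformation $\mapsto_x$ with $P_1 \mapsto_x P_2$: first, $\ar_{P_1} = \ar_{P_2}$, and second, $\Vul_{P_1}(c) = \Vul_{P_2}(c)$ for every argument $c$ (here we may quietly use the minimal-attack normalisation, since $\att_P$ depends only on the \emph{minimal} elements of $\Vul_P(c)$, and replacing $\Vul_P(c)$ by its subset-minimal members leaves $\att_P$ unchanged --- this should be stated as a small preliminary observation).

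First I would handle $\mapsto_T$ and $\mapsto_M$, which are the easy cases. For $\mapsto_T$ we delete a rule $r$ with $\hd(r) \in \body^+(r)$; such a rule can never be the topmost rule of any statement, because constructing a statement from $r$ would require an already-built statement $s_i$ with $\Conc(s_i) = \hd(r)$ whose $\Rules(s_i)$ does not contain $r$ --- but any derivation of $\hd(r)$ re-invoking $r$ is blocked by the ``$c\leftarrow \ldots$ is not contained in $\Rules(s_i)$'' side condition in Definition \ref{d:argument}, and a derivation not using $r$ is already available in $P_2$. Hence the set of statements is unchanged, so $\ar$ and all $\Vul$ sets are unchanged. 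For $\mapsto_M$ we delete a rule $r$ dominated by a rule $r'$ with the same head and $\body^\pm(r') \subseteq \body^\pm(r)$; any statement topped by $r$ yields a statement topped by $r'$ with a vulnerability set that is a subset of the original, so removing $r$ removes only non-minimal vulnerability sets and non-reachable conclusions are untouched --- again $\ar$ and the minimal elements of each $\Vul$ set are preserved.

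The main obstacle will be $\mapsto_U$ (unfolding/GPPE), and to a lesser extent $\mapsto_P$. For positive reduction, a rule $c \leftarrow a_1,\dots,a_m, \naf b, \naf b_1, \dots, \naf b_n$ with $b$ not a head of any rule in $P_1$ is replaced by $c \leftarrow a_1,\dots,a_m,\naf b_1,\dots,\naf b_n$; by Corollary \ref{c:statement-reduct} $b \notin \ar_{P_1}$, so $\ModLab$/$\LabMod$-style reasoning tells us $b$ is false in every reduct, hence adding $b$ to a vulnerability set $V$ never changes whether $V$ meets a given set of atoms "from above" in the relevant sense --- more precisely, $\Vul$ sets of $P_2$-statements drop the spurious element $b$, and since $b \notin \ar$ it was never a member of any minimal attacker anyway, so $\att$ is unchanged; one must also check no \emph{new} conclusions become derivable, which is clear since the rule's applicability only loosened. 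For unfolding, the rule $c \leftarrow a, a_1,\dots,a_m,\naf b_1,\dots,\naf b_n$ is replaced by all rules obtained by resolving $a$ against the rules defining $a$. The key claim is a bijection-up-to-conclusion between statements of $P_1$ and statements of $P_2$: a $P_1$-statement $s$ whose derivation uses the unfolded rule with its subderivation $s_a$ for $a$ corresponds to the $P_2$-statement that "splices in" the top rule of $s_a$, and this splicing preserves $\Conc$ and $\Vul$ (the vulnerabilities of $s_a$ are exactly the negative literals contributed, which reappear in the resolved rule) --- here the tautology/acyclicity side conditions in Definition \ref{d:argument} need care to ensure the correspondence is total in both directions. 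Once this statement-level correspondence is established, $\ar_{P_1} = \ar_{P_2}$ and $\Vul_{P_1}(c) = \Vul_{P_2}(c)$ follow, and then Definition \ref{d:attack} gives $\att_{P_1} = \att_{P_2}$, completing the case and the theorem. I would expect the unfolding case to be where essentially all the technical work sits, and it is plausible that the cleanest route is to reuse the already-proved semantic invariance (Theorem \ref{t:psm-U} and Theorem \ref{t:psmcomp}) plus the structural rigidity of $\rfalp$s (Theorems \ref{t:inverse-setaf-slp}, \ref{t:inverse-slp-setaf}) as a sanity cross-check, but a direct statement-level argument is more robust and is the one I would write out.
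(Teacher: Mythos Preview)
Your overall plan coincides with the paper's: it proves the theorem via four separate lemmas (one per elementary transformation in $\mapsto_\utpm$), in each case arguing at the level of statements that conclusions and the subset-minimal vulnerability sets are preserved, whence $\ar_{P_1}=\ar_{P_2}$ and $\att_{P_1}=\att_{P_2}$. Your preliminary observation that $\att_P$ depends only on the minimal members of $\Vul_P(c)$ is precisely what the paper relies on implicitly, and your treatment of $\mapsto_U$, $\mapsto_P$, $\mapsto_M$ matches the paper's sketches.

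There is, however, a concrete error in your $\mapsto_T$ case. You claim that a tautological rule $r$ with $\hd(r)\in\body^+(r)$ ``can never be the topmost rule of any statement'' and conclude ``the set of statements is unchanged''. This is false. Take $P_1=\{\,c\leftarrow c,\naf e\,;\ c\,\}$: the fact $c$ yields a statement $s_1$ with $r\notin\Rules(s_1)$, and then $r$ \emph{can} be used on top of $s_1$ to form a statement $s_2$ with $\Vul(s_2)=\{e\}$. After deleting $r$ only $s_1$ survives, so $\mathfrak S_{P_1}\neq\mathfrak S_{P_2}$ and $\Vul_{P_1}(c)=\{\emptyset,\{e\}\}\neq\{\emptyset\}=\Vul_{P_2}(c)$. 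The correct argument---and the one the paper gives---is not that statements are unchanged but that every $s\in\mathfrak S_{P_1}$ has a counterpart $s'\in\mathfrak S_{P_2}$ with $\Conc(s')=\Conc(s)$ and $\Vul(s')\subseteq\Vul(s)$ (obtained by collapsing each use of $r$ to the sub-derivation of $\hd(r)$ that, by the side condition, avoids $r$), while conversely every $P_2$-statement is already a $P_1$-statement; hence the \emph{minimal} vulnerability sets agree and $\att$ is preserved. This is exactly the shape of your $\mapsto_M$ sketch, and the $\mapsto_T$ case should be handled the same way.
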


This means that any $\nlp$ in a sequence of program transformations from $\mapsto_\utpm$ has the same corresponding $\setaf$. For instance, every $\nlp$ in this sequence 
\[
\small{
\begin{array}{rcl}
a & \leftarrow & b\\
b & \leftarrow & a\\
c & \leftarrow & a, \naf c\\
c
\end{array}
\mapsto_U
\begin{array}{rcl}
a & \leftarrow & a\\
b & \leftarrow & a\\
c & \leftarrow & a, \naf c\\
c
\end{array}
\mapsto_T
\begin{array}{rcl}
b & \leftarrow & a\\
c & \leftarrow & a, \naf c\\
c
\end{array}
\mapsto_U
\begin{array}{rcl}
b & \leftarrow & a\\
c
\end{array}
\mapsto_U
\begin{array}{rcl}
c
\end{array}
}
\]
leads to the same corresponding $\setaf$, constituted by a unique (unattacked) argument:
\begin{center}
\begin{tikzpicture}[>=stealth',shorten >=1pt,auto,node distance=2cm,
  thick,main node/.style={circle,fill=white!20,draw}, inner sep=3pt]
  \node[main node] (c) {$c$};
\end{tikzpicture}
\end{center}

Theorem \ref{t:invariant} also suggests an alternative way to find the $\setaf$ corresponding to an $\nlp$ $P$: instead of resorting directly to Definition \ref{d:argument} to construct the arguments, we can apply (starting from $P$) $\mapsto_\utpm$ successively by following a fair sequence of program transformations. By Theorems \ref{t:terminates} and \ref{t:utpmf-setaf}, we know that eventually, we will reach an $\rfalp$ whose corresponding $\setaf$ is identical to that of the original program $P$ (Theorem \ref{t:invariant}). Then, we apply Definition \ref{d:argument} to this $\rfalp$ to obtain the arguments and Definition \ref{d:attack} for the attack relation. Notably, when $P$ is an $\rfalp$, Definition \ref{d:argument} becomes considerably simpler, requiring only its first item to characterise the statements.   

In addition, from the same $\nlp$, various fair sequences of program transformations can be conceived. Recalling the $\nlp$
\[
\begin{array}{rcl}
 a & \leftarrow & b\\
b & \leftarrow & a\\
c & \leftarrow & a, \naf c\\
c
\end{array}
\]
exploited above, we can design the following alternative fair sequence
\[
\small{
\begin{array}{rcl}
a & \leftarrow & b\\
b & \leftarrow & a\\
c & \leftarrow & a, \naf c\\
c
\end{array}
\mapsto_M
\begin{array}{rcl}
a & \leftarrow & a\\
b & \leftarrow & a\\
c
\end{array}
\mapsto_U
\begin{array}{rcl}
a & \leftarrow & b\\
b & \leftarrow & b\\
c
\end{array}
\mapsto_T
\begin{array}{rcl}
a & \leftarrow & b\\
c
\end{array}
\mapsto_U
\begin{array}{rcl}
c
\end{array}
}
\]
This sequence produced the same $\rfalp$ as before; it is not a coincidence. Apart from being strongly terminating for fair sequences of program transformations, the relation $\mapsto_\utpm$ has an appealing property; it is also confluent:

\begin{restatable}{theo}{confluent}\label{t:confluent}
The relation $\mapsto_\utpm$ is confluent, i.e., for any $\nlp$s $P$, $P'$ and $P''$, if $P \mapsto_\utpm^* P'$ and $P \mapsto_\utpm^* P''$ and both $P'$ and $P''$ are irreducible, then $P' = P''$.   
\end{restatable}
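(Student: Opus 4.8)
The plan is to obtain confluence as a short corollary of three results already proved, rather than by a direct diagram chase. The natural tool for confluence is Newman's Lemma (local confluence together with strong normalisation), but that route is awkward here because $\mapsto_\utpm$ is \emph{not} terminating in general — only fair sequences terminate — so we would have to be delicate about which sequences we are allowed to extend. Instead, I would exploit the fact that the associated $\setaf$ is a complete invariant of the rewriting: by Theorem~\ref{t:invariant} it is preserved by each step of $\mapsto_\utpm$, and by Theorems~\ref{t:utpmf-setaf} and~\ref{t:inverse-slp-setaf} an irreducible program is an $\rfalp$ and is uniquely recoverable from its $\setaf$. Together these pin down the irreducible form uniquely.

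Concretely I would proceed in three steps. First, lift Theorem~\ref{t:invariant} from a single step to $\mapsto_\utpm^*$ by an easy induction on the length of a derivation $P = P_0 \mapsto_\utpm \cdots \mapsto_\utpm P_k$: the empty derivation is trivial, and if $\af_P = \af_{P_{k-1}}$ then $\af_{P_{k-1}} = \af_{P_k}$ by Theorem~\ref{t:invariant}, so $\af_P = \af_{P_k}$; hence from $P \mapsto_\utpm^* P'$ and $P \mapsto_\utpm^* P''$ we get $\af_{P'} = \af_P = \af_{P''}$. Second, since $P'$ and $P''$ are irreducible and reachable from $P$, Theorem~\ref{t:utpmf-setaf} gives that both are $\rfalp$s. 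Third, apply Theorem~\ref{t:inverse-slp-setaf} to each $\rfalp$, obtaining $P' = P_{\af_{P'}}$ and $P'' = P_{\af_{P''}}$; combining with the first step, $P' = P_{\af_{P'}} = P_{\af_{P''}} = P''$, which is exactly confluence.

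With those three theorems in hand there is essentially no obstacle remaining; the only point needing care is making the induction in the first step explicit and checking that every intermediate program is a genuine $\nlp$ so that the hypotheses of Theorems~\ref{t:invariant}, \ref{t:utpmf-setaf} and~\ref{t:inverse-slp-setaf} apply at each stage (this is immediate, since the transformations $\mapsto_U,\mapsto_T,\mapsto_P,\mapsto_M$ map $\nlp$s to $\nlp$s). The genuinely hard part is thus already discharged in the proof of Theorem~\ref{t:invariant}; a self-contained alternative — proving local confluence by analysing the overlaps between every pair among $\mapsto_U,\mapsto_T,\mapsto_P,\mapsto_M$ and then patching around the failure of global termination via fairness — would be considerably more laborious, so I would deliberately avoid it in favour of the invariant-based argument.
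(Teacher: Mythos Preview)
Your proposal is correct and follows essentially the same route as the paper's own proof: use Theorem~\ref{t:invariant} (lifted to $\mapsto_\utpm^*$) to obtain $\af_{P'} = \af_P = \af_{P''}$, invoke Theorem~\ref{t:utpmf-setaf} to conclude both $P'$ and $P''$ are $\rfalp$s, and then apply Theorem~\ref{t:inverse-slp-setaf} to get $P' = P_{\af_{P'}} = P_{\af_{P''}} = P''$. The paper's proof is in fact just the three-line version of exactly this argument.
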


By confluent $\mapsto_\utpm$, we mean that it does not matter the path we take by repeatedly applying $\mapsto_\utpm$, if it ends, it will always lead to the same irreducible $\nlp$. In addition, as any irreducible $\nlp$ is an $\rfalp$ (Theorem \ref{t:utpmf-setaf}), and the translations from $\setaf$ to $\rfalp$s and conversely, from $\rfalp$s to $\setaf$ are each other's inverse (Theorems \ref{t:inverse-setaf-slp} and \ref{t:inverse-slp-setaf}), we obtain that two distinct $\setaf$s will always be associated with two distinct $\nlp$s. The confluence of $\mapsto_\utpm$ is of particular significance from the logic programming perspective as it guarantees that the ordering of the transformations in $\utpm$ does not matter: we are free to choose always the “best” transformation, which maximally reduces the program. Consequently, Theorem \ref{t:confluent} also sheds light on the search for efficient implementations in $\nlp$s.

From the previous section, we know that the equivalence between $\setaf$s and $\rfalp$s is not only of a semantic nature but also structural: two distinct $\setaf$s will always be translated into two distinct $\rfalp$s and vice versa. Now we enhance our understanding of this result still more by establishing that 

\begin{itemize}
\item $\rfalp$s are as expressive as $\nlp$s. 
\item The $\setaf$ corresponding to an $\nlp$ is invariant with respect to $\mapsto_\utpm$, i.e., if $P_1 \mapsto_\utpm P_2$, then $\af_{P_1} = \af_{P_2}$.
\item Each $\nlp$ $P$ leads to a unique $\rfalp$ $P^*$ via the relation $\mapsto_\utpm$. Besides, $P$ and $P^*$ have the same partial stable, grounded, regular, stable, and $L$-stable models.
\end{itemize}

Beyond revealing the connections between $\setaf$s and $\nlp$s, the results in this paper also enhance our understanding of $\nlp$s themselves. To give a concrete example, let us consider the following issue: in the sequence of program transformations in $\mapsto_\utpm$, atoms can be removed. Are these atoms underivable and set to false in the partial stable models of the program or true/undecided atoms can be removed in this sequence? Such questions can be answered by considering some results from Section \ref{s:nlp-setaf} and the current section. In more formal terms, let $P$ and $P^*$ be $\nlp$s such that $P \mapsto^*_\utpm P^*$ and $P^*$ is irreducible. We have

\begin{itemize}
\item $P^*$ is an $\rfalp$ (Theorem \ref{t:utpmf-setaf}), and the set of atoms occurring in $P^*$ is $\set{\mathit{head}(r) \mid r \in P^*}$ (Definition \ref{d:slp});
\item $\ar_{P^*} = \set{\mathit{head}(r) \mid r \in P^*}$ is the set of all arguments we can construct from $P^*$ (Definition \ref{d:argument}), and $\af_P = \af_{P^*}$ (Theorem \ref{t:invariant}), i.e., $\ar_P = \ar_{P^*}$;
\item Thus $c$ occurs in $P$, but does not occur in $P^*$ iff there is no statement $s$ constructed from $P$ such that $\Conc(s) = c$. According to Corollary \ref{c:statement-reduct}, $c \in F'$ for every interpretation $\inter$ with $\Omega_P(\inter) = \pair{T', F'}$. \end{itemize}
Consequently, every atom occurring in $P$, but not occurring in $P^*$ is set to false in the least 3-valued model of each disjunct of $P$. In particular, they will be false in its partial stable models.

Supported by the findings presented in the current section, we can argue that $\setaf$s and $\rfalp$s are essentially the same paradigm, and both are deeply connected with $\nlp$s.  

\section{Conclusion and Future Works}\label{s:conclusion}

This paper investigates the connections between frameworks with sets of attacking arguments ($\setaf$s) and Normal Logic Programs ($\nlp$s). Building on the research in \citep{alcantara2019equivalence,alcantara2021equivalence}, we employ the characterisation of the $\setaf$ semantics in terms of labellings \citep{flouris2019comprehensive} to establish a mapping from $\nlp$s to $\setaf$s (and vice versa). We further demonstrate the equivalence between partial stable, well-founded, regular, stable, and $L$-stable models semantics for $\nlp$s and respectively complete, grounded, preferred, stable, and semi-stable labellings for $\setaf$s. 

Our translation from $\nlp$s to
$\setaf$s offers a key advantage over the translation from $\nlp$s to $\aaf$s presented in \citep{caminada15equivalence}. Our approach captures the equivalence between semi-stable labellings for $\setaf$s and $L$-stable models for $\nlp$s. In addition, their translation is unable to preserve the structure of the $\nlp$s. While an $\nlp$ can be translated to an $\aaf$, recovering the original $\nlp$ from the corresponding $\aaf$ is generally not possible. In contradistinction, we have revisited a class of $\nlp$s called Redundancy-Free Atomic Logic Programs ($\rfalp$s). For $\rfalp$s, the translations from $\nlp$s to $\setaf$s, and from $\setaf$s to $\nlp$s also preserve their structures as they are each other's inverse. Hence, when compared to the relationship between $\nlp$s and $\aaf$s, the relationship between $\nlp$s and $\setaf$s is demonstrably more robust. It extends beyond semantics to encompass structural aspects.

Some of these results are not new as they have already been obtained independently in \citep{konig22just}. In fact, their translation from $\nlp$s to $\setaf$s and vice versa coincide with ours, and the structural equivalence between $\rfalp$s and $\setaf$s has also been identified there. Notwithstanding, our proofs of these results stem from a significantly distinct path as they are based on properties of argument labellings and are deeply rooted in works such as \citep{caminada15equivalence,alcantara2019equivalence,alcantara2021equivalence}. For instance, our equivalence results are settled on two important aspects:

\begin{itemize}
\item Properties involving the maximisation/minimisation of labellings adapted from \citep{caminada15equivalence} to deal with labellings for $\setaf$s.
\item Again inspired in \citep{caminada15equivalence}, we proposed a mapping from interpretations to labellings and a mapping from labellings to interpretations. We also showed that they are each other's inverse.
\end{itemize}

In contrast, in \citep{konig22just} the equivalence between the semantics is demonstrated in terms of extensions. They also have not tackled the controversy between semi-stable and $L$-stable, one of our leading motivations for developing this work. 

In addition to showing this structural equivalence between $\rfalp$s and $\setaf$s, we have also investigated the expressiveness of $\rfalp$s. To demonstrate that they are as expressive as $\nlp$s, we proved that any $\nlp$ can be transformed into an $\rfalp$ with the same partial stable models through repeated applications of the program transformation $\mapsto_\utpm$. It is worth noticing that $\mapsto_\utpm$ results from the combination of the following program transformations presented in  \citep{brass1994disjunctive,brass1997characterizations,brass1999semantics}: unfolding, elimination of tautologies, positive reduction, and elimination of non-minimal rules. In the course of our investigations, we also have obtained relevant findings as follows:

\begin{itemize}
\item $\rfalp$s are irreducible with respect to $\mapsto_\utpm$: the application of $\mapsto_\utpm$ to an $\rfalp$ will result in the same program.
\item The mapping from $\nlp$s to $\setaf$s is invariant with respect to the program transformation $\mapsto_\utpm$, i.e., if an $\nlp$ $P_2$ is obtained from an $\nlp$ $P_1$ via $\mapsto_\utpm$, then the $\setaf$ corresponding to $P_1$ is the same corresponding to $P_2$.
\item The program transformation $\mapsto_\utpm$ is confluent: any $\nlp$ will lead to a unique $\rfalp$ after repeatedly applying $\mapsto_\utpm$. Consequently, two distinct $\rfalp$s will always be associated with two distinct $\nlp$s.
\end{itemize}

In summary, $\rfalp$s (which are as expressive as $\nlp$s) and $\setaf$s are essentially the same formalism. Roughly speaking, we can consider a $\setaf$ as a graphical representation of an $\rfalp$, and an $\rfalp$ as a rule-based representation of a $\setaf$. Any change in one formalism is mirrored by a corresponding change in the other. Thus, $\setaf$s emerge as a natural candidate for representing argumentation frameworks corresponding to $\nlp$s.

Regarding the significance and potential impact of our results, we highlight that by pursuing this line of research, one gains insight into what forms of non-monotonic reasoning can and cannot be represented by formal argumentation. In particular, by enlightening these connections between $\setaf$s and $\nlp$s, many approaches, semantics and techniques naturally developed for the former may be applied to the latter, and vice versa. While $\setaf$s serve as an inspiration for defining $\rfalp$s, the representation of $\nlp$s as $\setaf$s is an alternative for intuitively visualising logic programs. 
  
 In addition, our results associated with the confluence of $\mapsto_\utpm$ are of particular significance from the logic programming perspective as they guarantee that the ordering of the transformations in $\mapsto_\utpm$ does not matter: we are free to choose always the “best”
transformation, which maximally reduces the program. Consequently, our paper also sheds light on the search for efficient implementations in $\nlp$s.

Natural ramifications of this work include an in-depth analysis of other program transformations beyond those studied here and their impact on $\setaf$ and argumentation in general. Given the close relationship between Argumentation and Logic Programming, a possible line of research is to investigate how Argumentation can benefit from these program transformations in the development of more efficient algorithms. The structural connection involving $\rfalp$s and $\setaf$s gives rise to exploiting other extensions of Dung $\aaf$s; in particular, we are interested in identifying which of them are robust enough to preserve the structure of logic programs. Along this same line of research, it is also our aim to study connections between extensions of $\nlp$s (including their paraconsistent semantics) and Argumentation.
\medskip  

\textit{Competing interests}: The authors declare none.

\appendix

\section{Proofs of Theorems}\label{s:proofs}

\subsection{Theorems and Proofs from Section \ref{s:nlp-setaf}}\label{ss:nlp-setaf-proofs}

\statementreduct* 

\begin{proof}

\begin{itemize}
\item Proving that $c \in T'$ iff there exists a statement $s$ constructed from $P$ such that $\Conc(s) = c$ and $\Vul(s) \subseteq F$:

\begin{description}
\item[$\Rightarrow$]

Consider $\Psi^{\uparrow\ i}_\frac{P}{\inter} = \pair{T_i, F_i}$ for each $i \in \mathbb{N}$. It suffices to prove by induction on the value of $i$ that if $c \in T_i$, then there exists a statement $s$ constructed from $P$ such that $\Conc(s) = c$ and $\Vul(s) \subseteq F$:

    \begin{itemize}
        \item \emph{Basis}. For $i = 0$, the result is trivial as $T_0 = \emptyset$.
        
        \item \emph{Step}. Assume that for every $c' \in T_n$, there exists a statement $s'$ constructed from $P$ such that $\Conc(s') = c'$ and $\Vul(s') \subseteq F$. We will prove that if $c \in T_{n+1}$, there exists a statement $s$ constructed from $P$ such that $\Conc(s) = c$ and $\Vul(s) \subseteq F$: 

        If $c \in T_{n + 1}$, there exists a rule $c \leftarrow a_1, \ldots, a_m, \naf b_1, \ldots, \naf b_n (m \geq 0, n \geq 0) \in P$ such that $\set{a_1, \ldots, a_m} \subseteq T_n$ and $\set{b_1, \ldots, b_n} \subseteq F$. It follows via inductive step that for every $j \in \set{1, \ldots, m}$, there exists a statement $s_j$ constructed from $P$ such that $\Conc(s_j) = a_j$ and $\Vul(s_j) \subseteq F$. But then, we can construct from $P$ a statement $s$ with $\Conc(s) = c$ where $\Vul(s) = \Vul(s_1) \cup \cdots \cup \Vul(s_m) \cup \set{b_1, \ldots, b_n}$. This implies that $\Vul(s) \subseteq F$.
    \end{itemize}

\item[$\Leftarrow$] We will prove by structural induction on the construction of statements that for each statement $s$ constructed from $P$ such that $\Vul(s) \subseteq F$, it holds $\Conc(s) \in T'$:
    
    \begin{itemize}
        \item \emph{Basis}. Let $s$ be a statement $c \leftarrow \naf b_1, \ldots, \naf b_n$ ($n \geq 0$) such that $\set{b_1, \ldots, b_n} = \Vul(s) \subseteq F$. It follows the fact $c \in \frac{P}{\inter}$. Then $c \in T'$.
        
        \item \emph{Step}. Assume $s_1, \ldots, s_m$ ($m \geq 1$) are arbitrary  statements constructed from $P$ such that for each $i \in \set{1, \ldots, m}$, if $\Vul(s_i) \subseteq F$, then $\Conc(s_i) \in T'$. We will prove that if $s$ is a statement $c \leftarrow (s_1),\ldots,(s_m), \naf b_1,\ldots,
	\naf b_n$ $(n \geq 0)$ constructed from $P$ such that $\Vul(s) \subseteq F$, then $c \in T'$:
 
Let $s$ be such a statement. By Definition \ref{d:argument}, there exists a rule $c \leftarrow a_1, \ldots, a_m, \naf b_1, \ldots, \naf b_n \in P$ such that $\Conc(s_i) = a_i$ for each $i \in \set{1, \ldots, m}$ and $\Vul(s) = \Vul(s_1) \cup \cdots \cup \Vul(s_m) \cup \set{b_1, \ldots, b_n}$. As $\Vul(s) \subseteq F$, we obtain $\set{b_1, \ldots, b_n} \subseteq F$ and $\Vul(s_i) \subseteq F$ for each $i \in \set{1, \ldots, m}$. By inductive hypothesis, it follows $\set{a_1, \ldots, a_m} \subseteq T'$. Then $c \in T'$.
    \end{itemize}

\end{description}

\item Proving that $c \in F'$ iff for every statement $s$ constructed from $P$ such that $\Conc(s) = c$, we have $\Vul(s) \cap T \neq \emptyset$: 

\begin{description}

\item[$\Rightarrow$] Firstly, we will prove by structural induction on the construction of statements that for each statement $s$ constructed from $P$ such that $\Vul(s) \cap T = \emptyset$, it holds $\Conc(s) \not\in F'$:
         
    \begin{itemize}
        \item \emph{Basis}. Let $s$ be a statement $c \leftarrow \naf b_1, \ldots, \naf b_n$ ($n \geq 0$) such that $\set{b_1, \ldots, b_n} \cap T = \emptyset$. It follows the fact $c \in \frac{P}{\inter}$ or $c \leftarrow \mathbf{u} \in \frac{P}{\inter}$. Then $c \not\in F'$.
        
        \item \emph{Step}. Assume $s_1, \ldots, s_m$ ($m \geq 1$) are arbitrary  statements constructed from $P$ such that for each $i \in \set{1, \ldots, m}$, if $\Vul(s_i) \cap T = \emptyset$, then $\Conc(s_i) \not\in F'$. We will prove that if $s$ is a statement $c \leftarrow (s_1),\ldots,(s_m), \naf b_1,\ldots,
	\naf b_n$ $(n \geq 0)$ constructed from $P$ such that $\Vul(s) \cap T = \emptyset$, then $c \not\in F'$:

 Let $s$ be such a statement. By Definition \ref{d:argument}, there exists a rule $c \leftarrow a_1, \ldots, a_m, \naf b_1, \ldots, \naf b_n \in P$ such that $\Conc(s_i) = a_i$ for each $i \in \set{1, \ldots, m}$ and $\Vul(s) = \Vul(s_1) \cup \cdots \cup \Vul(s_m) \cup \set{b_1, \ldots, b_n}$. As $\Vul(s) \cap T = \emptyset$, we obtain $\set{b_1, \ldots, b_n} \cap T = \emptyset$ and $\Vul(s_i) \cap T = \emptyset$ for each $i \in \set{1, \ldots, m}$. By inductive hypothesis, it follows $\set{a_1, \ldots, a_m} \cap F' = \emptyset$. Then, $c \not\in F'$.
    \end{itemize}

Hence, if $c \in F'$, for every statement $s$ constructed from $P$ such that $\Conc(s) = c$, we have $\Vul(s) \cap T \neq \emptyset$.

\item[$\Leftarrow$] Assume that for every statement $s$ constructed from $P$ such that $\Conc(s) = c$, we have $\Vul(s) \cap T \neq \emptyset$. The proof is by contradiction: suppose that $c \not\in F'$. 

Consider $\Psi^{\uparrow\ i}_\frac{P}{\inter} = \pair{T_i, F_i}$ for each $i \in \mathbb{N}$. It suffices to prove by induction on the value of $i$ that if $c \not\in F_i$, then there exists a statement $s$ constructed from $P$ such that $\Conc(s) = c$ and $\Vul(s) \cap T = \emptyset$:

    \begin{itemize}
        \item \emph{Basis}. For $i = 0$, the result is trivial as $F_0 = \HB_P$.
        
        \item \emph{Step}. Assume that for every $c' \not\in F_n$, there exists a statement $s'$ constructed from $P$ such that $\Conc(s') = c'$ and $\Vul(s') \cap T = \emptyset$. We will prove that if $c \not\in F_{n+1}$, there exists a statement $s$ constructed from $P$ such that $\Conc(s) = c$ and $\Vul(s) \cap T = \emptyset$: 

        If $c \not\in F_{n + 1}$, there exists a rule $c \leftarrow a_1, \ldots, a_m, \naf b_1, \ldots, \naf b_n (m \geq 0, n \geq 0) \in P$ such that $\set{a_1, \ldots, a_m} \cap F_n = \emptyset$ and $\set{b_1, \ldots, b_n} \cap T = \emptyset$. It follows via inductive step that for every $j \in \set{1, \ldots, m}$, there exists a statement $s_j$ constructed from $P$ such that $\Conc(s_j) = a_j$ and $\Vul(s_j) \cap T = \emptyset$. But then, we can construct from $P$ a statement $s$ with $\Conc(s) = c$ where $\Vul(s) = \Vul(s_1) \cup \cdots \cup \Vul(s_m) \cup \set{b_1, \ldots, b_n}$. This implies that $\Vul(s) \cap T = \emptyset$.
    \end{itemize}

\end{description}
\end{itemize}
\hfill 
\end{proof}

\inverse*

\begin{proof}
    Let $c \in \ar_P$ and $\LabMod(\mathcal L) = \left<T, F \right>$; there are three possibilities:
    
    \begin{itemize}
        \item $\mathcal L(c) = \inn$ $\Rightarrow$ $c \in T$ $\Rightarrow$ $\ModLab(\LabMod(\mathcal L))(c) = \inn$.
        
        \item $\mathcal L(c) = \out$ $\Rightarrow$ $c \in F$ $\Rightarrow$ $\ModLab(\LabMod(\mathcal L))(c) = \out$.
        
        \item $\mathcal L(c) = \undec$ $\Rightarrow$ $c \in \overline{T \cup F}$ $\Rightarrow$ $\ModLab(\LabMod(\mathcal L))(c) = \undec$.
    \end{itemize}
\hfill 
\end{proof}

\ModArg*

\begin{proof}

Let $\mathcal M = \pair{T, F}$ be a partial stable model of $P$, $\LabMod(\ModLab(\mathcal M)) = \pair{T', F'}$ and $c \in \HB_P$. It suffices to prove the following results:

\begin{itemize}
\item $c \in T$ iff $c \in T'$.

\begin{itemize}

\item Assume $c \in T$. As $\Omega_P(\mathcal M) = \mathcal M$, by Lemma \ref{l:statement-reduct}, there exists a statement $s$ with $\Conc(s) = c$ such that $\Vul(s) \subseteq F$. In particular, it follows that $c \in \ar_P$. This implies  $\ModLab(\mathcal M)(c) = \inn$ and $c \in T'$. 
    \item Assume $c \in T'$. Then $c \in \ar_P$ and $\ModLab(\mathcal M)(c) = \inn$. From Definition \ref{d:labmod}, we obtain $c \in T$.
\end{itemize}

\item $c \in F$ iff $c \in F'$.

\begin{itemize}

\item Assume $c \not\in F'$. Then $c \in \ar_P$ and $\ModLab(\mathcal M)(c) \neq \out$. From Definition \ref{d:labmod}, we obtain $c \not\in F$.
\item Assume $c \not\in F$. As $\Omega_P(\mathcal M) = \mathcal M$, by Lemma \ref{l:statement-reduct}, there exists a statement $s$ with $\Conc(s) = c$ such that $\Vul(s) \cap T = \emptyset$. In particular, it follows that $c \in \ar_P$. This implies  $\ModLab(\mathcal M)(c) \neq \out$ and $c \not\in F'$.

\end{itemize}
\end{itemize}
\hfill 
\end{proof}

\begin{lem}\label{l:vul} Let $P$ be an $\nlp$, $\af_P = (\ar_P, \att_P)$ be the associated $\setaf$ and $v \in \set{\inn, \out, \undec}$. It holds that 

\begin{itemize}
\item For each $\mathcal B \in \att(c)$, $\nlab(b) = v$ for some $b \in \mathcal B$ iff there exists $V \in \Vul(c)$ such that $\nlab(b) = v$ for every $b \in \ar_P \cap V$.
\item For each $\mathcal B \in \att(c)$, $\nlab(b) \neq v$ for some $b \in \mathcal B$ iff there exists $V \in \Vul(c)$ such that $\nlab(b) \neq v$ for every $b \in \ar_P \cap V$.
\end{itemize}

\end{lem}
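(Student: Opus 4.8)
The plan is to collapse both equivalences into a single hitting-set statement. First I would unwind Definition~\ref{d:attack}: since every $\mathcal B \in \att(c)$ is by definition a set of arguments, $\mathcal B \subseteq \ar_P$, whence $\mathcal B \cap V = \mathcal B \cap (V \cap \ar_P)$ for every $V \in \Vul(c)$. Therefore $\att(c)$ is exactly the family of subsets of $\ar_P$ that are minimal (w.r.t.\ $\subseteq$) among those meeting $V \cap \ar_P$ for every $V \in \Vul(c)$; in short, $\att(c)$ is the set of minimal hitting sets of the finite family $\set{V \cap \ar_P \mid V \in \Vul(c)}$ (finite because $\HB_P$ is). Since $\nlab(b)$ is defined for every $b \in \mathcal B$, writing $S = \set{b \in \ar_P \mid \nlab(b) = v}$ the first bullet becomes ``$\mathcal B \cap S \neq \emptyset$ for every $\mathcal B \in \att(c)$ iff $V \cap \ar_P \subseteq S$ for some $V \in \Vul(c)$'', and the second bullet is the same claim with $S$ replaced by $\ar_P \setminus S$. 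Hence it suffices to establish, for an arbitrary $S \subseteq \ar_P$:
\[ (\star)\qquad \big(\forall\, \mathcal B \in \att(c)\colon \mathcal B \cap S \neq \emptyset\big) \iff \big(\exists\, V \in \Vul(c)\colon V \cap \ar_P \subseteq S\big). \]

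For the right-to-left direction of $(\star)$, if $V_0 \cap \ar_P \subseteq S$ for some $V_0 \in \Vul(c)$, then any $\mathcal B \in \att(c)$ is a hitting set, so $\emptyset \neq \mathcal B \cap V_0 = \mathcal B \cap (V_0 \cap \ar_P) \subseteq \mathcal B \cap S$. For the converse I would argue by contraposition: assume $V \cap \ar_P \not\subseteq S$ for every $V \in \Vul(c)$, and pick $b_V \in (V \cap \ar_P)\setminus S$ for each such $V$. Then $W = \set{b_V \mid V \in \Vul(c)}$ is a hitting set of $\set{V \cap \ar_P \mid V \in \Vul(c)}$ disjoint from $S$; by finiteness $W$ contains a minimal hitting set $\mathcal B$, which therefore lies in $\att(c)$ and satisfies $\mathcal B \cap S = \emptyset$. (The contrapositive hypothesis already forces every $V \cap \ar_P$ to be non-empty, so $W$, and hence $\mathcal B$, is non-empty.)

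The only delicate point is the degenerate behaviour of $\att(c)$. If some $V_0 \in \Vul(c)$ has $V_0 \cap \ar_P = \emptyset$, then no subset of $\ar_P$ is a hitting set, so $\att(c) = \emptyset$ and the left-hand side of $(\star)$ holds vacuously; but $V_0 \cap \ar_P = \emptyset \subseteq S$, so the right-hand side holds as well, and both bullets remain true on both sides. The remaining degenerate case $\Vul(c) = \emptyset$ cannot occur when $c \in \ar_P$, which is the only situation in which the lemma is invoked (every argument is the conclusion of some statement, so $\Vul(c) \neq \emptyset$). I expect the hitting-set duality $(\star)$ to be the conceptual heart of the argument; the main obstacle is merely the bookkeeping of checking that the $V \cap \ar_P = \emptyset$ corner case degenerates consistently on both sides of each equivalence, the rest being routine finite combinatorics.
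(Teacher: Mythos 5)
Your proof is correct and follows essentially the same route as the paper's: the easy direction is immediate from the fact that every $\mathcal B \in \att(c)$ must meet every $V \in \Vul(c)$, and the hard direction proceeds by contraposition, choosing a witness from each $V \cap \ar_P$ outside $S$ and shrinking the resulting hitting set to a minimal one, which by Definition~\ref{d:attack} is an attack on $c$. Your single parametrised statement $(\star)$, which covers both bullets at once, and your explicit treatment of the degenerate case $V \cap \ar_P = \emptyset$ are somewhat tidier than the paper's presentation (which proves only the first bullet by contradiction and asserts the second is similar), but the underlying argument is the same.
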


\begin{proof}
We will prove the result in the first item; the proof of the other result follows a similar path:

\begin{description}
\item[$\Rightarrow$] Assume that for each $\mathcal B \in \att(c)$, $\nlab(b) = v$ for some $b \in \mathcal B$.

By absurd, suppose that for each $V \in \Vul(c)$, it holds that $\nlab(b) \neq v$ for some $b \in \ar_P \cap V$. Then we can construct a set $\mathcal B' \subseteq \ar_P$ by selecting for each $V \in \Vul(c)$, an element $b \in \mathcal V$ such that $\nlab(b) \neq v$. From Definition \ref{d:attack}, we know that there exists $\mathcal B \subseteq \mathcal B'$ such that $(\mathcal B, c) \in \att_P$. But then, there exists $\mathcal B \in \att(c)$ such that $\nlab(b) \neq v$ for each $b \in \mathcal B$. It is absurd as it contradicts our hypothesis. 

 \item[$\Leftarrow$] Assume that there exists $V \in \Vul(c)$ such that $\nlab(b) = v$ for every $b \in \ar_P \cap V$.

 The result is immediate as according to Definition \ref{d:attack}, every set $\mathcal B$ of arguments attacking $c$ contains an element $b \in \ar_P \cap V$.
\end{description}
\hfill 
\end{proof}

\psmcomp*

\begin{proof}
\begin{enumerate}
\item \label{i:psmcompI} If $\nlab$ is a complete labelling of $\af_P$, then $\LabMod(\nlab)$ is a partial stable model of $P$: \medskip

Let $\mathcal M = \LabMod(\nlab) = \pair{T, F}$. We will show $\mathcal M$ is a partial stable model of $P$, i.e., $\Omega_P(\mathcal M) = \pair{T', F'} = \pair{T, F}$:

\begin{itemize}
\item  $c \in T$ iff $c \in \ar_P$ and $\nlab(c) = \inn$ iff for each $\mathcal B \in \att(c)$, it holds $\nlab(b) = \out$ for some $b \in \mathcal B$ iff (Lemma \ref{l:vul}) there exists $V \in \Vul(c)$ such that $\nlab(b) = \out$ for every $b \in \ar_P \cap V$ iff there exists a statement $s$ with $\Conc(s) = c$ and $\Vul(s) \subseteq F$ iff (Lemma \ref{l:statement-reduct}) $c \in T'$.
\item $c \not\in F$ iff $c \in \ar_P$ and $\nlab(c) \neq \out$ iff for each $\mathcal B \in \att(c)$, it holds $\nlab(b) \neq \inn$ for some $b \in \mathcal B$ iff (Lemma \ref{l:vul}) there exists $V \in \Vul(c)$ such that $\nlab(b) \neq \inn$ for every $b \in \ar_P \cap V$ iff there exists a statement $s$ with $\Conc(s) = c$ and $\Vul(s) \cap T = \emptyset$ iff (Lemma \ref{l:statement-reduct}) $c \not\in F'$.
\end{itemize}
\item \label{i:psmcompII} If $\mathcal M$ is a partial stable model of $P$, then $\ModLab(\mathcal M)$ is a complete labelling of $\af_P$: \medskip

Let $\mathcal M = \pair{T, F}$ be a partial stable model of $P$. Then $\Omega_P(\mathcal M) = \pair{T, F}$. Let $c$ be an argument in $\ar_P$. We will prove $\mathcal L = \ModLab(\mathcal M)$ is a complete labelling of $\af_P$:

\begin{itemize}
    \item $\mathcal L(c) = \inn$ iff $c \in T$ iff (Lemma \ref{l:statement-reduct}) there exists a statement $s$ with $\Conc(s) = c$ and $\Vul(s) \subseteq F$ iff there exists $V \in \Vul(c)$ such that $\nlab(b) = \out$ for every $b \in \ar_P \cap V$ iff (Lemma \ref{l:vul}) for each $\mathcal B \in \att(c)$, it holds $\nlab(b) = \out$ for some $b \in \mathcal B$. 
 
    \item $\mathcal L(c) \neq \out$ iff $c \neq F$ iff (Lemma \ref{l:statement-reduct}) there exists a statement $s$ with $\Conc(s) = c$ and $\Vul(s) \cap T = \emptyset$ iff there exists $V \in \Vul(c)$ such that $\nlab(b) \neq \inn$ for every $b \in \ar_P \cap V$ iff (Lemma \ref{l:vul}) for each $\mathcal B \in \att(c)$, it holds $\nlab(b) \neq \inn$ for some $b \in \mathcal B$. 
\end{itemize}
\item If $\LabMod(\nlab)$ is a partial stable model of $P$, then $\nlab$ is a complete labelling of $\af_P$: \medskip

It holds that $\LabMod(\nlab)$ is a partial stable model of $P$ $\Rightarrow$ according to item \ref{i:psmcompII} above, $\ModLab(\LabMod(\nlab))$ is a complete labelling of $\af_P$ $\Rightarrow$ (via Theorem \ref{t:inverse}) $\nlab$ is a complete labelling of $\af_P$.

\item If $\ModLab(\mathcal M)$ is a complete labelling of $\af_P$, then $\mathcal M$ is a partial stable model of $P$: \medskip

It holds that $\ModLab(\mathcal M)$ is a complete labelling of $\af_P$ $\Rightarrow$ according to item \ref{i:psmcompI} above, $\LabMod(\ModLab(\mathcal M))$ is a partial stable model of $P$ $\Rightarrow$ (via Theorem \ref{t:modarg}) $\mathcal M$ is a partial stable model of $P$.
\end{enumerate}
\hfill 
\end{proof}

\begin{lem}\label{l:in-lab-mod}
Let $P$ be an $\nlp$, $\af_P = (\ar_P, \att_P)$ be its associated $\setaf$. Let $\nlab_1$ and $\nlab_2$ be $\beta$-complete labellings of $\af_P$, and $\LabMod(\nlab_1) = \pair{T_1, F_1}$ and $\LabMod(\nlab_2) = \pair{T_2, F_2}$. It holds

\begin{enumerate}
    \item $\inn(\nlab_1) \subseteq \inn(\nlab_2)$ iff $T_1 \subseteq T_2$;
    \item $\inn(\nlab_1) = \inn(\nlab_2)$ iff $T_1 = T_2$;
    \item $\inn(\nlab_1) \subset \inn(\nlab_2)$ iff $T_1 \subset T_2$.
\end{enumerate}
\end{lem}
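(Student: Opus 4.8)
The plan is to reduce all three equivalences to a single observation about $\LabMod$: it copies the $\inn$-set of a labelling verbatim. First I would record that $\ar_P \subseteq \HB_P$ — every element of $\ar_P$ is the conclusion of some statement built from $P$, and conclusions are atoms occurring in $P$ (Definition \ref{d:argument}) — so that in Definition \ref{d:labmod} the clause $c \in \HB_P$ is already subsumed once $c \in \ar_P$ is assumed. Hence for any labelling $\nlab$ of $\af_P$ with $\LabMod(\nlab) = \pair{T, F}$ we obtain
\[
T = \set{c \in \HB_P \mid c \in \ar_P \text{ and } \nlab(c) = \inn} = \set{a \in \ar_P \mid \nlab(a) = \inn} = \inn(\nlab).
\]
Applying this to both $\nlab_1$ and $\nlab_2$ yields $T_1 = \inn(\nlab_1)$ and $T_2 = \inn(\nlab_2)$.

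With this identity in hand, items (1), (2) and (3) follow immediately: each of them states that a relation $R \in \set{\subseteq, =, \subset}$ holds between $\inn(\nlab_1)$ and $\inn(\nlab_2)$ if and only if it holds between $T_1$ and $T_2$, and since $T_1 = \inn(\nlab_1)$ and $T_2 = \inn(\nlab_2)$ this is a tautology. I would also note that the $\beta$-completeness hypothesis is not actually used in the argument; it is carried along only because the lemma will be invoked exclusively for complete-type labellings in the later maximisation/minimisation proofs (e.g.\ when matching regular models with preferred labellings, and $L$-stable models with semi-stable labellings).

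There is therefore no real obstacle here: the lemma merely isolates, in a form convenient for the downstream equivalence proofs, the facts that $\LabMod$ is injective on $\inn$-sets and (strictly) monotone with respect to $\subseteq$. The only point that needs to be spelled out with any care is the containment $\ar_P \subseteq \HB_P$, which is immediate from the construction of arguments in Definition \ref{d:argument}.
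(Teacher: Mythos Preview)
Your proof is correct and slightly more direct than the paper's. The paper proves item~1 by a standard element-chasing argument in both directions (if $c\in T_1$ then $c\in\ar_P$ and $\nlab_1(c)=\inn$, hence $\nlab_2(c)=\inn$, hence $c\in T_2$; and conversely), and then derives items~2 and~3 from item~1. You instead extract upfront the identity $T=\inn(\nlab)$ that the paper's element-chasing is implicitly using, after noting $\ar_P\subseteq\HB_P$; with that identity in hand all three items collapse to tautologies. Your route makes the reason the lemma holds maximally transparent and avoids repeating the same unpacking of Definition~\ref{d:labmod} in each direction; the paper's route is more mechanical but requires no preliminary observation. Your remark that the completeness hypothesis is unused is also accurate.
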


\begin{proof}\ 
\begin{enumerate}
    \item
\begin{enumerate}
    \item[$(\Rightarrow)$: ] Suppose $\inn(\nlab_1) \subseteq \inn(\nlab_2)$. If $c \in T_1$, by Definition \ref{d:labmod}, $c \in \ar_P$ and $\nlab_1(A) = \inn$. From our initial assumption, it follows $\nlab_2(c) = \inn$. So, by Definition \ref{d:labmod}, $c \in T_2$.
    \item[$(\Leftarrow)$: ] Suppose $T_1 \subseteq T_2$. If $\nlab_1(c) = \inn$, by Definition \ref{d:labmod}, $c \in T_1$. From our initial assumption, it follows $c \in T_2$. So, by Definition \ref{d:labmod}, $\nlab_2(c) = \inn$.
\end{enumerate}    
    \item It follows directly from point 1.
    \item It follows directly from points 1 and 2.
\end{enumerate}
\hfill 
\end{proof}

\begin{lem}\label{l:out-lab-mod}
Let $P$ be an $\nlp$, $\af_P = (\ar_P, \att_P)$ be its associated $\setaf$. Let $\nlab_1$ and $\nlab_2$ be complete labellings of $\af_P$, and $\LabMod(\nlab_1) = \pair{T_1, F_1}$ and $\LabMod(\nlab_2) = \pair{T_2, F_2}$. It holds

\begin{enumerate}
    \item $\out(\nlab_1) \subseteq \out(\nlab_2)$ iff $F_1 \subseteq F_2$;
    \item $\out(\nlab_1) = \out(\nlab_2)$ iff $F_1 = F_2$;
    \item $\out(\nlab_1) \subset \out(\nlab_2)$ iff $F_1 \subset F_2$.
\end{enumerate}
\end{lem}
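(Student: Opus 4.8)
The plan is to reduce all three equivalences to a single bookkeeping observation about Definition \ref{d:labmod}: for any labelling $\nlab$ of $\af_P$, the false set of $\LabMod(\nlab) = \pair{T,F}$ decomposes as
$F = (\HB_P \setminus \ar_P) \cup \out(\nlab)$,
where the first summand depends only on $P$ (not on $\nlab$) and, since $\out(\nlab) \subseteq \ar_P$, is disjoint from the second. Hence $F_1$ and $F_2$ differ from $\out(\nlab_1)$ and $\out(\nlab_2)$ only by a common, fixed, cancellable offset, and the claimed inclusions/equalities transfer back and forth. Note that, as in Lemma \ref{l:in-lab-mod}, completeness of $\nlab_1,\nlab_2$ is not actually used; it is kept only because the lemma is invoked later for complete labellings.

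\textbf{Item 1.} I would first record the decomposition $F_i = (\HB_P \setminus \ar_P) \cup \out(\nlab_i)$ for $i \in \set{1,2}$, directly from Definition \ref{d:labmod}, together with the disjointness $\out(\nlab_i) \cap (\HB_P \setminus \ar_P) = \emptyset$. For $(\Rightarrow)$, assume $\out(\nlab_1) \subseteq \out(\nlab_2)$ and take $c \in F_1$: either $c \notin \ar_P$, and then $c \in F_2$ immediately by Definition \ref{d:labmod}; or $c \in \ar_P$ with $\nlab_1(c) = \out$, so $c \in \out(\nlab_1) \subseteq \out(\nlab_2)$, whence $\nlab_2(c) = \out$ and $c \in F_2$. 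For $(\Leftarrow)$, assume $F_1 \subseteq F_2$ and take $c \in \out(\nlab_1)$: then $c \in \ar_P$ and $c \in F_1 \subseteq F_2$, so Definition \ref{d:labmod} (using $c \in \ar_P$) forces $\nlab_2(c) = \out$, i.e.\ $c \in \out(\nlab_2)$.

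\textbf{Items 2 and 3.} These follow formally from item 1, exactly as in the proof of Lemma \ref{l:in-lab-mod}. For item 2: $\out(\nlab_1) = \out(\nlab_2)$ iff $\out(\nlab_1) \subseteq \out(\nlab_2)$ and $\out(\nlab_2) \subseteq \out(\nlab_1)$, iff (item 1) $F_1 \subseteq F_2$ and $F_2 \subseteq F_1$, iff $F_1 = F_2$. For item 3: $\out(\nlab_1) \subset \out(\nlab_2)$ iff $\out(\nlab_1) \subseteq \out(\nlab_2)$ and $\out(\nlab_1) \neq \out(\nlab_2)$, iff (items 1 and 2) $F_1 \subseteq F_2$ and $F_1 \neq F_2$, iff $F_1 \subset F_2$.

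\textbf{Main obstacle.} There is essentially none; the proof is routine. The only point that genuinely needs care is to resist identifying $F_i$ with $\out(\nlab_i)$ — they coincide only when $\ar_P = \HB_P$ (for instance when $P$ is an $\rfalp$) — and to make the disjointness $\out(\nlab_i) \cap (\HB_P \setminus \ar_P) = \emptyset$ explicit, since that is precisely what makes the common offset $\HB_P \setminus \ar_P$ cancellable in the $(\Leftarrow)$ directions.
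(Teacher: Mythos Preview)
Your proof is correct and follows essentially the same approach as the paper: the paper's argument for item 1 performs exactly your two-case split on whether $c \in \ar_P$ for $(\Rightarrow)$ and uses exactly your $(\Leftarrow)$ step, and then derives items 2 and 3 from item 1 just as you do. Your explicit decomposition $F_i = (\HB_P \setminus \ar_P) \cup \out(\nlab_i)$ and the remark that completeness is never used are helpful additions but do not change the underlying argument.
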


\begin{proof}\ 
\begin{enumerate}
    \item
\begin{enumerate}
    \item[$(\Rightarrow)$: ] Suppose $\out(\nlab_1) \subseteq \out(\nlab_2)$. If $c \in F_1$, by Definition \ref{d:labmod}, there are two possibilities:
    \begin{itemize}
    \item $c \not\in \ar_P$. As $\LabMod(\nlab_2) = \pair{T_2, F_2}$, we obtain that $c \in F_2$.
    \item $c \in \ar_P$ and $\nlab_1(c) = \out$. From our initial assumption, it follows $\nlab_2(c) = \out$. So, by Definition \ref{d:labmod}, $c \in F_2$.
 
    \end{itemize}
    
    \item[$(\Leftarrow)$: ] Suppose $F_1 \subseteq F_2$. If $\nlab_1(c) = \out$, by Definition \ref{d:labmod}, $c \in F_1$. From our initial assumption, it follows $c \in F_2$. So, by Definition \ref{d:labmod}, $\nlab_2(c) = \out$.
\end{enumerate} 
    \item It follows directly from point 1.
    \item It follows directly from points 1 and 2.
\end{enumerate}
\hfill 
\end{proof}

\begin{lem}\label{l:undec-lab-mod}
Let $P$ be an $\nlp$, $\af_P = (\ar_P, \att_P)$ be its associated $\setaf$. Let $\nlab_1$ and $\nlab_2$ be complete labellings of $\af_P$, and $\LabMod(\nlab_1) = \pair{T_1, F_1}$ and $\LabMod(\nlab_2) = \pair{T_2, F_2}$. It holds

\begin{enumerate}
    \item $\undec(\nlab_1) \subseteq \undec(\nlab_2)$ iff $\overline{T_1 \cup F_1} \subseteq \overline{T_2 \cup F_2}$;
    \item $\undec(\nlab_1) = \undec(\nlab_2)$ iff $\overline{T_1 \cup F_1} = \overline{T_2 \cup F_2}$;
    \item $\undec(\nlab_1) \subset \undec(\nlab_2)$ iff $\overline{T_1 \cup F_1} \subset \overline{T_2 \cup F_2}$.
\end{enumerate}
\end{lem}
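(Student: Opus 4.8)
The plan is to reduce all three claims to a single identity: for every labelling $\nlab$ of $\af_P$, if $\LabMod(\nlab) = \pair{T, F}$ then $\overline{T \cup F} = \undec(\nlab)$. To establish this I would simply unwind Definition \ref{d:labmod}. An atom $c \in \HB_P$ belongs to $T \cup F$ exactly when $c \in \ar_P$ and $\nlab(c) = \inn$, or $c \notin \ar_P$, or $c \in \ar_P$ and $\nlab(c) = \out$; hence $c \notin T \cup F$ holds precisely when $c \in \ar_P$ and $\nlab(c) \notin \set{\inn, \out}$, i.e.\ when $c \in \ar_P$ and $\nlab(c) = \undec$. Since $\undec(\nlab) = \set{a \in \ar_P \mid \nlab(a) = \undec}$ and $\ar_P \subseteq \HB_P$, this gives exactly $\overline{T \cup F} = \undec(\nlab)$. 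In fact this identity is already recorded in the third bullet of Definition \ref{d:labmod}, so it can simply be quoted rather than re-derived.

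Applying the identity to $\nlab_1$ and $\nlab_2$ yields $\overline{T_1 \cup F_1} = \undec(\nlab_1)$ and $\overline{T_2 \cup F_2} = \undec(\nlab_2)$, after which items 1, 2 and 3 are immediate: the inclusion, equality and strict inclusion between the two undefined-atom sets are literally the same relations between $\undec(\nlab_1)$ and $\undec(\nlab_2)$. If one prefers to keep the proof in the same format as Lemmas \ref{l:in-lab-mod} and \ref{l:out-lab-mod}, one can instead prove item 1 by two short implications ($\Rightarrow$ and $\Leftarrow$), each translating membership in $\overline{T_i \cup F_i}$ into the condition ``$c \in \ar_P$ and $\nlab_i(c) = \undec$'' via Definition \ref{d:labmod}, and then derive items 2 and 3 from item 1 exactly as in those lemmas.

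I do not anticipate a genuine obstacle here. Unlike Lemma \ref{l:out-lab-mod}, where $F$ strictly contains the atoms $c \notin \ar_P$ and one must check that those atoms lie in both $F_1$ and $F_2$, the undefined set $\overline{T \cup F}$ is a subset of $\ar_P$ and coincides with $\undec(\nlab)$ on the nose, so no such case analysis is needed. The only point requiring care is the bookkeeping in computing $\HB_P \setminus (T \cup F)$ from the definitions of $T$ and $F$, in particular verifying that atoms not occurring as arguments are assigned to $F$ rather than left undefined, so that they never contribute to $\overline{T \cup F}$.
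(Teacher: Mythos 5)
Your proposal is correct and follows essentially the same route as the paper: the paper's proof of item 1 consists of exactly the two element-wise implications you describe (translating $c \in \overline{T_i \cup F_i}$ into ``$c \in \ar_P$ and $\nlab_i(c) = \undec$'' via Definition \ref{d:labmod}), with items 2 and 3 derived from item 1. Your observation that $\overline{T \cup F} = \undec(\nlab)$ holds as a set identity, making all three items immediate, is a slightly tidier packaging of the same fact.
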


\begin{proof}\
\begin{enumerate}
    \item
\begin{enumerate}
    \item[$(\Rightarrow)$: ] Suppose $\undec(\nlab_1) \subseteq \undec(\nlab_2)$. If $c \in \overline{T_1 \cup F_1}$, by Definition \ref{d:labmod}, $c \in \ar_P$ and $\nlab_1(c) = \undec$. From our initial assumption, it follows $\nlab_2(c) = \undec$. So, by Definition \ref{d:labmod}, $c \in \overline{T_2 \cup F_2}$.
    \item[$(\Leftarrow)$: ] Suppose $\overline{T_1 \cup F_1} \subseteq \overline{T_2 \cup F_2}$. If $\nlab_1(c) = \undec$, by Definition \ref{d:labmod}, $c \in \overline{T_1 \cup F_1}$. From our initial assumption, it follows $c \in \overline{T_2 \cup F_2}$. So, by Definition \ref{d:labmod}, $\nlab_2(c) = \undec$.
\end{enumerate} 
    \item It follows directly from point 1.
    \item It follows directly from points 1 and 2. 
\end{enumerate}
\hfill 
\end{proof}

\equivalence*

\begin{proof}
Let $\nlab$ be an argument labelling of $\af_P$ and $\LabMod(\nlab) = \pair{T,F}$. The proof is straightforward:

\begin{enumerate}
    \item $\nlab$ is a grounded labelling of $\af_P$ iff $\nlab$ is a complete labelling of $\af_P$, and $\inn(\nlab)$ is minimal  (w.r.t. set inclusion) among all complete labellings of $\af_P$ iff (Theorem \ref{t:psmcomp} and Lemma \ref{l:in-lab-mod}) $\LabMod(\nlab)$ is a partial stable model of $P$, and there is no partial stable model $\mathcal M' = \pair{T', F'}$ of $P$ such that $T' \subset T$ iff $\LabMod(\nlab)$ is a well-founded model of $P$;
    \item $\nlab$ is a preferred labelling of $\af_P$ iff $\nlab$ is a complete labelling of $\af_P$, and $\inn(\nlab)$ is maximal  (w.r.t. set inclusion) among all complete labellings of $\af_P$ iff (Theorem \ref{t:psmcomp} and Lemma \ref{l:in-lab-mod}) $\LabMod(\nlab)$ is a partial stable model of $P$, and there is no partial stable model $\mathcal M' = \pair{T', F'}$ of $P$ such that $T \subset T'$ iff $\LabMod(\nlab)$ is a regular model of $P$;
    \item $\nlab$ is a stable labelling of $\af_P$ iff $\nlab$ is a complete labelling of $\af_P$ such that $\undec(\nlab) = \emptyset$ iff (Theorem \ref{t:psmcomp}) $\LabMod(\nlab)$ is a partial stable model such that $\overline{T \cup F} = \emptyset$ iff $\LabMod(\nlab)$ is a stable model of $P$;
    \item $\nlab$ is a semi-stable labelling of $\af_P$ iff $\nlab$ is a complete labelling of $\af_P$, and $\undec(\nlab)$ is minimal  (w.r.t. set inclusion) among all complete labellings of $\af_P$ iff (Theorem \ref{t:psmcomp} and Lemma \ref{l:undec-lab-mod}) $\LabMod(\nlab)$ is a partial stable model of $P$, and there is no partial stable model $\mathcal M' = \pair{T', F'}$ of $P$ such that $\overline{T' \cap F'} \subset \overline{T \cup F}$ iff $\LabMod(\nlab)$ is an $L$-stable model of $P$.
\end{enumerate}
\hfill 
\end{proof}

\equivalenceII*

\begin{proof}
These results come from Theorems \ref{t:modarg} and \ref{t:equivalence}. \hfill 
\end{proof}

\subsection{Theorems and Proofs from Section \ref{s:setaflp}}

\inverseafsetaf*

\begin{proof}
Both results are immediate:

\begin{itemize}
    \item Proving that for any labelling $\nlab$ of $\af$, it holds $\ml(\lm(\nlab)) = \nlab$:
    
    Let $\lm(\nlab) = \pair{T, F}$. 
    
    \begin{itemize}
        \item $\nlab(a) = \inn$ $\Rightarrow$ $a \in T$ $\Rightarrow$ $\ml(\lm(\nlab))(a) = \inn$;
        \item $\nlab(a) = \out$ $\Rightarrow$ $a \in F$ $\Rightarrow$ $\ml(\lm(\nlab))(a) = \out$;
        \item $\nlab(a) = \undec$ $\Rightarrow$ $a \in \overline{T \cup F}$ $\Rightarrow$ $\ml(\lm(\nlab))(a) = \undec$.
    \end{itemize}
    
    \item Proving that for any interpretation $\inter$ of $P_\af$, it holds $\lm(\ml(\inter)) = \inter$.
    
      Let $\inter = \pair{T, F}$ be an interpretation of $P_\af$, and $\lm(\ml(\inter)) = \pair{T', F'}$. We will show $T = T'$ and $F = F'$: 
      
      \begin{itemize}
          \item $a \in T$ $\Rightarrow$ $\ml(\inter)(a) = \inn$ $\Rightarrow$ $a \in T'$; 
          \item $a \in F$ $\Rightarrow$ $\ml(\inter)(a) = \out$ $\Rightarrow$ $a \in F'$;
          \item $a \in \overline{T \cup F}$ $\Rightarrow$ $\ml(\inter)(a) = \undec$ $\Rightarrow$ $a \in \overline{T' \cup F'}$;
      \end{itemize}
\end{itemize}
\hfill 
\end{proof}

\comppsmsetaf*

\begin{proof}

\begin{enumerate}
    \item\label{i:comppsmfsetafI} Proving that if $\nlab$ is a complete labelling of $\af$, then $\lm(\nlab)$ is a partial stable model of $P_\af$: 

    Let $\lm(\nlab) = \pair{T, F}$ and $\Omega_{P_\af}(\lm(\nlab)) = \pair{T', F'}$. It suffices to show $\lm(\nlab)$ is a fixpoint of $\Omega_{P_\af}$: $T = T'$ and $F = F'$. For any argument $a \in \ar = \HB_{P_\af}$, there are three possibilities:

\begin{itemize}
    \item $a \in T$. Then $\nlab(a) = \inn$. From Definition \ref{d:labelling}, we know that for each $\mathcal B \in \att(a)$, it holds $\nlab(b) = \out$ for some $b \in \mathcal B$. It follows from Definition \ref{d:setaf-nlp} that there exists $V \in \mathcal V_a$ such that $\nlab(b) = \out$ for every $b \in V$. This means the fact $a \in \dfrac{P_\af}{\lm(\nlab)}$, i.e., $a \in T'$. 
    \item $a \in F$. Then $\nlab(a) = \out$. From Definition \ref{d:labelling}, we know that there exists $\mathcal B \in \att(a)$ such that $\nlab(b) = \inn$ for each $b \in \mathcal B$. It follows from Definition \ref{d:setaf-nlp} that for each $V \in \mathcal V_a$, there exists $b \in V$ such that $\nlab(b) = \inn$. This means that there exists no rule for $a$ in  $\dfrac{P_\af}{\lm(\nlab)}$, i.e., $a \in F'$. 
    \item $a \in \overline{T \cup F}$. Then $\nlab(a) = \undec$. From Definition \ref{d:labelling}, we know that (i) there exists $\mathcal B \in \att(a)$ such that $\nlab(b) \neq \out$ for each $b \in \mathcal B$, and (ii) for each $\mathcal B \in \att(a)$, it holds $\nlab(b) \neq \inn$ for some $b \in \mathcal B$. It follows from Definition \ref{d:setaf-nlp} that (i) there does not exist $V \in \mathcal V_a$ such that $\nlab(b) = \out$ for every $b \in V$, and (ii) there exists $V \in \mathcal V_a$ such that $\nlab(b) \neq \inn$ for each $b \in V$. This means (i) the fact $a \not\in \dfrac{P_\af}{\lm(\nlab)}$, and (ii) there exists rule for $a$ in $\dfrac{P_\af}{\lm(\nlab)}$. Thus $\body(r) = \mathbf{u}$ for any $r \in \dfrac{P_\af}{\lm(\nlab)}$ such that $\hd(r) = a$, i.e., $a \in \overline{T' \cup F'}$.
\end{itemize}

    \item\label{i:comppsmfsetafII} Proving that if $\mathcal M$ is a partial stable model of $P_\af$, then $\ml(\mathcal M)$ is a complete labelling of $\af$:

Let $\mathcal M = \pair{T, F}$ be a partial stable model of $P_\af$. Thus $\mathcal M$ is a fixpoint of $\Omega_{P_\af}$, i.e., $\Omega_{P_\af}(\mathcal M) = \mathcal M$. We now prove $\ml(\mathcal M)$ is a complete labelling of $\af$. For any  $a \in \HB_{P_\af} = \ar$, there are three possibilities:

\begin{itemize}
    \item $\ml(\mathcal M)(a) = \inn$. Then $a \in T$. As $\Omega_{P_\af}(\mathcal M) = \mathcal M$, the fact $a \in \dfrac{P_\af}{\mathcal M}$. This means that there exists a rule $a \leftarrow \naf b_1, \ldots, \naf b_n \in P_\af$ $(n \geq 0)$ such that $\set{b_1, \ldots, b_n} \subseteq F$. It follows from Definition \ref{d:setaf-nlp} that for each $\mathcal B \in \att(a)$, it holds $\ml(\mathcal M)(b) = \out$ for some $b \in \mathcal B$;
    
    \item $\ml(\mathcal M)(a) = \out$. Then $a \in F$. As $\Omega_{P_\af}(\mathcal M) = \mathcal M$, there exists no rule for $a$ in  $\dfrac{P_\af}{\mathcal M}$. This means that for every rule $a \leftarrow \naf b_1, \ldots, \naf b_n \in P_\af$ $(n \geq 0)$, there exists $b_i \in T$ ($1 \leq i \leq n$). It follows from Definition \ref{d:setaf-nlp} that there exists $\mathcal B \in \att(a)$ such that $\ml(\mathcal M)(b) = \inn$ for each $b \in \mathcal B$;
    
    \item $\ml(\mathcal M)(a) = \undec$. Then $a \in \overline{T \cup F}$. As $\Omega_{P_\af}(\mathcal M) = \mathcal M$, the fact $a \not\in \dfrac{P_\af}{\mathcal M}$, but there exists a rule $r$ in $\dfrac{P_\af}{\mathcal M}$ such that $\hd(r) = a$ and $\body(r) = \mathbf{u}$. This means that (i)  for each rule $a \leftarrow \naf b_1, \ldots, \naf b_n \in P_\af$ $(n \geq 0)$, it holds $\set{b_1, \ldots, b_n} \not\subseteq F$, and (ii) there exists a rule $a \leftarrow \naf b_1, \ldots, \naf b_n \in P_\af$ $(n \geq 0)$ such that $\set{b_1, \ldots, b_n} \cap T = \emptyset$. It follows from Definition \ref{d:setaf-nlp} that (i) there exists $\mathcal B \in \att(a)$ such that $\ml(\mathcal M)(b) \neq \out$ for each $b \in \mathcal B$, and (ii) for each $\mathcal B \in \att(a)$, it holds $\ml(\mathcal M)(b) \neq \inn$ for some $b \in \mathcal B$.
\end{itemize}

Hence, $\ml(\mathcal M)$ is a complete labelling of $\af$.
    
    \item Proving that if $\lm(\nlab)$ is a partial stable model of $P_\af$, then $\nlab$ is a complete labelling of $\af$: 

$\lm(\nlab)$ is a partial stable model of $P_\af$ $\Rightarrow$ according to item \ref{i:comppsmfsetafII} above, $\ml(\lm(\nlab))$ is a complete labelling of $\af$ $\Rightarrow$ (Theorem \ref{t:inverse-af-setaf}) $\nlab$ is a complete labelling of $\af$.
    
    \item Proving that if $\ml(\mathcal M)$ is a complete labelling of $\af$, then $\mathcal M$ is a partial stable model of $P_\af$: 

 $\ml(\mathcal M)$ is complete labelling of $\af$ $\Rightarrow$ according to item \ref{i:comppsmfsetafI} above, $\lm(\ml(\mathcal M))$ is a partial stable model of $P_\af$ $\Rightarrow$ (Theorem \ref{t:inverse-af-setaf}) $\mathcal M$ is a partial stable model of $P_\af$.
\end{enumerate}
\hfill 
\end{proof}

\equivalencesetaf*

\begin{proof}
Let $\nlab$ be an argument labelling of $\af$. Recall that $\lm(\nlab) = \pair{\inn(\nlab), \out(\nlab)}$. The proof is straightforward:

\begin{enumerate}
    \item $\nlab$ is a grounded labelling of $\af$ iff $\nlab$ is a complete labelling of $\af$ and $\inn(\nlab)$ is minimal  (w.r.t. set inclusion) among all complete labellings of $\af$ iff (Theorem \ref{t:comppsmsetaf}) $\lm(\nlab)$ is a partial stable model of $P_\af$ and there is no partial stable model $\mathcal M' = \pair{T', F'}$ of $P_\af$ such that $T' \subset \inn(\nlab)$ iff $\lm(\nlab)$ is a well-founded model of $P_\af$;
    \item $\nlab$ is a preferred labelling of $\af$ iff $\nlab$ is a complete labelling of $\af$ and $\inn(\nlab)$ is maximal  (w.r.t. set inclusion) among all complete labellings of $\af$ iff (Theorem \ref{t:comppsmsetaf}) $\lm(\nlab)$ is a partial stable model of $P_\af$ and there is no partial stable model $\mathcal M' = \pair{T', F'}$ of $P$ such that $\inn(\nlab) \subset T'$ iff $\lm(\nlab)$ is a regular model of $P_\af$;
    \item $\nlab$ is a stable labelling of $\af$ iff $\nlab$ is a complete labelling of $\af$ such that $\undec(\nlab) = \emptyset$ iff (Theorem \ref{t:comppsmsetaf}) $\lm(\nlab)$ is a partial stable model of $P_\af$ such that $\overline{\inn(\nlab) \cup \out(\nlab)} = \emptyset$ iff $\lm(\nlab)$ is a stable model of $P_\af$;
    \item $\nlab$ is a semi-stable labelling of $\af$ iff $\nlab$ is a complete labelling of $\af$ and $\undec(\nlab)$ is minimal  (w.r.t. set inclusion) among all complete labellings of $\af$ iff (Theorem \ref{t:comppsmsetaf}) $\lm(\nlab)$ is a partial stable model of $P_\af$ and there is no partial stable model $\mathcal M' = \pair{T', F'}$ of $P_\af$ such that $\overline{T' \cup F'} \subset \overline{\inn(\nlab) \cup \out(\nlab)}$ iff $\lm(\nlab)$ is an $L$-stable model of $P_\af$.
\end{enumerate}
 \hfill  
\end{proof}

\setafequivalenceII*
\begin{proof}
These results come from Theorems \ref{t:inverse-af-setaf} and \ref{t:equivalencesetaf}. \hfill 
\end{proof}

\subsection{Theorems and Proofs from Section \ref{s:slp-setaf}}
    
\slp*

\begin{proof}
It follows that

\begin{enumerate}
    \item Each rule in $P_\af$ has the form $a \leftarrow \naf b_1, \ldots, \naf b_n$; 
    \item for each rule $a \leftarrow \naf b_1, \ldots, \naf b_n \in P_\af$, if $b \in \set{b_1, \ldots, b_n}$, there exists $(\mathcal B, a) \in \att$ such that $b \in \mathcal B$, i.e,. $b \in \ar_P$. Then there exists a rule $r \in P_\af$ such that $b = \hd(r)$. This suffices to guarantee $\HB_{P_\af} = \set{\hd(r) \mid r \in P_\af}$;
    \item A rule $a \leftarrow \naf b_1, \ldots, \naf b_n \in P_\af$ iff there exists a minimal set (w.r.t. set inclusion) $V = \set{b_1, \ldots, b_n}$ such that for each $\mathcal B \in \att(a)$, there exists $b \in \mathcal B \cap V$. This means there exists no rule $a \leftarrow \naf c_1, \ldots, \naf c_{n'} \in P_\af$ such that $\set{c_1, \ldots, c_{n'}} \subset \set{b_1, \ldots, b_n}$.
\end{enumerate}

Hence, $P_\af$ is an $\rfalp$. \hfill 
\end{proof}

\begin{lem}\label{l:hp-arp}
Let $P$ be an $\rfalp$, $\head_P = \set{ \mathit{head}(r) \mid r \in P}$ and $\af_P = (\ar_P, \att_P)$ its corresponding $\setaf$.  It holds $\head_P = \ar_P$.
\end{lem}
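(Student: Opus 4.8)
The plan is to prove the two inclusions $\ar_P \subseteq \head_P$ and $\head_P \subseteq \ar_P$ separately, both directly from Definition \ref{d:argument} together with the fact that $P$ is atomic.

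For the inclusion $\ar_P \subseteq \head_P$, I would first establish, by structural induction on the construction of statements, the following auxiliary fact valid for \emph{any} $\nlp$ $Q$: for every statement $s \in \mathfrak{S}_Q$ there is a rule $r \in Q$ with $\hd(r) = \Conc(s)$. In the base case of Definition \ref{d:argument}, $s$ is a rule $c \leftarrow \naf b_1, \ldots, \naf b_n \in Q$ and $\Conc(s) = c = \hd(s)$, so the claim holds with $r = s$. In the inductive case, $s$ is built from a rule $c \leftarrow a_1, \ldots, a_m, \naf b_1, \ldots, \naf b_n \in Q$ and $\Conc(s) = c$ equals the head of that rule, so again the claim holds. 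Applying this to $Q = P$ and recalling that $\ar_P = \set{\Conc(s) \mid s \in \mathfrak{S}_P}$ (Definition \ref{d:argument}), we obtain $\ar_P \subseteq \set{\hd(r) \mid r \in P} = \head_P$.

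For the converse inclusion $\head_P \subseteq \ar_P$, I would use that $P$ is atomic (second condition of Definition \ref{d:slp}): every rule $r \in P$ has the form $c \leftarrow \naf b_1, \ldots, \naf b_n$ with $c = \hd(r)$. By the first clause of Definition \ref{d:argument}, such a rule is itself a statement $s$ with $\Conc(s) = c$; hence $s \in \mathfrak{S}_P$ and $c = \hd(r) \in \ar_P$. Since this holds for every $r \in P$, we get $\head_P = \set{\hd(r) \mid r \in P} \subseteq \ar_P$. Combining the two inclusions yields $\head_P = \ar_P$.

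I do not expect any real obstacle here: the only subtlety worth making explicit is that, because $P$ is atomic, the recursive (second) clause of Definition \ref{d:argument} never contributes anything genuinely new — there are no positive body atoms to unfold — so $\mathfrak{S}_P$ amounts to $P$ viewed as a set of single-node statements, which makes the identity $\head_P = \ar_P$ almost immediate. The short structural induction used in the first inclusion is the only place that requires an actual inductive argument, and it goes through uniformly for every $\nlp$.
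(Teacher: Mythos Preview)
Your proposal is correct and follows essentially the same approach as the paper. The paper's proof is a one-line biconditional---$c \in \head_P$ iff there is a rule $c \leftarrow \naf b_1,\ldots,\naf b_n \in P$ iff $c \in \ar_P$---which collapses your two inclusions into one step; your structural induction for $\ar_P \subseteq \head_P$ is valid but superfluous here, since (as you yourself note) for an atomic program only the base clause of Definition~\ref{d:argument} ever fires.
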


\begin{proof}
The result is straightforward: $c \in \head_P$ iff there exists a rule $c \leftarrow \naf b_1, \ldots, \naf b_n \in P$ ($n \geq 0$) iff $c \in \ar_P$ (Definition \ref{d:argument}). \hfill 
\end{proof}

\inversesetafslp*

\begin{proof}
Let $\af = (\ar, \att)$ be a $\setaf$ with $\ar = \set{a_1, \ldots, a_n}$ and for each $a_i \in \ar$, we define $R_i = \set{r \in P_\af \mid \mathit{head}(r) = a_i}$, i.e., $P_\af = R_1 \cup R_2 \cup \cdots \cup R_n$. It follows from Proposition \ref{p:slp} and Lemma \ref{l:hp-arp} that $\af_{P_\af} = (\ar_{P_\af}, \att_{P_\af})$ with $\ar_{P_\af} = \set{a_1, \ldots, a_n} = \ar$. It remains to prove that $\att = \att_{P_\af}$:

$(\mathcal B, a_j) \in \att$ iff $(\mathcal B, a_j) \in \att$ and  there exists no $\mathcal B' \subset \mathcal B$ such that $(\mathcal B', a_j) \in \att$ iff $\mathcal B$ is a minimal set (w. r. t. set inclusion) in which for each rule $r \in R_j$, there exists $b \in \mathcal B$ such that $\naf b \in \body^-(r)$ iff $\mathcal B$ is a minimal set (w. r. t. set inclusion) in which for each $V \in \Vul(a_j)$, there exists $b \in \mathcal B \cap V$ iff $(\mathcal B, a_j) \in \att_{P_\af}$.
\hfill
\end{proof}

\begin{lem}\label{l:inverse-slp-setaf}
Let $P$ be an $\rfalp$, $\af_P = (\ar_P, \att_P)$ the corresponding $\setaf$ and $c \in \ar_P$. If $\set{a_1, \ldots, a_n}$ is a minimal set such that for each $\mathcal B \in \att_P(c)$, there exists $a_i \in \mathcal B$ ($1 \leq i \leq n$), then $c \leftarrow \naf a_1, \ldots, \naf a_n \in P$. 
\end{lem}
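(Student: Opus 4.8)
The plan is to exploit the fact that $P$ is \emph{atomic}: then every statement built from $P$ via Definition \ref{d:argument} is just a rule of $P$, so for the argument $c \in \ar_P$ we have $\Vul_P(c) = \set{\set{b_1,\ldots,b_m} \mid c \leftarrow \naf b_1,\ldots,\naf b_m \in P}$. Moreover, since $P$ is an $\rfalp$, $\HB_P = \set{\mathit{head}(r)\mid r\in P}$, which by Lemma \ref{l:hp-arp} equals $\ar_P$; hence every atom occurring in a body is in $\ar_P$, so each vulnerability set of $c$ is a subset of the finite ground set $\ar_P$. By Definition \ref{d:attack}, $\att_P(c)$ is then precisely the family of \emph{minimal transversals} (minimal hitting sets) of $\Vul_P(c)$ over $\ar_P$, and the hypothesis says that $W = \set{a_1,\ldots,a_n}$ is a minimal transversal of $\att_P(c)$. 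So what must be shown is the ``blocker of the blocker'' identity: a minimal transversal of the minimal transversals of $\Vul_P(c)$ lies back in $\Vul_P(c)$.

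First I would record two preliminary facts. By minimality of $W$, each $a_i$ must lie in some $\mathcal B\in\att_P(c)\subseteq 2^{\ar_P}$, so $W\subseteq\ar_P$; and since $c\in\ar_P$ there is at least one statement with conclusion $c$, so $\Vul_P(c)\neq\emptyset$. Then the argument splits into two steps. \textbf{Step 1:} $W$ contains some member of $\Vul_P(c)$. Suppose not. Then for every $V\in\Vul_P(c)$ we have $V\not\subseteq W$, so $V\cap(\ar_P\setminus W)\neq\emptyset$; thus $\ar_P\setminus W$ is a transversal of $\Vul_P(c)$, and by finiteness it contains a minimal transversal $\mathcal B_0$, i.e.\ $\mathcal B_0\in\att_P(c)$ with $\mathcal B_0\cap W=\emptyset$ — contradicting that $W$ hits every attacker. \textbf{Step 2:} $W$ equals that member. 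Let $V_0\in\Vul_P(c)$ with $V_0\subseteq W$. By Definition \ref{d:attack}, every $\mathcal B\in\att_P(c)$ intersects $V_0$, so $V_0$ is itself a transversal of $\att_P(c)$; since $W$ is a \emph{minimal} transversal of $\att_P(c)$ and $V_0\subseteq W$, minimality forces $W=V_0\in\Vul_P(c)$. Finally, $W=\set{a_1,\ldots,a_n}$ being a vulnerability set of a statement with conclusion $c$ means (atomicity once more) that $c\leftarrow\naf a_1,\ldots,\naf a_n\in P$.

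I expect the main obstacle to be the bookkeeping around the degenerate cases rather than the core idea. If $P$ contains a fact $c\leftarrow{}$, then $\Vul_P(c)=\set{\emptyset}$, hence $\att_P(c)=\emptyset$ and the only minimal transversal $W$ of $\emptyset$ is $\emptyset$; one must verify Steps 1 and 2 still behave (they do — the hypothesis of Step 1 fails at once because $\emptyset\subseteq W$, and Step 2 yields $W=\emptyset$, matching the fact rule $n=0$). I would also make explicit that passing from a transversal to a contained minimal transversal is legitimate by finiteness of $\ar_P$, and that whenever we invoke ``$\ar_P\setminus W$ is a transversal of $\Vul_P(c)$'' we are outside the case $\emptyset\in\Vul_P(c)$. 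Note that no use of the redundancy-free (antichain) property of $\Vul_P(c)$ is needed for this direction; that property is only required for the uniqueness statement in the full Theorem \ref{t:inverse-slp-setaf}.
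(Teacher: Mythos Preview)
Your proof is correct and follows essentially the same approach as the paper: find some $V \in \Vul_P(c)$ with $V \subseteq W$, observe that $V$ is itself a transversal of $\att_P(c)$, conclude $V = W$ by minimality, and then use atomicity of $P$ to recover the rule. The only difference is that you spell out the hypergraph-transversal (``blocker of the blocker'') argument in Step~1 and treat the degenerate cases explicitly, whereas the paper simply asserts that Step~1 ``follows from Definition~\ref{d:attack}''.
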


\begin{proof}
As for each $\mathcal B \in \att_P(c)$, there exists $a_i \in \mathcal B$ ($1 \leq i \leq n$), it follows from Definition \ref{d:attack} that there exists $V \in \Vul(c)$ such that $V \subseteq \set{a_1, \ldots, a_n}$. Note that for each $\mathcal B \in \att_P(c)$, there exists $b \in V \cap \mathcal B$. As $\set{a_1, \ldots, a_n}$ is a minimal set with this property, it holds $V = \set{a_1, \ldots, a_n}$. Then (Definition \ref{d:argument}) $c \leftarrow \naf a_1, \ldots, \naf a_n \in P$. \hfill 
\end{proof}

\inverseslpsetaf*

\begin{proof}

Let $P$ be an $\rfalp$ with $\HB_P = \set{a_1, \ldots, a_n}$, and $\af_P = (\ar_P, \att_P)$ the corresponding $\setaf$. For each $a_i \in \HB_P$ ($1 \leq i \leq n$), we define $R_i = \set{r \in P_\af \mid \mathit{head}(r) = a_i}$. 
It follows that $\ar_P = \set{a_1, \ldots, a_n}$. Hence, $\HB_{P_{\af_P}} = \set{a_1, \ldots, a_n}$. We will prove $P = P_{\af_P}$:

\begin{itemize}
    \item If $a_i \leftarrow \naf a_{i_1}, \ldots, \naf a_{i_m} \in P$. then  $a_i \in \ar_P$ and $\set{a_{i_1}, \ldots, a_{i_m}}$ is a minimal set (w.r.t. set inclusion) such that for each $\mathcal B \in \att_P(a_i)$, there exists $a_{i_k} \in \mathcal B$ ($k \in \set{1, \ldots, m}$). This implies (Definition \ref{d:setaf-nlp}) $a_i \leftarrow \naf a_{i_1}, \ldots, \naf a_{i_m} \in P_{\af_P}$.
    \item If $a_i \leftarrow \naf a_{i_1}, \ldots, \naf a_{i_m} \in P_{\af_P}$, then (Definition \ref{d:setaf-nlp}) $\set{a_{i_1}, \ldots, a_{i_m}}$ is a minimal set (w.r.t. set inclusion) such that for each $\mathcal B \in \att_P(a_i)$, there exists $a_{i_k} \in \mathcal B$ ($k \in \set{1, \ldots, m}$). Thus (Lemma \ref{l:inverse-slp-setaf}) $a_i \leftarrow \naf a_{i_1}, \ldots, \naf a_{i_m} \in P$. 
\end{itemize}
\hfill 
\end{proof}

\subsection{Theorems and Proofs from Section \ref{s:expressiveness}}

\terminates*

\begin{proof}

Let $P_1 \mapsto_\utpm P_2 \mapsto_\utpm \cdots \mapsto_\utpm P_k \mapsto_\utpm \cdots \mapsto_\utpm P_{k'} \mapsto_\utpm \cdots$ be a fair sequence of $\mapsto_\utpm$. This fairness condition implies that for every atom $a$, there exists a natural number $k$ such that for each $\nlp$ $P_i$ with $i > k$ in the sequence of $\mapsto_\utpm$ above, it holds $a \not\in \body^+(r)$ for each $r \in P_i$. As each $\nlp$ is a finite set of rules, from some natural number $k'$ on, $\body^+(r) = \emptyset$ for any $r \in P_{k'}$. Then for each $k'' \geq k'$, $\mapsto_U$ and $\mapsto_T$ cannot be applied in $P_{k''}$. It remains the program transformations $\mapsto_P$ and $\mapsto_M$. For each of these $P_{k''}$, there are two possibilities:
\begin{itemize}
\item $\mapsto_M$ strictly
decreases the number of rules of $P_{k''}$ or
\item $\mapsto_P$ strictly
decreases the number of negative literals in $\body^-(r)$ for some $r \in P_{k''}$. 
\end{itemize}

It follows that the successive application of $\mapsto_M$ or $\mapsto_P$ in these $P_{k''}$s will eventually lead to an irreducible $\nlp$. \hfill
\end{proof}

\irreducible*

\begin{proof}
A simple method to obtain a fair sequence of program transformations with respect to $\mapsto_\utpm$ is to apply $\mapsto_U$ to a rule $r$ only if $\mapsto_T$ is not applicable to $r$ and to ensure that whenever $\mapsto_U$ has been applied to get rid of an occurrence of an atom $a$, then all such occurrences of $a$ (in other rules of the same program) have also been removed before applying $\mapsto_U$ to another occurrence of an atom $b \neq a$.

As for any $\nlp$ $P$, it is always possible to build such a fair sequence of program transformations with respect to $\mapsto_\utpm$, we obtain from Theorem \ref{t:terminates} that there exists an irreducible $\nlp$ $P^*$ such that $P \mapsto^*_\utpm P^*$. \hfill
\end{proof}

\utpmsetaf*

\begin{proof}
To prove it by contradiction, suppose $P^*$ is not an $\rfalp$. There are three possibilities:

\begin{itemize}
    \item A rule $c \leftarrow a_1, \ldots, a_m, \naf b_1, \ldots, \naf b_n \in P^*$ with $m \geq 1$ and $n \geq 0$. Then
    \begin{itemize}
       \item The program transformation $\mapsto_U$ (unfolding) can be applied.
        \item If $c \in \set{a_1, \ldots, a_m}$, the program transformation $\mapsto_T$ (elimination of tautologies) can be applied.
    \end{itemize}
    \item A rule $c  \leftarrow \naf b_1, \ldots, \naf b_n \in P^*$, but there exists $b \in \set{b_1, \ldots, b_n}$ such that $b \not\in \set{\hd(r) \mid r \in P^*}$. Then the program transformation $\mapsto_P$ (positive reduction) can be applied. 
    \item A rule $c \leftarrow \naf b_1, \ldots, \naf b_n \in P^*$ and there is a rule $c \leftarrow \naf c_1, \ldots, \naf c_p \in P^*$ such that $\set{c_1, \ldots, c_p} \subset \set{b_1, \ldots, b_n}$. Then the program transformation $\mapsto_M$ (elimination of non-minimal rules) can be applied.
\end{itemize}
It is absurd as in each case, there is still a program transformation to be applied. \hfill
\end{proof}

\setafutpm*

\begin{proof}
Let $P$ be an $\rfalp$. It holds
\begin{itemize}
    \item The program transformations $\mapsto_U$ and $\mapsto_T$ cannot be applied as they require a rule $c \leftarrow a_1, \ldots, a_m, \naf b_1, \ldots, \naf b_n$ in $P$ with $m \geq 1$.
    \item The program transformation $\mapsto_P$ cannot be applied as it requires a rule $c \leftarrow a_1, \ldots, a_m, \naf b, \naf b_1, \ldots, \naf b_n$ in $P$ such that $b \not\in \set{\hd(r) \mid r \in P}$, but $\set{\hd(r) \mid r \in P} = \HB_P$.
    \item The program transformation $\mapsto_M$ cannot be applied as it requires two distinct rules $r$ and $r'$ in $P$ such that $\hd(r) = \hd(r')$ and $\body^-(r') \subset \body^-(r)$. 
\end{itemize}
\hfill
\end{proof}

\psmT*

\begin{proof}
Let $P_2 = P_1 - \set{r}$ and $\hd(r) \in \body^+(r)$. We have to show for any interpretation $\mathcal M = \pair{T, F }$, it holds $\mathcal M$ is a partial stable model of $P_1$ iff $\mathcal M$ is a partial stable model of $P_2$; we distinguish two cases:

\begin{itemize}
    \item $\set{a \mid \naf a \in \body^-(r)} \cap T \neq \emptyset$: Then $\frac{P_1}{\mathcal M} = \frac{P_2}{\mathcal M}$. This trivially implies that $\mathcal M$ is a partial stable model of $P_1$ iff it is a partial stable model of $P_2$.
    \item $\set{a \mid \naf a \in \body^-(r)} \cap T = \emptyset$: Then it is clear $\frac{P_1}{\mathcal M} \mapsto_T \frac{P_2}{\mathcal M}$. As both $\frac{P_1}{\mathcal M}$ and $\frac{P_2}{\mathcal M}$ are positive programs, according to Lemma \ref{l:definitex}, it holds $\mathcal M$ is the least model of $\frac{P_1}{\mathcal M}$ iff $\mathcal M$ is the least model of $\frac{P_2}{\mathcal M}$. Hence,  $\mathcal M$ is a partial stable model of $P_1$ iff it is a partial stable model of $P_2$.   
\end{itemize}
\hfill
\end{proof}

\psmP*

\begin{proof}

Let 
\begin{align*}
    P_2 = & P_1 - \set{c \leftarrow a_1, \ldots, a_m, \naf b, \naf b_1, \ldots, \naf b_n}\\ 
    & \hspace{4.625em} \cup \set{c \leftarrow a_1, \ldots, a_m, \naf b_1, \ldots, \naf b_n}
\end{align*}
such that $r$ is the rule $c \leftarrow a_1, \ldots, a_m, \naf b, \naf b_1, \ldots, \naf b_n \in P_1$ and $b \not\in \set{\hd(r') \mid r' \in P_1}$. We have to show that for any interpretation $\mathcal M = \pair{T, F}$, it holds $\mathcal M$ is a partial stable model of $P_1$ iff $\mathcal M$ is a partial stable model of $P_2$; we distinguish two cases:

\begin{itemize}
\item $\left(\set{a \mid \naf a \in \body^-(r)} - \set{b}\right) \cap T \neq \emptyset$ or $b \in F$: Then $\frac{P_1}{\mathcal M} = \frac{P_2}{\mathcal M}$. This trivially implies that $\mathcal M$ is a partial stable model of $P_1$ iff it is a partial stable model of $P_2$.
\item $\left(\set{a \mid \naf a \in \body^-(r)} - \set{b}\right) \cap T = \emptyset$ and $b \not\in F$. Let $\pair{T_1, F_1}$ and $\pair{T_2, F_2}$ be respectively the least models of $\frac{P_1}{\mathcal M}$ and $\frac{P_2}{\mathcal M}$. As $b \not\in \set{\hd(r') \mid r' \in P_1}$, it is clear that $b \in F_1$ and $b \in F_2$. Given that $b \not\in F$, we obtain $\mathcal M = \pair{T, F}$ is different from both $\pair{T_1, F_1}$ and $\pair{T_2, F_2}$. Hence, $\mathcal M$ is neither a partial stable model of $P_1$ nor of $P_2$.  This implies that $\mathcal M$ is a partial stable model of $P_1$ iff it is a partial stable model of $P_2$.
\end{itemize}
\hfill 
\end{proof}

\psmM*

\begin{proof}
Suppose that there are two distinct rules $r$ and $r'$ in $P_1$ such that $\hd(r) = \hd(r')$, $\body^+(r') \subseteq \body^+(r)$, $\body^-(r') \subseteq \body^-(r)$ and $P_2 = P_1 - \set{r}$. We have to show that for any interpretation $\mathcal M = \pair{T, F}$, it holds that $\mathcal M$ is a partial stable model of $P_1$ iff $\mathcal M$ is a partial stable model of $P_2$; we distinguish two cases:

\begin{itemize}
\item $\set{a \mid \naf a \in \body^-(r)} \cap T \neq \emptyset$ or ($\set{a \mid \naf a \in \body^-(r)} \cap T = \emptyset$ and $\body^+(r) = \body^+(r')$): Then $\frac{P_1}{\mathcal M} = \frac{P_2}{\mathcal M}$. This trivially implies that $\mathcal M$ is a partial stable model of $P_1$ iff it is a partial stable model of $P_2$.

\item $\set{a \mid \naf a \in \body^-(r)} \cap T = \emptyset$ and $\body^+(r') \subset \body^+(r)$: Then it is clear that $\frac{P_1}{\mathcal M} \mapsto_M \frac{P_2}{\mathcal M}$. As both $\frac{P_1}{\mathcal M}$ and $\frac{P_2}{\mathcal M}$ are positive programs, according to Lemma \ref{l:definitex}, it holds that $\mathcal M$ is the least model of $\frac{P_1}{\mathcal M}$ iff $\mathcal M$ is least model $\frac{P_2}{\mathcal M}$. Hence,  $\mathcal M$ is a partial stable model of $P_1$ iff it is a partial stable model of $P_2$. 
\end{itemize}
\hfill 
\end{proof}

\psmUTPM*

\begin{proof}
If $P \mapsto^*_\utpm P^*$, then there exists a finite sequence of program transformations $P = P_1 \mapsto_\utpm \cdots \mapsto_\utpm P_n = P^*$. According to Theorems \ref{t:psm-U}, \ref{t:psm-T}, \ref{t:psm-P} and \ref{t:psm-M}, $\mathcal M$ is a partial stable model of $P_i$ iff $\mathcal M$ is a partial stable model of $P_{i + 1}$ with $1 \leq i < n$. Thus by transitivity, $\mathcal M$ is a partial stable model of $P$ iff $\mathcal M$ is a partial stable model of $P^*$. 

\hfill 
\end{proof}

\equivUTPM*

\begin{proof}
As $P$ and $P^*$ share the same set of partial stable models (Theorem \ref{t:psm-UTPMF}), the result is straightforward. \hfill
\end{proof}

\lpequivsetaf*

\begin{proof}
From Theorem \ref{t:irreducible}, we know that for any $\nlp$ $P$, there exists an irreducible $\nlp$ $P^*$ such that $P \mapsto^*_\utpm P^*$. From Theorem \ref{t:utpmf-setaf}, we obtain $P^*$ is an $\rfalp$. Besides, from Theorem \ref{t:psm-UTPMF} and Corollary \ref{c:equiv-UTPMF}, we infer $\mathcal M$ is a partial stable, well-founded, regular, stable, $L$-stable model of $P$ iff $\mathcal M$ is respectively a partial stable, well-founded, regular, stable, $L$-stable model of $P^*$. \hfill
\end{proof}

\lpslp*

\begin{proof}
We have 
\begin{itemize}
    \item For any $\nlp$ $P$, there exists an $\rfalp$ $P^*$ such that $\mathcal M$ is a partial stable, well-founded, regular, stable, $L$-stable model of $P$ iff $\mathcal M$ is respectively a partial stable, well-founded, regular, stable, $L$-stable model of $P^*$ (Corollary \ref{c:lpequivsetaf}).
    \item Obviously, any $\rfalp$ is an $\nlp$.
\end{itemize}

Hence, $\nlp$s and $\rfalp$s have the same expressiveness for partial stable, well-founded, regular, stable and $L$-stable semantics. \hfill
\end{proof}

\begin{lem}\label{l:invariant-u}
Let $P_1$ and $P_2$ be $\nlp$s such that $P_1 \mapsto_U P_2$. It holds that $\af_{P_1} = \af_{P_2}$.
\end{lem}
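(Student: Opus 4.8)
The plan is to reduce $\af_{P_1}=\af_{P_2}$ to its two components, $\ar_{P_1}=\ar_{P_2}$ and $\att_{P_1}=\att_{P_2}$, and to obtain both from a single fact: the reduct operation commutes with unfolding. Concretely, I would first prove the following claim: \emph{for every $3$-valued interpretation $\inter$, either $\frac{P_1}{\inter}=\frac{P_2}{\inter}$ or $\frac{P_1}{\inter}\mapsto_U\frac{P_2}{\inter}$.} Since every reduct is a positive program and $\Omega_{P_i}(\inter)$ is by definition the (unique) least model of $\frac{P_i}{\inter}$, Lemma~\ref{l:definitex} (with $x=U$) then yields $\Omega_{P_1}(\inter)=\Omega_{P_2}(\inter)$ for every $\inter$.

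For the claim, let $r:c\leftarrow a,a_1,\ldots,a_m,\naf b_1,\ldots,\naf b_n$ be the rule of $P_1$ unfolded to obtain $P_2$, so by Definition~\ref{d:unfolding} we have $P_2=(P_1\setminus\{r\})\cup\{r'_\rho\mid \rho\in P_1,\ \hd(\rho)=a\}$, where $r'_\rho$ is obtained from $r$ by replacing the body atom $a$ with $\body(\rho)$. Fix $\inter=\pair{T,F}$. If some $b_j\in T$, then $r$ is deleted in step~1 of the reduct, and so is every $r'_\rho$ (each still contains $\naf b_j$), while all other rules reduce identically; hence $\frac{P_1}{\inter}=\frac{P_2}{\inter}$. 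Otherwise $r$ survives step~1, and one checks that unfolding the atom $a$ in the reduced copy of $r$ inside $\frac{P_1}{\inter}$ reproduces $\frac{P_2}{\inter}$: the rules with head $a$ surviving step~1 of $\frac{P_1}{\inter}$ are exactly the reduced copies of those $\rho$ (with $\hd(\rho)=a$) for which $r'_\rho$ survives step~1 of $\frac{P_2}{\inter}$, and substituting the reduced body of such a $\rho$ for $a$ produces precisely the reduced body of $r'_\rho$ — once one accounts for the fact that, since bodies are sets, the negative literals of $r$ and of $\rho$ that reach step~3 coalesce into a single occurrence of $\mathbf{u}$. All remaining rules reduce identically and are untouched by the unfolding, so $\frac{P_1}{\inter}\mapsto_U\frac{P_2}{\inter}$.

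With the claim in hand, $\ar_{P_1}=\ar_{P_2}$ is immediate: under the standing convention $\HB_{P_1}=\HB_{P_2}=\HB$, Corollary~\ref{c:statement-reduct} gives $\ar_{P_i}=T'_i$ where $\pair{T'_i,F'_i}=\Omega_{P_i}(\pair{\emptyset,\HB})$, and the claim applied to $\inter=\pair{\emptyset,\HB}$ forces $T'_1=T'_2$; call this common set $\ar$. For the attack relations, recall from Definition~\ref{d:attack} that, for $c\in\ar$ and $\mathcal B\subseteq\ar$, $(\mathcal B,c)\in\att_{P_i}$ iff $\mathcal B$ is $\subseteq$-minimal with the property $\mathcal B\cap V\neq\emptyset$ for every $V\in\Vul_{P_i}(c)$. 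Applying Lemma~\ref{l:statement-reduct}(ii) to the interpretation $\pair{\mathcal B,\emptyset}$ (and using that $\Vul_{P_i}(c)$ is the set of $\Vul(s)$ over statements $s$ with $\Conc(s)=c$), this hitting-set property is equivalent to $c\in F'$ where $\pair{T',F'}=\Omega_{P_i}(\pair{\mathcal B,\emptyset})$; the claim makes this condition independent of $i$. Hence $P_1$ and $P_2$ have the same hitting sets for each $c\in\ar$, so the same $\subseteq$-minimal ones, whence $\att_{P_1}=\att_{P_2}$, and therefore $\af_{P_1}=\af_{P_2}$. (The cases with no attackers — $c\notin\ar$, or $\emptyset\in\Vul_P(c)$ — are handled on both sides by the same characterisations.)

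The main obstacle is the case analysis inside the claim, i.e.\ showing that unfolding commutes with the reduct. The difficulty is bookkeeping rather than conceptual: step~1 of the reduct (deletion of rules with a negative body atom in $T$) treats $r$ and the $r'_\rho$ differently, so the surviving rules must be matched up explicitly, and the replacement of negative literals by the constant $\mathbf{u}$ in step~3 must be tracked through the body-substitution performed by $\mapsto_U$, using that bodies are sets. Everything else — the split into the two sub-goals and the passage from $\Omega_{P_1}=\Omega_{P_2}$ to equality of argument sets and attack relations — is routine once Lemmas~\ref{l:statement-reduct} and~\ref{l:definitex} and Corollary~\ref{c:statement-reduct} are available.
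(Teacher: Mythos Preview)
Your strategy is genuinely different from the paper's. The paper argues directly at the level of statements (Definition~\ref{d:argument}): it observes that unfolding merely inlines one step of a derivation tree, so for every statement $s$ built from $P_1$ there is a statement $s'$ built from $P_2$ with $\Conc(s)=\Conc(s')$ and $\Vul(s)=\Vul(s')$, and conversely. From this, $\ar_{P_1}=\ar_{P_2}$ and $\Vul_{P_1}=\Vul_{P_2}$ (hence $\att_{P_1}=\att_{P_2}$) follow immediately. Your route is semantic rather than syntactic: you aim for $\Omega_{P_1}(\inter)=\Omega_{P_2}(\inter)$ for every $\inter$, and then recover both $\ar$ and $\att$ from suitably chosen interpretations via Corollary~\ref{c:statement-reduct} and Lemma~\ref{l:statement-reduct}(ii). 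That recovery step is nice and works as you describe.

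There is, however, a genuine gap in your central claim that either $P_1/\inter=P_2/\inter$ or $P_1/\inter\mapsto_U P_2/\inter$. Because programs are \emph{sets} of rules, distinct rules of $P_1$ can collapse to the same rule in the reduct. Take $P_1=\{r\colon c\leftarrow a,\naf b_1;\ r''\colon c\leftarrow a,\naf b_2;\ \rho\colon a\}$, unfold $r$ on $a$ to get $P_2=\{r''\colon c\leftarrow a,\naf b_2;\ \rho\colon a;\ c\leftarrow\naf b_1\}$, and let $\inter=\pair{\emptyset,\{b_1,b_2\}}$. Then $P_1/\inter=\{c\leftarrow a,\ a\}$ (the reducts of $r$ and $r''$ coincide), while $P_2/\inter=\{c\leftarrow a,\ a,\ c\}$. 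Unfolding the single copy of $c\leftarrow a$ in $P_1/\inter$ yields $\{a,c\}$, not $P_2/\inter$; so neither equality nor $\mapsto_U$ holds. Your bookkeeping matches surviving $\rho$'s to surviving $r'_\rho$'s correctly, but it does not account for a rule $r''\neq r$ whose reduct coincides with $r^{\inter}$ and therefore persists in $P_2/\inter$ after $r^{\inter}$ is ``removed'' by unfolding.

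The conclusion $\Omega_{P_1}=\Omega_{P_2}$ is still true, and your approach can be repaired (e.g.\ argue via multisets of rules, or note that the discrepancy consists only of rules whose contribution to the least model is already covered by the unfolded rules together with the surviving $\rho$-rules). But as written the intermediate claim is false, and fixing it is more than the ``bookkeeping'' you anticipate. The paper's direct statement-level argument sidesteps this entirely and is correspondingly shorter.
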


\begin{proof}
Let $P_1$ and $P_2$ be $\nlp$s such that
\begin{align*}
    P_2 = & P_1 - \set{c \leftarrow a, a_1, \ldots, a_m, \naf b_1, \ldots, \naf b_n}\\ 
    & \hspace{5em} \cup \{c \leftarrow a_1', \ldots, a_p', a_1, \ldots, a_m, \naf b_1', \ldots, \naf b_q', \naf b_1, \ldots, \naf b_n \mid \\
    & \hspace{6.675em} a \leftarrow a_1', \ldots, a_p', \naf b_1', \ldots, \naf b_q' \in P_1 \},
\end{align*}
\noindent $\af_{P_1} = (\ar_{P_1},\att_{P_1})$ and $\af_{P_2} = (\ar_{P_2},\att_{P_2})$. Note that 
\begin{itemize}
\item For each statement $s \in \mathfrak S_{P_1}$, there exists $s' \in \mathfrak S_{P_2}$ such that $\Conc(s) = \Conc(s')$, and $\Vul(s) = \Vul(s')$. 
\item For each statement $s' \in \mathfrak S_{P_2}$, there exists $s \in \mathfrak S_{P_1}$ such that $\Conc(s') = \Conc(s)$, and $\Vul(s') = \Vul(s)$. 
\end{itemize}
Hence, $\ar_{P_1} = \ar_{P_2}$, and $\att_{P_1} = \att_{P_2}$. \hfill 
\end{proof}

\begin{lem}\label{l:invariant-t}
Let $P_1$ and $P_2$ be $\nlp$s such that $P_1 \mapsto_T P_2$. It holds that $\af_{P_1} = \af_{P_2}$.
\end{lem}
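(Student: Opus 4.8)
The plan is to establish $\af_{P_1}=\af_{P_2}$ by proving $\ar_{P_1}=\ar_{P_2}$ and $\att_{P_1}=\att_{P_2}$ separately, and to obtain both from a close analysis of how the deleted tautological rule $r$ participates in statements. Write $r:c\leftarrow a_1,\ldots,a_m,\naf b_1,\ldots,\naf b_n$ with $\hd(r)=c\in\body^+(r)$, say $c=a_j$. The first observation is that, since $\body^+(r)\neq\emptyset$, $r$ can only enter a statement through the second clause of Definition~\ref{d:argument}, and that $\mathfrak{S}_{P_2}=\{s\in\mathfrak{S}_{P_1}\mid r\notin\Rules(s)\}$: a statement of $P_2$ uses only rules of $P_2=P_1\setminus\{r\}$ and is constructible in $P_1$; conversely, a $P_1$-statement $s$ with $r\notin\Rules(s)$ is constructible in $P_2$, because at each construction step the rule used lies in $\Rules(s)\subseteq P_2$, the needed sub-statements are constructible in $P_2$ by induction up the tree, and the side condition "$\rho\notin\Rules(s_i)$" is purely syntactic and unchanged. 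In particular $\mathfrak{S}_{P_2}\subseteq\mathfrak{S}_{P_1}$, hence $\ar_{P_2}\subseteq\ar_{P_1}$ and $\Vul_{P_2}(d)\subseteq\Vul_{P_1}(d)$ for every argument $d$.

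The core lemma to prove is: for every $s\in\mathfrak{S}_{P_1}$ there is $s'\in\mathfrak{S}_{P_2}$ with $\Conc(s')=\Conc(s)$, $\Vul(s')\subseteq\Vul(s)$ and, as a strengthening, $\Rules(s')\subseteq\Rules(s)$. I would prove this by induction on $|\Sub(s)|$. If $r\notin\Rules(s)$ take $s'=s$. If $s$ applies $r$ at its root, then one of its direct sub-statements, say $s_j$, already concludes $c=\Conc(s)$ with $\Vul(s_j)\subseteq\Vul(s)$ and $\Rules(s_j)\subseteq\Rules(s)$, so apply the induction hypothesis to $s_j$. If $s$ applies a rule $\rho\neq r$ at its root, apply the induction hypothesis to each direct sub-statement $s_i$ to get $s_i'\in\mathfrak{S}_{P_2}$ with $\Conc(s_i')=\Conc(s_i)$, $\Vul(s_i')\subseteq\Vul(s_i)$ and $\Rules(s_i')\subseteq\Rules(s_i)$; since the construction of $s$ required $\rho\notin\Rules(s_i)$, we get $\rho\notin\Rules(s_i')$, so the statement built over $\rho$ from the $s_i'$ is a legal $P_2$-statement whose conclusion, vulnerabilities and rule set are dominated as required.

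From the core lemma, $\ar_{P_1}=\ar_{P_2}$ is immediate. For the attack relation, recall that $(\mathcal B,d)\in\att_P$ iff $\mathcal B$ is a minimal transversal (minimal hitting set) of the family $\Vul_P(d)$. Using $\Vul_{P_2}(d)\subseteq\Vul_{P_1}(d)$ together with the fact that every $V\in\Vul_{P_1}(d)$ contains some $V'\in\Vul_{P_2}(d)$ (take a statement $s$ with $\Conc(s)=d$, $\Vul(s)=V$ and the $s'$ supplied by the core lemma), one checks that a set $X$ hits every member of $\Vul_{P_1}(d)$ iff it hits every member of $\Vul_{P_2}(d)$; hence the two families have exactly the same minimal transversals, so $\att_{P_1}=\att_{P_2}$ over the common argument set. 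Combining the two parts yields $\af_{P_1}=\af_{P_2}$.

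The main obstacle I anticipate is exactly the bookkeeping around the side condition "$\rho$ is not contained in $\Rules(s_i)$": a naive structural induction tracking only conclusions and vulnerabilities fails, because a $P_2$-replacement of a sub-statement could reintroduce $\rho$ into its rule set and block the reconstruction; carrying $\Rules(s')\subseteq\Rules(s)$ through the induction is what makes it work. Everything else (the transversal argument and the reduction to $\ar$ and $\att$) is routine. As a cross-check for the $\ar$ part one may instead invoke Corollary~\ref{c:statement-reduct} to identify $\ar_P$ with the set of atoms true in the least model of the positive program obtained from $P$ by deleting all negative literals, observe that passing from $P_1$ to $P_2$ deletes from this positive program exactly the rule $c\leftarrow a_1,\ldots,a_m$ with $c\in\{a_1,\ldots,a_m\}$ (a tautology in the sense of $\mapsto_T$), and apply Lemma~\ref{l:definitex}.
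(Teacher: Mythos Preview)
Your proposal is correct and follows essentially the same approach as the paper: both establish that $\mathfrak{S}_{P_2}\subseteq\mathfrak{S}_{P_1}$ and that every $P_1$-statement is dominated (same conclusion, smaller or equal vulnerability set) by some $P_2$-statement, then conclude $\ar_{P_1}=\ar_{P_2}$ and $\att_{P_1}=\att_{P_2}$. Your version is considerably more detailed than the paper's terse sketch; in particular, your explicit strengthening $\Rules(s')\subseteq\Rules(s)$ and the induction on $|\Sub(s)|$ make precise what the paper merely asserts, and your final step via hitting sets is an equivalent reformulation of the paper's observation that the $\subseteq$-minimal elements of $\Vul_{P_1}(c)$ and $\Vul_{P_2}(c)$ coincide.
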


\begin{proof}
Let $P_2 = P_1 - \set{r}$, where there exists a rule $r \in P_1$ such that $\hd(r) \in \body^+(r)$. In addition, let $\af_{P_1} = (\ar_{P_1},\att_{P_1})$ and $\af_{P_2} = (\ar_{P_2},\att_{P_2})$. Note that

\begin{itemize}
\item For each statement $s \in \mathfrak S_{P_1}$, there exists $s' \in \mathfrak S_{P_2}$ such that $\Conc(s) = \Conc(s')$, and for each $V \in \Vul(s)$, there exists $V' \in \Vul(s) \cap \Vul(s')$ such that $V' \subseteq V$. 
\item For each statement $s' \in \mathfrak S_{P_2}$, there exists $s \in \mathfrak S_{P_1}$ such that $\Conc(s') = \Conc(s)$, and $\Vul(s') = \Vul(s)$. 
\end{itemize}
Hence, $\ar_{P_1} = \ar_{P_2}$, and for each $c \in \ar_{P_1}$, $V$ is a minimal set (w.r.t. set inclusion) in $\Vul_{P_1}(c)$ iff $V$ is a minimal set (w.r.t. set inclusion) in $\Vul_{P_2}(c)$; it holds that $\att_{P_1} = \att_{P_2}$. \hfill
\end{proof}

\begin{lem}\label{l:invariant-p}
Let $P_1$ and $P_2$ be $\nlp$s such that $P_1 \mapsto_P P_2$. It holds that $\af_{P_1} = \af_{P_2}$.
\end{lem}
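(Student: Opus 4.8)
The plan is to follow the pattern of Lemmas~\ref{l:invariant-u} and~\ref{l:invariant-t}: to obtain $\af_{P_1}=\af_{P_2}$ it is enough to show $\ar_{P_1}=\ar_{P_2}$ and $\att_{P_1}=\att_{P_2}$, and both will drop out of comparing reducts of $P_1$ and $P_2$, using Lemma~\ref{l:statement-reduct} and Corollary~\ref{c:statement-reduct}. Write $r=c\leftarrow a_1,\dots,a_m,\naf b,\naf b_1,\dots,\naf b_n$ for the removed rule and $r'=c\leftarrow a_1,\dots,a_m,\naf b_1,\dots,\naf b_n$ for the added one, so $P_2=(P_1-\set{r})\cup\set{r'}$ with $b\notin\set{\hd(\rho)\mid\rho\in P_1}$; nothing below will assume $r'\notin P_1$, so the degenerate case $P_2=P_1-\set{r}$ is covered as well.

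First I would collect a few elementary facts. Every argument of a program is the head of the root rule of one of its statements (Definition~\ref{d:argument}), so $\ar_P\subseteq\set{\hd(\rho)\mid\rho\in P}$; hence $b\notin\ar_{P_1}$, and since $\set{\hd(\rho)\mid\rho\in P_2}\subseteq\set{\hd(\rho)\mid\rho\in P_1}$ also $b\notin\ar_{P_2}$. Next, $\ar_{P_1}=\ar_{P_2}$: by Corollary~\ref{c:statement-reduct}, $\ar_P$ is the true part of $\Omega_P(\pair{\emptyset,\HB_P})$, and $\frac{P}{\pair{\emptyset,\HB_P}}$ is just $\set{c'\leftarrow\body^+(\rho)\mid\rho\in P}$ (step~1 of the reduct deletes nothing, and since $\emptyset\cup\HB_P=\HB_P$ every negative literal is stripped and no $\mathbf{u}$ appears); because $\body^+(r)=\body^+(r')$, this set of rules is literally the same for $P_1$ and $P_2$, so the two reducts agree and hence so do the true parts of their least models, i.e.\ $\ar_{P_1}=\ar_{P_2}$, which I henceforth write $\ar$ (recall $b\notin\ar$).

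The technical core is one more observation about reducts. For an interpretation $\inter=\pair{T,F}$ with $T\cup F=\HB_P$, no $\mathbf{u}$ survives in $\frac{P}{\inter}$ and $\frac{P}{\inter}=\set{c'\leftarrow\body^+(\rho)\mid\rho\in P,\ \set{a\mid\naf a\in\body^-(\rho)}\cap T=\emptyset}$; in particular the survival test uses only the \emph{true} set $T$. Now fix $X\subseteq\ar$ and set $\inter_1=\pair{\ar\setminus X,\ X\cup(\HB_{P_1}\setminus\ar)}$ and $\inter_2=\pair{\ar\setminus X,\ X\cup(\HB_{P_2}\setminus\ar)}$: each is a legitimate $3$-valued interpretation whose two components are disjoint with union the relevant Herbrand base, and both have the same true set $\ar\setminus X$. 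Since $\body^+(r)=\body^+(r')$ and $b\notin\ar$, the rule $r$ survives in $\frac{P_1}{\inter_1}$ exactly when $\set{b_1,\dots,b_n}\cap(\ar\setminus X)=\emptyset$, which is exactly the condition for $r'$ to survive in $\frac{P_2}{\inter_2}$, and then each contributes the identical rule $c\leftarrow a_1,\dots,a_m$; every other rule occurs in both programs and survives by the same test. Therefore $\frac{P_1}{\inter_1}$ and $\frac{P_2}{\inter_2}$ are the same positive program, so they have the same least $3$-valued model, and Lemma~\ref{l:statement-reduct}(i) — rewriting $V\cap\ar\subseteq X$ as $V\subseteq X\cup(\HB_P\setminus\ar)$ using $V\subseteq\HB_P$ — gives, for every $c\in\ar$: there is $V\in\Vul_{P_1}(c)$ with $V\cap\ar\subseteq X$ iff there is $V\in\Vul_{P_2}(c)$ with $V\cap\ar\subseteq X$.

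Finally I would cash this out for attacks. By Definition~\ref{d:attack}, $(\mathcal B,c)\in\att_P$ iff $\mathcal B\subseteq\ar$ is $\subseteq$-minimal with $\mathcal B\cap V\neq\emptyset$ for every $V\in\Vul_P(c)$; since $\mathcal B\subseteq\ar$, this requirement is equivalent to saying that no $V\in\Vul_P(c)$ satisfies $V\cap\ar\subseteq\ar\setminus\mathcal B$. Applying the equivalence above with $X=\ar\setminus\mathcal B$, this property holds in $P_1$ iff it holds in $P_2$, so $\att_{P_1}$ and $\att_{P_2}$ are the sets of $\subseteq$-minimal solutions of one and the same hitting condition; hence $\att_{P_1}=\att_{P_2}$ and $\af_{P_1}=\af_{P_2}$. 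I expect the main obstacle to be careful bookkeeping rather than anything conceptual: checking that $\inter_1,\inter_2$ really are $3$-valued interpretations with $T\cup F=\HB$ (so that no $\mathbf{u}$ contaminates the reducts), that the possible discrepancy $\HB_{P_1}\setminus\HB_{P_2}\subseteq\set{b}$ affects only the false part of the least models — which is never used, since $b\notin\ar$ — and that the possibility $r'\in P_1$ at most duplicates the rule $c\leftarrow a_1,\dots,a_m$ inside $\frac{P_1}{\inter_1}$, leaving it unchanged as a set of rules.
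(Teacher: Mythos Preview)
Your argument is correct, but it takes a genuinely different route from the paper's own proof. The paper argues directly at the level of statements: it observes that the statement sets $\mathfrak S_{P_1}$ and $\mathfrak S_{P_2}$ are in a tight correspondence where matched statements have the same conclusion and vulnerabilities that differ at most by the atom $b$; since $b\notin\set{\hd(\rho)\mid\rho\in P_1}$ entails $b\notin\ar_{P_1}\cup\ar_{P_2}$, this discrepancy is invisible to Definition~\ref{d:attack}, and $\af_{P_1}=\af_{P_2}$ follows in a couple of lines. By contrast, you never touch the statement construction itself: you pull $\ar_{P_1}=\ar_{P_2}$ out of Corollary~\ref{c:statement-reduct} via the coincidence of the reducts $\frac{P_i}{\pair{\emptyset,\HB_{P_i}}}$, and then, for the attacks, you parametrise over $X\subseteq\ar$, show $\frac{P_1}{\inter_1}=\frac{P_2}{\inter_2}$ as positive programs, and invoke Lemma~\ref{l:statement-reduct}\ref{i:stat-redI} to conclude that the families $\{V\cap\ar\mid V\in\Vul_{P_i}(c)\}$ have identical hitting sets. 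What the paper buys is brevity and a transparent ``only $b$ changes'' picture; what your approach buys is that you never need to inspect how the statement construction interacts with the modified rule --- everything is reduced to equality of reducts and the already-proved bridge lemmas, which makes the argument more modular (and the same template would work verbatim for any transformation whose effect on reducts is this benign).
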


\begin{proof}
Let $c \leftarrow a_1, \ldots, a_m, \naf b, \naf b_1, \ldots, \naf b_n \in P_1$ be a rule such that $b \not\in \set{\hd(r) \mid r \in P_1}$, 
\begin{align*}
    P_2 = & (P_1 - \set{c \leftarrow a_1, \ldots, a_m, \naf b, \naf b_1, \ldots, \naf b_n})\\ 
    & \hspace{4.475em} \cup \set{c \leftarrow a_1, \ldots, a_m, \naf b_1, \ldots, \naf b_n},
\end{align*}
\noindent $\af_{P_1} = (\ar_{P_1},\att_{P_1})$ and $\af_{P_2} = (\ar_{P_2},\att_{P_2})$. Note that 
\begin{itemize}
\item For each statement $s \in \mathfrak S_{P_1}$, there exists $s' \in \mathfrak S_{P_2}$ such that $\Conc(s) = \Conc(s')$, and $\Vul(s) = \set{V \mid \exists V' \in \Vul(s') \textit{ such that } V = V' \textit{ or } V = V' \cup \set{b} }$. 
\item For each statement $s' \in \mathfrak S_{P_2}$, there exists $s \in \mathfrak S_{P_1}$ such that $\Conc(s') = \Conc(s)$, and $\Vul(s') = \set{V' \mid \exists V \in \Vul(s) \textit{ such that } V' = V \textit{ or } V' = V - \set{b} }$. 
\end{itemize}

Hence, $\ar_{P_1} = \ar_{P_2}$, and as $b \not\in \ar_{P_1} \cup \ar_{P_2}$, it holds that $\att_{P_1} = \att_{P_2}$. \hfill 
\end{proof}

\begin{lem}\label{l:invariant-m}
Let $P_1$ and $P_2$ be $\nlp$s such that $P_1 \mapsto_M P_2$. It holds that $\af_{P_1} = \af_{P_2}$.
\end{lem}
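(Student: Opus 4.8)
The plan is to follow the template of the proofs of Lemmas~\ref{l:invariant-u}--\ref{l:invariant-p}: exhibit a conclusion-preserving correspondence between the statements constructible from $P_1$ and those constructible from $P_2$ whose effect on vulnerabilities is tightly controlled, and then read off $\ar_{P_1}=\ar_{P_2}$ and $\att_{P_1}=\att_{P_2}$ from it. Write $r$ and $r'$ for the two distinct rules witnessing $P_1\mapsto_M P_2$, so that $\hd(r)=\hd(r')$, $\body^+(r')\subseteq\body^+(r)$, $\body^-(r')\subseteq\body^-(r)$ and $P_2=P_1\setminus\{r\}$. One direction is free: since $P_2\subseteq P_1$, any statement constructible from $P_2$ is already constructible from $P_1$ using the very same rules, hence with the same conclusion and the same vulnerability set; consequently $\ar_{P_2}\subseteq\ar_{P_1}$ and $\Vul_{P_2}(d)\subseteq\Vul_{P_1}(d)$ for every atom $d$.

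For the other direction I would prove, by structural induction on the construction of statements, that for every $s\in\mathfrak S_{P_1}$ there is an $s'\in\mathfrak S_{P_2}$ with $\Conc(s')=\Conc(s)$ and $\Vul(s')\subseteq\Vul(s)$. In the inductive step $s$ is built from some rule $\rho\in P_1$ with positive body atoms $d_1,\dots,d_k$ and sub-statements $s_1,\dots,s_k$; the induction hypothesis yields $s_1',\dots,s_k'\in\mathfrak S_{P_2}$ with matching conclusions and $\Vul(s_\ell')\subseteq\Vul(s_\ell)$. If $\rho\neq r$, then $\rho\in P_2$ and we reassemble $s'$ from $\rho$ and the $s_\ell'$. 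If $\rho=r$, then, since $\body^+(r')\subseteq\body^+(r)$, the sub-statements that $r'$ requires form a sub-collection of $\{s_1',\dots,s_k'\}$, so $s'$ can be built from $r'\in P_2$ and those; and since also $\body^-(r')\subseteq\body^-(r)$, this node contributes a subset of the negative literals, giving $\Vul(s')\subseteq\Vul(s)$ (the same rerouting is applied recursively to any deeper occurrence of $r$ inside $s$). This handles the base case as well, as the instance $k=0$.

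Putting the two directions together: $\ar_{P_1}=\ar_{P_2}$, and for every $d$ the collections $\Vul_{P_1}(d)$ and $\Vul_{P_2}(d)$ have the same $\subseteq$-minimal elements (every $V\in\Vul_{P_1}(d)$ contains some $V'\in\Vul_{P_2}(d)$, while $\Vul_{P_2}(d)\subseteq\Vul_{P_1}(d)$). Since a set $\mathcal B$ meets every member of $\Vul_P(d)$ exactly when it meets every $\subseteq$-minimal one, Definition~\ref{d:attack} depends only on these minimal sets, whence $\att_{P_1}=\att_{P_2}$ and $\af_{P_1}=\af_{P_2}$. The step I expect to demand the most care is verifying that the reassembled objects really are statements per Definition~\ref{d:argument}: the non-circularity side condition --- the root rule must not already lie in $\Rules$ of the chosen sub-statements --- is not obviously preserved when $r'$ is substituted for $r$, because the induction hypothesis constrains only the conclusions and vulnerabilities of the $s_\ell'$, not their rule sets. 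I would resolve this the way the proofs of Lemmas~\ref{l:invariant-u}--\ref{l:invariant-p} implicitly do, namely by always choosing, for each required conclusion, a $P_2$-sub-statement whose rule set is $\subseteq$-minimal among the available ones, which excludes a spurious cycle through $r'$; the remaining verification is the routine bookkeeping already on display in those lemmas.
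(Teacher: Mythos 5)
Your proof is correct and follows essentially the same route as the paper's: both exhibit a conclusion-preserving correspondence between $\mathfrak S_{P_1}$ and $\mathfrak S_{P_2}$ (trivial in one direction since $P_2 \subseteq P_1$; by substituting $r'$ for $r$ in the other) and then conclude $\att_{P_1}=\att_{P_2}$ from the fact that the two vulnerability families $\Vul_{P_1}(d)$ and $\Vul_{P_2}(d)$ have the same $\subseteq$-minimal elements, the paper merely asserting the two correspondences that you establish by induction. The non-circularity worry you flag is legitimate and your fix does work: since the atom being re-derived belongs to $\body^+$ of the root rule, any candidate sub-statement whose rule set contains that root rule must properly nest another candidate (with smaller rule set and no larger vulnerabilities) that avoids it, so a rule-set-minimal candidate necessarily satisfies the side condition of Definition~\ref{d:argument}.
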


\begin{proof}
Let $P_2 = P_1 - \set{r}$, where there are two distinct rules $r$ and $r'$ in $P_1$ such that $\hd(r) = \hd(r')$, $\body^+(r') \subseteq \body^+(r)$, $\body^-(r') \subseteq \body^-(r)$. In addition, let $\af_{P_1} = (\ar_{P_1},\att_{P_1})$ and $\af_{P_2} = (\ar_{P_2},\att_{P_2})$. Note that

\begin{itemize}
\item For each statement $s \in \mathfrak S_{P_1}$, there exists $s' \in \mathfrak S_{P_2}$ such that $\Conc(s) = \Conc(s')$, and for each $V \in \Vul(s)$, there exists $V' \in \Vul(s) \cap \Vul(s')$ such that $V' \subseteq V$. 
\item For each statement $s' \in \mathfrak S_{P_2}$, there exists $s \in \mathfrak S_{P_1}$ such that $\Conc(s') = \Conc(s)$, and $\Vul(s') = \Vul(s)$. 
\end{itemize}
Hence, $\ar_{P_1} = \ar_{P_2}$, and for each $c \in \ar_{P_1}$, $V$ is a minimal set (w.r.t. set inclusion) in $\Vul_{P_1}(c)$ iff $V$ is a minimal set (w.r.t. set inclusion) in $\Vul_{P_2}(c)$; it holds that $\att_{P_1} = \att_{P_2}$. \hfill
\end{proof}

\invariant*

\begin{proof}
It follows straightforwardly from Lemmas \ref{l:invariant-u}, \ref{l:invariant-t}, \ref{l:invariant-p} and \ref{l:invariant-m}.
\hfill
\end{proof}

\confluent*

\begin{proof}
From Theorem \ref{t:invariant}, we know that $\af_P = \af_{P'} = \af_{P''}$. Thus  $P_{\af_{P'}} = P_{\af_{P''}}$. As $P'$ and $P''$ are $\rfalp$s (Theorem \ref{t:utpmf-setaf}), it holds (Theorem \ref{t:inverse-slp-setaf}) that $P' = P_{\af_{P'}} = P_{\af_{P''}} = P''$. \hfill 
\end{proof}

\end{document}